\newcommand{\markupdraft}[2]{
    \ifthenelse{\equal{#1}{display}}{#2}{}
    \ifthenelse{\equal{#1}{color}}{\color{#2}}{}
}
\newcommand{\newcolored}[3][]{{\markupdraft{color}{#2}#3}
    \ifthenelse{\equal{#1}{}}{}{\markupdraft{display}{{\color{yellow!70!black}[#1]}}}}
\newcommand{\del}[2][]{{\markupdraft{display}{{\color{orange}[removed: "#2"[#1]]}}}} 
\renewcommand{\markupdraft}[2]{}  
\newcommand{\NDY}[1]{\markupdraft{display}{{\color[rgb]{0.0,0.8,0.4}[NdYann: #1]}}}
\newcommand{\NDL}[1]{\markupdraft{display}{{\color[rgb]{0.5,0.2,0.2}[NdLudo: #1]}}}
\newcommand{\noun}[1]{\textsc{#1}}
\providecommand{\tabularnewline}{\\}
  \theoremstyle{plain}
  \newtheorem{thm}{\protect\theoremname}
  \newtheorem{defi}[thm]{\protect\definitionname}
  \newtheorem{prop}[thm]{\protect\propositionname}
  \newtheorem{rem}[thm]{\protect\remarkname}
\providecommand{\propositionname}{Proposition}
\providecommand{\theoremname}{Theorem}
\providecommand{\corollaryname}{Corollary}
\providecommand{\definitionname}{Definition}
\providecommand{\remarkname}{Remark}
\DeclareMathOperator*{\argmax}{arg\,max}
\DeclareMathOperator*{\argmin}{arg\,min}
\newcommand{\x}{\mathbf{x}}
\newcommand{\h}{\mathbf{h}}
\newcommand{\D}{\mathcal{D}}
\newcommand{\E}{\mathbb{E}}
\newcommand{\deq}{\mathrel{\mathop:}=}
\newcommand{\BLM}[2]{\hat{Q}_{#1,#2}}
\newcommand{\U}{\mathcal{U}}
\newcommand{\UIM}{\mathcal{U}^\mathrm{model}}
\newcommand{\cond}{^{\mathrm{cond}}}
\newcommand{\KL}[2]{D_\mathrm{KL}(#1 \!\parallel\! #2)} 
\DeclareMathOperator{\sigm}{sigm}
\renewcommand{\geq}{\geqslant}
\renewcommand{\leq}{\leqslant}
\title{Layer-wise training of deep generative models}
\author{Ludovic Arnold, Yann Ollivier}
\date{}
\begin{document}

\maketitle

\begin{abstract}
When using deep, multi-layered architectures to build generative models of data, it is difficult to train all layers at
once. We propose a layer-wise training procedure admitting a performance
guarantee compared to the global optimum. It is based on an optimistic
proxy of future performance, the \emph{best latent marginal}. We
interpret auto-encoders in this setting as generative models, by showing that they train a lower
bound of this criterion. We test the new learning procedure against
a
state of the art method (stacked RBMs),
and find it to improve performance. Both
theory and experiments highlight the importance, when training deep
architectures, of using an inference
model (from data to hidden variables) richer than the generative model
(from hidden variables to data).
\end{abstract}

\NDY{Did you recheck the proofs I wrote? Mistakes are easy...}
\NDL{I'll give you my proof comments bellow with the tag "proofchecking"}

\section*{Introduction}

Deep architectures, such as multiple-layer neural
networks, have recently been the object of a lot of interest and
have been shown to provide state-of-the-art performance on many problems \cite{Bengio2012}. A key aspect of deep learning is to help in learning
better representations of the data, thus reducing the need for
hand-crafted features, a very time-consuming process requiring expert knowledge.

Due to the difficulty of training a whole deep network at once, a so-called \emph{layer-wise} procedure is used as an approximation \cite{Hinton2006,Bengio2007a}. However,
a long-standing issue is the justification of this
layer-wise training: although
the method has shown its merits in practice, theoretical justifications
fall somewhat short of expectations. A
frequently cited result
\cite{Hinton2006} is a proof that adding layers increases a
so-called variational
lower bound on the log-likelihood of the model, and therefore that adding
layers \emph{can} improve performance.

We reflect on the validity of layer-wise training
procedures, and discuss in what way and with what assumptions they can be
construed as being equivalent to the non-layer-wise, that is,
whole-network, training. This leads us to a new approach for training
deep generative models, 
using a new criterion for optimizing each layer starting from the bottom
and for transferring the problem upwards to the next layer. Under the right conditions, this new
layer-wise approach is equivalent to optimizing the log-likelihood of
the full deep generative model (Theorem~\ref{thm:main}).

As a first step, in Section~\ref{sec:dgm} we re-introduce the general form of deep generative
models,
and derive the
gradient of the log-likelihood for deep models. This gradient is seldom ever
considered because it is considered intractable and requires sampling
from
complex distributions. Hence the need for a simpler, layer-wise training
procedure.

We then show (Section~\ref{sec:mainthm}) how an
optimistic criterion, the \emph{BLM upper bound}, can be used to train
\emph{optimal} lower layers provided subsequent training of upper layers is
successful, and discuss what criterion to use to
transfer the learning problem to the upper layers.

This leads to a discussion of the relation of this procedure with
stacked restricted Boltzmann machines (SRBMs)
and auto-encoders (Sections~\ref{sec:SRBMs}
and~\ref{sec:AA}),
in which a new justification is found for auto-encoders as optimizing the
lower part of a deep generative model.

In Section~\ref{sec:whyqd} we spell out the theoretical advantages of
using a model for the hidden variable $\h$ having the form
$Q(\h)=q(\h|\x)P_\mathrm{data}(\x)$ when looking for hidden-variable generative models
of the data $\x$, a scheme close to that of auto-encoders.

Finally, we discuss new applications and perform experiments
(Section~\ref{sec:exp}) to validate the approach and compare it to
state-of-the-art methods, on two new deep datasets, one synthetic and one
real. In particular we introduce auto-encoders with rich inference
(AERIes) which are auto-encoders modified according to this framework. 

Indeed both theory and experiments strongly suggest that,
when using stacked auto-associators or similar deep architectures, the
inference part (from data to latent variables) should use a much
richer model than the generative part (from latent variables to data), in
fact, as rich as possible.
Using richer inference helps to find much better parameters for the
\emph{same} given generative model.

\section{Deep generative models\label{sec:dgm}}

Let us go back to the basic formulation of training a deep
architecture as a traditional learning problem: optimizing the parameters
of the whole architecture seen as a probabilistic generative model of the
data.

\subsection{\label{sub:p-decomposition}Deep models: probability decomposition}

The goal of generative learning is to estimate the parameters
$\theta=(\theta_{1},\dots,\theta_{n})$
of a distribution $P_{\theta}(\mathbf{x})$ in order to approximate
a data distribution $P_{\mathcal{D}}(\mathbf{x})$ on some observed
variable $\mathbf{x}$. 


The recent development of deep architectures \cite{Hinton2006,Bengio2007a} has given importance
to a particular case of \emph{latent variable models} in which the 
distribution of $\x$ can be decomposed as a sum over states of latent
variables $\h$,
\[
P_\theta(\x)=\sum_\h P_{\theta_{1},\dots,\theta_{k}}(\x|\h)
P_{\theta_{k+1},\dots,\theta_{n}}(\mathbf{h})
\]
with separate parameters for the marginal probability of
$\mathbf{h}$ and the conditional probability of $\mathbf{x}$ given
$\mathbf{h}$.
Setting $I=\{1,2,\dots,k\}$ such that $\theta_{I}$
is the set of parameters of $P(\mathbf{x}|\mathbf{h})$ and $J=\{k+1,\dots,n\}$
such that $\theta_{J}$ is the set of parameters of $P(\mathbf{h})$, this
rewrites as
\begin{eqnarray}
P_{\theta}(\mathbf{x}) & = & \sum_{\mathbf{h}}P_{\theta_{I}}(\mathbf{x}|\mathbf{h})P_{\theta_{J}}(\mathbf{h})\label{eq:p-decomposition}
\end{eqnarray}

In deep architectures, the same kind of decomposition is
applied to $\h$ itself recursively, thus defining a layered model with
several hidden layers 
$\mathbf{h}^{(1)}$, $\mathbf{h}^{(2)}$, \dots, $\mathbf{h}^{(k_\mathrm{max})}$,
namely
\begin{eqnarray}
P_{\theta}(\mathbf{x}) & = &
\sum_{\mathbf{h}^{(1)}}P_{\theta_{I_{0}}}(\mathbf{x}|\mathbf{h}^{(1)})P_{\theta_{J_0}}(\mathbf{h}^{(1)})\\
P(\mathbf{h}^{(k)}) & = &
\sum_{\mathbf{h}^{(k+1)}}P_{\theta_{I_{k}}}(\mathbf{h}^{(k)}|\mathbf{h}^{(k+1)})P_{\theta_{J_{k}}}(\mathbf{h}^{(k+1)}),\,\,\,1\leq
k\leq k_\mathrm{max}-1
\label{eq:deepmodel}
\end{eqnarray}

At any one time, 
we will only be interested in one step of this
decomposition. Thus for simplicity, 
we consider that the distribution of interest is on the
observed variable $\mathbf{x}$, with latent variable $\mathbf{h}$. The
results extend to the other layers of the decomposition by renaming
variables. 

In Sections~\ref{sec:SRBMs} and~\ref{sec:AA} we quickly present two
frequently used deep architectures, stacked
RBMs and auto-encoders, within this framework.

\subsection{Data log-likelihood}

The goal of the learning procedure, for a probabilistic generative
model, is generally to maximize the log-likelihood of the data under the
model, namely, to find the value of the parameter
$\theta^{*}=(\theta_{I}^{*},\theta_{J}^{*})$ achieving
\begin{eqnarray}
\theta^{*} & \deq & \argmax_{\theta}\mathbb{E}_{\mathbf{x}\sim
P_{\mathcal{D}}}\left[\log P_{\theta}(\mathbf{x})\right]
\label{eq:objective}
\\
& = & \argmin_{\theta}\KL{P_\D}{P_\theta},
\label{eq:KLobjective}
\end{eqnarray}
where $P_{\mathcal{D}}$ is the empirical data distribution, and $\KL{\cdot}{\cdot}$ is
the Kullback--Leibler divergence. (For simplicity we assume this
optimum is unique.)

An obvious way to tackle this problem would be a gradient ascent over the
full parameter $\theta$. However, this is impractical for deep
architectures (Section~\ref{sec:grad} below).

It would be easier to be able to train deep architectures in a layer-wise
fashion, by first training the parameters $\theta_I$ of the bottom layer,
deriving a new target distribution for the latent variables $\h$, and then
training $\theta_J$ to reproduce this target distribution on $\h$,
recursively over the layers, till one reaches the top layer on which,
hopefully, a simple probabilistic generative model can be used.

Indeed this is often done in practice, except that the objective
\eqref{eq:objective} is replaced with a surrogate objective. For
instance, for architectures made of stacked RBMs, at each level the
likelihood of a single  RBM is maximized, ignoring the fact that it is to
be used as part of a deep architecture, and moreover often using a
further approximation to the likelihood such as contrastive divergence
\cite{Hinton2002}. Under specific conditions (i.e., initializing the upper
layer with an upside-down version of the current RBM), it can be shown
that adding a layer improves a lower bound on performance
\cite{Hinton2006}.

We address in Section~\ref{sec:Layer-wise-deep-learning} the
following questions: Is it possible to compute or estimate the optimal
value of the parameters $\theta^\ast_I$ of the bottom layer, without
training the whole model? Is it possible to compare two values of
$\theta_I$ without training the whole model? The latter would be
particularly convenient
for hyper-parameter selection, as it would allow to compare lower-layer models
before the upper layers are trained, thus significantly reducing the size
of the hyper-parameter search space from exponential to linear in the
number of layers.

We propose a procedure aimed at reaching the global optimum $\theta^\ast$
in a layer-wise fashion, based on an optimistic estimate of
log-likelihood, the \emph{best latent marginal (BLM) upper bound}. We
study its theoretical guarantees in
Section~\ref{sec:Layer-wise-deep-learning}. In Section~\ref{sec:exp} we
make an experimental comparison between stacked RBMs,
auto-encoders modified according to this scheme, and vanilla
auto-encoders, on two simple but deep datasets.

\subsection{Learning by gradient ascent for deep
architectures}\label{sec:grad}

Maximizing the likelihood of
the data distribution $P_{\mathcal{D}}(\mathbf{x})$ under a model,
or equivalently minimizing the KL-divergence
$\KL{P_\D}{P_\theta}$,
is usually done with gradient ascent in the parameter space.

The derivative of the log-likelihood for a deep generative model can be
written as:
\begin{eqnarray}
\frac{\partial\log P_{\mathbf{\theta}}(\mathbf{x})}{\partial\theta}
&=&
\frac{\sum_\h 
\frac{\partial
P_{\theta_{I}}(\mathbf{x}|\mathbf{h})}{\partial\theta}P_{\theta_J}(\h)
+
\sum_\h P_{\theta_I}(\x|\h)\frac{\partial
P_{\theta_J}(\h)}{\partial\theta}
}{P_\theta(\x)}
\\&= & \sum_{\h}\frac{\partial\log P_{\theta_{I}}(\mathbf{x}|\mathbf{h})}{\partial\theta}P_{\theta}(\mathbf{h}|\mathbf{x})
+
\sum_{\h}\frac{\partial\log P_{\theta_{J}}(\mathbf{h})}{\partial\theta}P_{\theta}(\mathbf{h}|\mathbf{x})
\label{eq:globalgradient}
\end{eqnarray}
by rewriting $P_\theta(\h)/P_\theta(\x)=P_\theta(\h|\x)/P_\theta(\x|\h)$.
The derivative w.r.t.\ a given
component $\theta_i$ of $\theta$
simplifies because $\theta_{i}$ is either a parameter of
$P_{\theta_{I}}(\mathbf{x}|\mathbf{h})$ when $i\in I$, or a parameter
of $P_{\theta_{J}}(\mathbf{h})$ when $i\in J$:
\begin{align}
\forall i  \in  I,\qquad
&
\frac{\partial\log
P_{\mathbf{\theta}}(\mathbf{x})}{\partial\theta_{i}}=\sum_{\h}\frac{\partial\log
P_{\theta_{I}}(\mathbf{x}|\mathbf{h})}{\partial\theta_{i}}P_{\theta_I,\theta_J}(\mathbf{h}|\mathbf{x}),\label{eq:dgm-conditional-gradient}
\\
\forall i \in J,\qquad
&
\frac{\partial\log
P_{\mathbf{\theta}}(\mathbf{x})}{\partial\theta_{i}}=\sum_{\h}\frac{\partial\log
P_{\theta_{J}}(\mathbf{h})}{\partial\theta_{i}}P_{\theta_I,\theta_J}(\mathbf{h}|\mathbf{x}).\label{eq:dgm-marginal-gradient}
\end{align}
Unfortunately, this gradient ascent procedure is generally intractable,
because it requires sampling from $P_{\theta_I,\theta_J}(\mathbf{h}|\mathbf{x})$
(where both the upper layer and lower layer influence $\h$)
to perform inference in the deep model.

\section{Layer-wise deep learning\label{sec:Layer-wise-deep-learning}}

\subsection{A theoretical guarantee}
\label{sec:mainthm}

\NDY{Didn't find much use for BOUB. Still hesitating between BOBL and
BOLL.}

We now present a training procedure that works successively on each
layer.  First we train $\theta_I$ together with a conditional model
$q(\h|\x)$ for the latent variable knowing the data. This step involves
only the bottom part of the model and is thus often tractable.  This allows to infer a new target
distribution for $\h$, on which the upper layers can then be trained.

This procedure singles out a particular setting $\hat\theta_I$ for the
bottom layer of a deep architecture, based on an optimistic assumption of
what the upper layers may be able to do (cf.\
Proposition~\ref{prop:Q=qD}).

Under this procedure, Theorem~\ref{thm:main} states that it is possible
to obtain a validation that the parameter $\hat\theta_I$ for the bottom layer
was optimal, provided the rest of the training goes well.
Namely,
\emph{if} the target distribution for $\h$ can be realized or well
approximated by some value of the parameters $\theta_J$ of the top
layers, and if $\theta_I$ was obtained using a rich enough conditional
model $q(\h|\x)$, then $(\theta_I,\theta_J)$ is guaranteed to be globally
optimal.

\begin{thm}
\label{thm:main}
Suppose the parameters $\theta_I$ of the bottom layer are trained by
\begin{equation}
\label{eq:bottomtraining}
(\hat\theta_I,\hat q) \deq \argmax_{\theta_I,q} \E_{\mathbf{x}\sim P_{\mathcal{D}}}
\left[
\log \sum_\h P_{\theta_I}(\x|\h) \, q_\D(\h)
\right]
\end{equation}
where the arg max runs over \emph{all} conditional probability distributions
$q(\h|\x)$ and where
\begin{equation}
\label{eq:qd}
q_\D(\h)\deq \sum_{\tilde\x} q(\h|\tilde\x) P_\D(\tilde\x)
\end{equation}
with $P_\D$ the observed data distribution.

\NDY{I think the new presentation's more punchy.}\NDL{Agreed.}

We call the optimal $\hat\theta_I$ the \emph{best optimistic lower
layer} (BOLL).
Let $\hat q_\D(\h)$ be the distribution on $\h$ associated with the
optimal $\hat q$.
Then:
\begin{itemize}
\item If the top layers can be trained to reproduce $\hat q_\D(\h)$
perfectly,
i.e., if there exists a parameter $\hat\theta_J$ for the top layers
such that the distribution $P_{\hat\theta_J}(\h)$ is equal to $\hat
q_\D(\h)$, then the parameters obtained are globally optimal:
\[
(\hat\theta_I,\hat\theta_J)=(\theta_I^\ast,\theta_J^\ast)
\]

\item Whatever parameter value $\theta_J$ is used on the top layers in
conjunction with the BOLL $\hat \theta_I$, the
difference in performance \eqref{eq:objective} between $(\hat
\theta_I,\theta_J)$ and the global optimum
$(\theta_I^\ast,\theta_J^\ast)$ is at most
the Kullback--Leibler divergence $\KL{\hat q_\D(\h)}{P_{\theta_J}(\h)}$
between $\hat q_\D(\h)$ and $P_{\theta_J}(\h)$.
\end{itemize}

\end{thm}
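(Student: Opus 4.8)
The plan is to recast the bottom-layer training \eqref{eq:bottomtraining} as the maximization of an \emph{optimistic} upper bound on the log-likelihood, and then to quantify how much is lost when the ideal latent marginal $\hat q_\D$ is replaced by an actual top-layer marginal $P_{\theta_J}(\h)$. First I would observe that the objective in \eqref{eq:bottomtraining} depends on the conditional $q(\h|\x)$ only through its marginal $q_\D$, and that as $q$ ranges over all conditionals $q_\D$ ranges over \emph{all} distributions on $\h$ (take $q(\h|\x)$ independent of $\x$). Hence \eqref{eq:bottomtraining} is equivalent to maximizing, over $\theta_I$ and over an arbitrary marginal $r(\h)$, the quantity $\E_{\x\sim P_\D}[\log\sum_\h P_{\theta_I}(\x|\h)\,r(\h)]$; call its value at fixed $\theta_I$ the best-latent-marginal bound $\U(\theta_I)$. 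By construction $\hat\theta_I=\argmax_{\theta_I}\U(\theta_I)$ and $\hat q_\D$ is the maximizing marginal at $\hat\theta_I$.

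The crucial upper-bound property is then immediate: for \emph{any} $\theta_J$ the marginal $P_{\theta_J}(\h)$ is one admissible choice of $r$, so $\E_{\x\sim P_\D}[\log P_{\theta_I,\theta_J}(\x)]\le\U(\theta_I)$; applying this at the global optimum gives $\E_{\x\sim P_\D}[\log P_{\theta^\ast}(\x)]\le\U(\theta_I^\ast)\le\U(\hat\theta_I)$. The first bullet follows directly: if some $\hat\theta_J$ realizes $P_{\hat\theta_J}(\h)=\hat q_\D(\h)$, then $\E_{\x\sim P_\D}[\log P_{\hat\theta_I,\hat\theta_J}(\x)]$ equals $\U(\hat\theta_I)$ by definition of $\hat q_\D$, hence is at least the global optimum; since the global optimum is maximal this forces equality, and uniqueness of the optimum yields $(\hat\theta_I,\hat\theta_J)=(\theta_I^\ast,\theta_J^\ast)$.

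For the second bullet, write $r=\hat q_\D$, $s=P_{\theta_J}$, $P_r(\x)\deq\sum_\h P_{\hat\theta_I}(\x|\h)\,r(\h)$ and likewise $P_s(\x)$. Using $\E_{\x\sim P_\D}[\log P_{\theta^\ast}]\le\U(\hat\theta_I)=\E_{\x\sim P_\D}[\log P_r(\x)]$, the performance gap is bounded by $\E_{\x\sim P_\D}[\log(P_r(\x)/P_s(\x))]$. I would then use the posterior-reweighting identity $P_s(\x)/P_r(\x)=\E_{\h\sim P_r(\cdot|\x)}[s(\h)/r(\h)]$ together with the concavity of $\log$ (Jensen's inequality, in the direction $\log\E\ge\E\log$) to obtain the pointwise bound $\log(P_r(\x)/P_s(\x))\le\sum_\h P_r(\h|\x)\log(r(\h)/s(\h))$, and integrate against $P_\D$.

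The main obstacle — and the only place where optimality of $\hat q_\D$ is genuinely used — is to show that the resulting weights $w(\h)\deq\sum_\x P_\D(\x)\,P_r(\h|\x)$ coincide with $r(\h)$ itself, for otherwise $\sum_\h w(\h)\log(r(\h)/s(\h))$ need not be a Kullback--Leibler divergence at all. This is exactly the first-order stationarity of the concave marginal objective: differentiating $\E_{\x\sim P_\D}[\log\sum_\h P_{\hat\theta_I}(\x|\h)\,r(\h)]$ under the constraint $\sum_\h r(\h)=1$ gives $\sum_\x P_\D(\x)\,P_{\hat\theta_I}(\x|\h)/P_r(\x)=1$ on the support of $\hat q_\D$, whence $w(\h)=r(\h)\sum_\x P_\D(\x)P_{\hat\theta_I}(\x|\h)/P_r(\x)=r(\h)$ everywhere (trivially off the support). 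Substituting $w=r$ collapses the bound to $\sum_\h r(\h)\log(r(\h)/s(\h))=\KL{\hat q_\D(\h)}{P_{\theta_J}(\h)}$, as claimed; the first bullet reappears as the special case $s=r$, in which this divergence vanishes.
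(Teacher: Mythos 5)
Your proposal is correct and follows essentially the same route as the paper: you reformulate the criterion as the best-latent-marginal upper bound over all marginals on $\h$ (the paper's Proposition~\ref{prop:Q=qD}), sandwich the global optimum between the in-model and optimistic bounds (Proposition~\ref{prop:BLMtraining}), and control the remaining gap via the log-sum/Jensen inequality together with the stationarity of the BLM, which identifies $\sum_\x P_\D(\x)\,\hat q_\D(\h|\x)$ with $\hat q_\D(\h)$ and collapses the bound to $\KL{\hat q_\D(\h)}{P_{\theta_J}(\h)}$ (Proposition~\ref{prop:quant}). The only cosmetic difference is that you prove the fixed-point identity directly from the first-order optimality conditions, whereas the paper isolates it as Proposition~\ref{prop:datainc} and also motivates it through an EM interpretation; the underlying computation is the same.
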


This theorem strongly suggests using $\hat q_\D(\h)$ as the target
distribution for the top layers, i.e., looking for the value
$\hat\theta_J$ best approximating $\hat q_\D(\h)$:
\begin{equation}
\label{eq:toptraining}
\hat\theta_J \deq \argmin_{\theta_J} \KL{\hat
q_\D(\h)}{P_{\theta_J}(\h)}=\argmax_{\theta_J} \E_{\h\sim \hat q_\D}
\log P_{\theta_J}(\h)
\end{equation}
which thus takes the same form as the original problem.
Then the same scheme may be used recursively to train
the top layers. A final fine-tuning phase may be helpful, see
Section~\ref{sec:finetune}.

Note that when the top layers fail to approximate $\hat
q_\D$ perfectly, the loss of performance depends only on the observed difference
between $\hat
q_\D$ and $P_{\hat\theta_J}$, and not on
the unknown global optimum $(\theta_I^\ast,\theta_J^\ast)$. Beware that,
unfortunately,
this bound relies on \emph{perfect} layer-wise training of the bottom layer, i.e., on
$\hat q$ being the optimum of the criterion~\eqref{eq:bottomtraining}
optimized over all possible conditional distributions $q$; otherwise it is a priori not valid.

In practice the supremum on $q$ will always be
taken over a restricted set of conditional distributions $q(\h|\x)$,
rather than the set of all possible distributions on $\h$ for each $\x$.
Thus, this theorem is an
idealized version of practice (though
Remark~\ref{rem:qfamily} below mitigates this). This still suggests a clear
strategy to separate the deep optimization problem into two subproblems
to be solved sequentially:

\begin{enumerate}
\item Train the parameters $\theta_{I}$ of the bottom layer
after \eqref{eq:bottomtraining}, using a model $q(\h|\x)$ as wide as
possible, to approximate the BOLL $\hat\theta_{I}$.
\item Infer the corresponding distribution of $\h$ by \eqref{eq:qd} and
train the upper part of the model as best as possible to approximate this
distribution.

\end{enumerate}

Then, provided learning is successful in both instances,
the result is close to optimal.

Auto-encoders can be shown to implement an approximation of this
procedure, in which only the terms $\x=\tilde \x$ are kept in
\eqref{eq:bottomtraining}--\eqref{eq:qd}
(Section~\ref{sec:AA}).

This scheme is designed with in mind a situation in which the upper
layers get progessively simpler. Indeed, if the layer for $\h$ is as wide
as the layer for $\x$ and if $P(\x|\h)$ can learn the identity, then the
procedure in Theorem~\ref{thm:main} just transfers the problem
unchanged one layer up.

This theorem strongly suggests decoupling the inference and
generative models $q(\h|\x)$ and $P(\x|\h)$, and using a rich conditional model $q(\h|\x)$,
contrary, e.g., to common practice in auto-encoders\footnote{
Attempts to prevent auto-encoders from learning the
identity (which is completely justifiable) often result in an even more
constrained inference model, e.g.,
tied weights, or sparsity constraints on the hidden representation.}.  Indeed
the experiments of Section~\ref{sec:exp} confirm
that using a more expressive $q(\h|\x)$ yields improved values of
$\theta$.

Importantly, $q(\h|\x)$ is only used as an auxiliary prop for
solving the optimization problem~\eqref{eq:objective} over $\theta$ and
is not part of the final generative model, so that using a richer
$q(\h|\x)$ to reach a better value of $\theta$ is not simply changing to
a larger model.
Thus,
using a richer inference model $q(\h|\x)$ should not pose too much risk
of overfitting because the regularization properties of the model
come mainly from the choice of the generative model family $(\theta)$.

The criterion proposed in \eqref{eq:bottomtraining} is of particular
relevance to representation learning where the goal is not to learn a
generative model, but to learn a useful representation of the data. In
this setting, training an upper layer model $P(\h)$ becomes irrelevant
because we are not interested in the generative model itself.  What
matters in representation learning is that the lower layer (i.e.,
$P(\x|\h)$ and $q(\h|\x)$) is optimal for \emph{some}
model of $P(\h)$, left unspecified.

We now proceed, by steps, to the proof of Theorem~\ref{thm:main}. This
will be the occasion to introduce some concepts used later in the experimental
setting.

\subsection{The Best Latent Marginal Upper Bound}

One way to evaluate a parameter $\theta_I$ for the bottom layer without
training the whole architecture is to be optimistic: assume that the top
layers will be able to produce the probability distribution for $\h$ that
gives the best results if used together with $P_{\theta_I}(\x|\h)$. This
leads to the following.

\begin{defi}
Let $\theta_I$ be a value of the bottom layer parameters. The
\emph{best latent marginal} (BLM) for $\theta_I$ is the probability
distribution $Q$ on $\h$ maximizing the log-likelihood:
\begin{equation}
\label{eq:BLM}
\BLM{\theta_I}{\D}\deq \argmax_Q \E_{\x\sim P_\D}
\left[
\log \sum_\h P_{\theta_I}(\x|\h) Q(\h)
\right]
\end{equation}
where the arg max runs over the set of all probability distributions over
$\h$. The \emph{BLM upper bound} is the corresponding
log-likelihood value:
\begin{equation}
\label{eq:defU}
\U_\D(\theta_I)\deq \max_Q \E_{\x\sim P_\D}
\left[
\log \sum_\h P_{\theta_I}(\x|\h) Q(\h)
\right]
\end{equation}

\end{defi}

The BLM upper bound $\mathcal{U}_{\mathcal{D}}(\theta_{I})$ is the least
upper bound on the log-likelihood of the deep generative model on the
dataset $\mathcal{D}$ if $\theta_{I}$ is used for the bottom layer.  
$\mathcal{U}_\D(\theta_I)$ is only an upper bound of the actual
performance of $\theta_I$, because subsequent
training of $P_{\theta_J}(\mathbf{h})$ may be suboptimal: the best latent
marginal $\BLM{\theta_I}{\D}(\h)$ may not be representable as
$P_{\theta_J}(\mathbf{h})$ for $\theta_J$ in the model,
or the training of
$P_{\theta_J}(\mathbf{h})$ itself may not converge to the best solution.

Note that the arg max in \eqref{eq:BLM} is concave in $Q$, so that in
typical situations 
the BLM is unique---except in degenerate cases such as when two values of
$\h$ define the same $P_{\theta_I}(\x|\h)$).

\del{
We now show that the criterion~\eqref{eq:bottomtraining} proposed in
Theorem~\ref{thm:main} to train the lower part of a deep generative model
coincides with the BLM upper bound.}\NDY{Really redundant with the first
sentence in the 
statement below!}

\begin{prop}
\label{prop:Q=qD}
The
criterion~\eqref{eq:bottomtraining} used in Theorem~\ref{thm:main} for
training the bottom layer coincides with the BLM upper bound:
\begin{equation}
\label{eq:BLMandqD}
\U_\D(\theta_I)=\max_q \E_{\x\sim P_\D}
\left[
\log \sum_\h P_{\theta_I}(\x|\h) q_\D(\h)
\right]
\end{equation}
where the maximum runs over all conditional probability distributions
$q(\h|\x)$.
In particular the BOLL $\hat\theta_I$ selected in Theorem~\ref{thm:main}
is
\begin{equation}
\hat\theta_I=\argmax_{\theta_I}\, \U_\D(\theta_I)
\end{equation}
and the target distribution $\hat q_\D(\h)$ in
Theorem~\ref{thm:main} is the best latent marginal $\BLM{\hat\theta_I}{\D}$.
\end{prop}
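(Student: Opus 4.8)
The plan is to observe that the objective appearing in \eqref{eq:BLMandqD} depends on the conditional distribution $q(\h|\x)$ \emph{only} through its induced marginal $q_\D(\h)$ defined in \eqref{eq:qd}, and then to show that as $q$ ranges over all conditional distributions, the marginal $q_\D$ ranges over exactly the set of all probability distributions on $\h$. Once this surjectivity is established, the right-hand side of \eqref{eq:BLMandqD} becomes a maximization of the very same functional over the very same set as in the definition \eqref{eq:defU} of $\U_\D(\theta_I)$, so the two quantities coincide.

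First I would verify that every $q_\D$ obtained from \eqref{eq:qd} is a legitimate probability distribution on $\h$: nonnegativity is immediate, and summing over $\h$ gives $\sum_\h q_\D(\h)=\sum_{\tilde\x}P_\D(\tilde\x)\sum_\h q(\h|\tilde\x)=\sum_{\tilde\x}P_\D(\tilde\x)=1$. Conversely, given any target distribution $Q$ on $\h$, I would exhibit a conditional realizing it, namely the $\x$-independent choice $q(\h|\x)\deq Q(\h)$; substituting into \eqref{eq:qd} yields $q_\D(\h)=Q(\h)\sum_{\tilde\x}P_\D(\tilde\x)=Q(\h)$. Hence the image of the map $q\mapsto q_\D$ is precisely the full simplex of distributions on $\h$.

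The identity $\U_\D(\theta_I)=\max_q\,\E_{\x\sim P_\D}[\log\sum_\h P_{\theta_I}(\x|\h)\,q_\D(\h)]$ then follows by the reparametrization $Q=q_\D$ and comparison with \eqref{eq:defU}. For the two remaining assertions, I would note that the joint maximization \eqref{eq:bottomtraining} over $(\theta_I,q)$ factors as an outer maximization over $\theta_I$ of the inner maximum over $q$, and the inner maximum is exactly $\U_\D(\theta_I)$ by the first part; therefore $\hat\theta_I=\argmax_{\theta_I}\U_\D(\theta_I)$. Finally, with $\hat\theta_I$ fixed, the optimal marginal $\hat q_\D$ is the distribution attaining the maximum in \eqref{eq:defU}, which is by \eqref{eq:BLM} the best latent marginal $\BLM{\hat\theta_I}{\D}$.

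I expect no serious obstacle here: the argument is essentially a surjectivity (reparametrization) observation combined with the fact that the likelihood is insensitive to any feature of $q$ beyond $q_\D$. The only point demanding a little care is the degenerate-uniqueness caveat already recorded after \eqref{eq:defU}: by concavity in $Q$ the maximizing marginal (hence $\hat q_\D$) is unique except when distinct values of $\h$ induce the same $P_{\theta_I}(\x|\h)$, but this non-uniqueness does not affect the equality of the two maxima.
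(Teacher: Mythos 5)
Your proof is correct and follows essentially the same route as the paper's: the key step in both is that the map $q\mapsto q_\D$ surjects onto all distributions on $\h$ via the constant conditional $q(\h|\x)=Q(\h)$, so the two maximizations run over the same set. Your additional checks (that $q_\D$ is a valid distribution, and that the joint argmax factors) are fine but just make explicit what the paper leaves implicit.
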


Thus the BOLL $\hat\theta_I$ is the best bottom layer setting if one
uses an optimistic criterion for assessing the bottom layer, hence the
name ``best optimistic lower layer''.

\begin{proof}
\NDL{\emph{proofchecking:} Ok.}
Any distribution $Q$ over $\h$ can be written as $q_\D$ for some conditional
distribution $q(\h|\x)$, for instance by defining $q(\h|\x)=Q(\h)$ for every
$\x$ in the dataset. In particular this is the case for the best
latent marginal $\BLM{\theta_I}{\D}$.

Consequently the maxima in~\eqref{eq:BLMandqD} and in~\eqref{eq:defU} are
taken on the same set and coincide.
\end{proof}

The argument that any distribution is of the form $q_\D$ may look
disappointing: why choose this particular form? In
Section~\ref{sec:whyqd} we show how
writing distributions over $\h$ as $q_\D$ for some conditional
distribution $q(\h|\x)$ may help to maximize data log-likelihood, by
quantifiably incorporating information from the data
(Proposition~\ref{prop:datainc}). Moreover, the 
bound on loss of performance (second part of Theorem~\ref{thm:main}) when
the upper layers do not match the BLM crucially relies on the properties of
$\hat q_\D$.
A more
practical argument for using $q_\D$
is that optimizing both $\theta_I$ and the full distribution of the hidden
variable $\h$ at the
same time is just as difficult as optimizing the whole network, whereas
the deep architectures currently in use already train a model of $\x$ knowing
$\h$ and of $\h$ knowing $\x$ at the same time.

\begin{rem}
\label{rem:qfamily}
For Theorem~\ref{thm:main} to hold, it is not necessary to optimize
over all possible conditional probability distributions $q(\h|\x)$
(which is a set of very large dimension). As can be seen from the proof
above it is enough to optimize over a
family $q(\h|\x)\in \mathcal{Q}$ such that every (non-conditional) distribution on $\h$ can be
represented (or well approximated) as $q_\D(\h)$ for some $q\in
\mathcal{Q}$.
\end{rem}

Let us now go on with the proof of Theorem~\ref{thm:main}.

\begin{prop}
\label{prop:BLMtraining}
Set the bottom layer parameters to the BOLL
\begin{align}
\hat\theta_I
&=
\argmax_{\theta_I} \,\U_\D(\theta_I)
\end{align}
and let $\hat Q$ be the corresponding best latent marginal.

Assume that subsequent training of the top layers using
$\hat Q$ as the target distribution for $\h$, is successful, i.e.,
there exists a $\theta_J$ such that $\hat Q(\h)=P_{\theta_J}(\h)$.

Then $\hat\theta_I=\theta_I^\ast$.
\end{prop}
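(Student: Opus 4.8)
The plan is to lean on the fact that $\U_\D(\theta_I)$ is by construction an \emph{optimistic} overestimate of the best log-likelihood attainable with $\theta_I$ in the bottom layer, and to show that the realizability hypothesis forces this overestimate to be attained exactly at $\hat\theta_I$, thereby pinning it to the global optimum.

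First I would record the basic upper-bound property. For every $\theta_I$ and every $\theta_J$, the marginal $P_{\theta_J}(\h)$ is one admissible candidate $Q$ in the maximization \eqref{eq:defU}, so
\[
\E_{\x\sim P_\D}\!\left[\log P_{\theta_I,\theta_J}(\x)\right]
=\E_{\x\sim P_\D}\!\left[\log \sum_\h P_{\theta_I}(\x|\h)\,P_{\theta_J}(\h)\right]
\leq \U_\D(\theta_I).
\]
Taking this at the global optimum $(\theta_I^\ast,\theta_J^\ast)$ gives $\U_\D(\theta_I^\ast)\geq \E_{\x\sim P_\D}[\log P_{\theta_I^\ast,\theta_J^\ast}(\x)]$, which by \eqref{eq:objective} is the maximal data log-likelihood.

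Next I would use that $\hat\theta_I$ maximizes $\U_\D$, whence $\U_\D(\hat\theta_I)\geq \U_\D(\theta_I^\ast)$ is at least the global optimal log-likelihood. The crux is then to convert this optimistic bound into a genuinely attained value using the hypothesis $\hat Q(\h)=P_{\hat\theta_J}(\h)$. Since $\hat Q$ is exactly the best latent marginal realizing $\U_\D(\hat\theta_I)$ in \eqref{eq:BLM}--\eqref{eq:defU}, substituting $\hat Q=P_{\hat\theta_J}$ and using the decomposition \eqref{eq:p-decomposition} yields
\[
\U_\D(\hat\theta_I)
=\E_{\x\sim P_\D}\!\left[\log \sum_\h P_{\hat\theta_I}(\x|\h)\,\hat Q(\h)\right]
=\E_{\x\sim P_\D}\!\left[\log P_{\hat\theta_I,\hat\theta_J}(\x)\right].
\]
Hence the actual model $(\hat\theta_I,\hat\theta_J)$ attains a log-likelihood of at least the global maximum; being no larger than the maximum, it must equal it, so $(\hat\theta_I,\hat\theta_J)$ is a global optimizer.

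Finally, invoking the assumed uniqueness of the global optimizer, I conclude $(\hat\theta_I,\hat\theta_J)=(\theta_I^\ast,\theta_J^\ast)$ and in particular $\hat\theta_I=\theta_I^\ast$. The one delicate step, and the sole place where the hypotheses are indispensable, is the conversion of the upper bound into an attained value: without realizability of $\hat Q$ the argument only produces $\U_\D(\hat\theta_I)\geq\U_\D(\theta_I^\ast)$ with no guarantee that the bound is tight, and the identification of $\hat\theta_I$ with $\theta_I^\ast$ would break down.
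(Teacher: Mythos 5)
Your proof is correct and follows essentially the same route as the paper's: the paper phrases the chain of inequalities through an explicitly named ``in-model'' bound $\UIM_\D(\theta_I)\leq\U_\D(\theta_I)$ and shows $\hat\theta_I$ maximizes $\UIM_\D$, while you run the identical argument directly on attained log-likelihoods, with realizability of $\hat Q$ closing the gap in both cases. The appeal to uniqueness of the global optimum at the end is likewise implicit in the paper's conclusion, so there is no substantive difference.
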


\begin{proof}
\NDL{\emph{proofchecking:} Ok.}
Define the \emph{in-model} BLM upper bound as
\begin{equation}
\UIM_\D(\theta_I)\deq \max_{\theta_J} \E_{\x\sim P_\D}
\left[
\log \sum_\h P_{\theta_I}(\x|\h) P_{\theta_J}(\h)
\right]
\label{eq:inmodelbound}
\end{equation}

By definition, the global optimum $\theta_I^\ast$ for the parameters of
the whole architecture is given by
$\theta_I^\ast=\argmax_{\theta_I} \UIM_\D(\theta_I)$. 

Obviously, for any value $\theta_I$ we have $\UIM_\D(\theta_I)\leq
\U_\D(\theta_I)$ since the argmax is taken over a more restricted set.
Then, in turn, $\U_\D(\theta_I)\leq
\U_\D(\hat\theta_I)$ by definition of $\hat \theta_I$.

By our assumption, the BLM $\hat Q$ for $\hat\theta_I$ happens to lie in
the model: $\hat Q(\h)=P_{\theta_J}(\h)$. This implies that 
$\U_\D(\hat\theta_I)=\UIM_\D(\hat\theta_I)$.

Combining, we get that $\UIM_\D(\theta_I)\leq \UIM_\D(\hat\theta_I)$ for any
$\theta_I$.
Thus $\hat\theta_I$ maximizes $\UIM_\D(\theta_I)$, and is thus equal to
$\theta_I^\ast$.
\end{proof}

The first part of Theorem~\ref{thm:main} then results from the combination of
Propositions~\ref{prop:BLMtraining} and~\ref{prop:Q=qD}\del{, for the case
when the upper layers can be trained to reproduce $q_\D$ perfectly}.
\NDL{\emph{proofchecking:} Ok for proof of theorem 1. By prop 3 Theorem 1 proposes to maximize the BLM upper bound. By prop 5 and with the assumption of perfect reproduction of $Q(\h)$ Theorem 1 is proven to give optimal in-model parameters.}

We now give a bound on the loss of performance in case further training
of the upper layers fails to reproduce the BLM. This will complete the
proof of Theorem~\ref{thm:main}. We will make use of a special
optimality property of distributions of the form $q_\D(\h)$, namely,
Proposition~\ref{prop:datainc}, whose proof is postponed to
Section~\ref{sec:whyqd}.

\begin{prop}
\label{prop:quant}
Keep the notation of Theorem~\ref{thm:main}. In the case when
$P_{\theta_J}(\h)$ fails to reproduce $\hat q_\D(\h)$ exactly, the
loss of performance of $(\hat \theta_I,\theta_J)$ with respect to
the global optimum $(\theta^\ast_I,\theta^\ast_J)$ is at most
\begin{equation}
\KL{P_\D(\x)}{P_{\hat\theta_I,\theta_J}(\x)}
-
\KL{P_\D(\x)}{\hat q_{\D,\hat \theta_I}(\x)}
\end{equation}
where $\hat q_{\D,\hat \theta_I}(\x)\deq\sum_\h
P_{\hat\theta_I}(\x|\h)\hat q_\D(\h)$ is the distribution on $\x$
obtained by using the BLM.

This quantity is in turn at most
\begin{equation}
\label{eq:explicitbound}
\KL{\hat
q_\D(\h)}{P_{\theta_J}(\h)}
\end{equation}
which is thus also a bound on the loss of 
performance of $(\hat \theta_I,\theta_J)$ with respect to
$(\theta^\ast_I,\theta^\ast_J)$.
\end{prop}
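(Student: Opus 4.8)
The plan is to read the phrase ``loss of performance'' through \eqref{eq:objective} as a difference of Kullback--Leibler divergences and to bound it in two stages, the second stage being a data-processing argument powered by the optimality of $\hat q_\D$. First I would record that maximizing the data log-likelihood is the same as minimizing $\KL{P_\D}{P_\theta}$, since the two differ only by the constant entropy $H(P_\D)$ of the data. Hence the loss of performance of $(\hat\theta_I,\theta_J)$ relative to the global optimum is exactly $\KL{P_\D(\x)}{P_{\hat\theta_I,\theta_J}(\x)}-\KL{P_\D(\x)}{P_{\theta_I^\ast,\theta_J^\ast}(\x)}$. The first displayed bound is therefore equivalent to the single inequality $\KL{P_\D(\x)}{\hat q_{\D,\hat\theta_I}(\x)}\leq\KL{P_\D(\x)}{P_{\theta_I^\ast,\theta_J^\ast}(\x)}$.

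To establish this inequality I would observe that, by definition of the BLM, $\U_\D(\hat\theta_I)=\E_{\x\sim P_\D}[\log\hat q_{\D,\hat\theta_I}(\x)]=-\KL{P_\D}{\hat q_{\D,\hat\theta_I}}-H(P_\D)$, and then reuse the chain of inequalities from the proof of Proposition~\ref{prop:BLMtraining}, namely $\U_\D(\hat\theta_I)\geq\U_\D(\theta_I^\ast)\geq\UIM_\D(\theta_I^\ast)$, where the last quantity is precisely the global-optimum log-likelihood $-\KL{P_\D}{P_{\theta_I^\ast,\theta_J^\ast}}-H(P_\D)$. Cancelling the entropy term gives the desired inequality and hence the first bound.

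For the second, sharper bound I would rewrite the first bound as the single expectation $\E_{\x\sim P_\D}[\log(\hat q_{\D,\hat\theta_I}(\x)/P_{\hat\theta_I,\theta_J}(\x))]$, noting that numerator and denominator are mixtures over $\h$ against the common kernel $P_{\hat\theta_I}(\x|\h)$, with mixing weights $\hat q_\D$ and $P_{\theta_J}$ respectively. Introducing the BLM posterior $w_\x(\h)\deq P_{\hat\theta_I}(\x|\h)\hat q_\D(\h)/\hat q_{\D,\hat\theta_I}(\x)$, the ratio $P_{\hat\theta_I,\theta_J}(\x)/\hat q_{\D,\hat\theta_I}(\x)$ is the $w_\x$-average of $P_{\theta_J}(\h)/\hat q_\D(\h)$; applying Jensen's inequality to the concave logarithm then yields $\log(\hat q_{\D,\hat\theta_I}(\x)/P_{\hat\theta_I,\theta_J}(\x))\leq\sum_\h w_\x(\h)\log(\hat q_\D(\h)/P_{\theta_J}(\h))$. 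Averaging over $\x\sim P_\D$ and exchanging the order of summation leaves, for each $\h$, the term $\hat q_\D(\h)\log(\hat q_\D(\h)/P_{\theta_J}(\h))$ weighted by the factor $\sum_\x P_\D(\x)P_{\hat\theta_I}(\x|\h)/\hat q_{\D,\hat\theta_I}(\x)$.

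The crux, and the step I expect to be the main obstacle, is that this last factor equals $1$ for every $\h$ in the support of $\hat q_\D$. This is exactly the first-order stationarity condition characterizing $\hat q_\D$ as a maximizer of \eqref{eq:defU} over the simplex of distributions on $\h$ --- the content of Proposition~\ref{prop:datainc}, which I am permitted to invoke. Once this identity is in hand the weighting collapses and the averaged inequality becomes precisely $\KL{\hat q_\D(\h)}{P_{\theta_J}(\h)}$, as claimed; the values $\h$ with $\hat q_\D(\h)=0$ contribute nothing and so require no control. The genuinely delicate point is this change of reference measure, from $P_\D$ over $\x$ to $\hat q_\D$ over $\h$: without the stationarity identity the naive estimate would only yield a data-processing inequality against $\KL{\hat q_{\D,\hat\theta_I}(\x)}{P_{\hat\theta_I,\theta_J}(\x)}$ taken under the wrong distribution on $\x$, so it is the optimistic construction of $\hat q_\D$ that makes the clean bound \eqref{eq:explicitbound} possible.
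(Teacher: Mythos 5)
Your proposal is correct and follows essentially the same route as the paper's proof: the first bound comes from the chain $\UIM_\D(\theta_I^\ast)\leq\U_\D(\theta_I^\ast)\leq\U_\D(\hat\theta_I)$ exactly as in the text, and your Jensen step with the BLM posterior $w_\x(\h)$ is precisely the log sum inequality the paper invokes, with the change of reference measure justified by the fixed-point property of $\hat q_\D$ from Proposition~\ref{prop:datainc} in both cases. The only difference is presentational (you make the weighting factor $\sum_\x P_\D(\x)P_{\hat\theta_I}(\x|\h)/\hat q_{\D,\hat\theta_I}(\x)=1$ explicit rather than stating the fixed-point identity directly), so nothing needs to be changed.
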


Note that these estimates do not depend on the unkown global optimum $\theta^*$.

Importantly, this bound is \emph{not} valid if $\hat q$ has not been perfectly
optimized over all possible conditional distributions $q(\h|\x)$. Thus it
should not be used blindly to get a performance bound, since heuristics
will always be used to find $\hat q$. Therefore, it may have only limited
practical relevance. In practice the real loss may both be larger than
this bound because $q$ has been optimized over a smaller set, and smaller
because we are comparing to the BLM upper bound which is an
optimistic assessment.

\begin{proof}
From~\eqref{eq:objective} and \eqref{eq:KLobjective}, the difference in log-likelihood performance between any two
distributions $p_1(\x)$ and $p_2(\x)$ is equal to
$\KL{P_\D}{p_1}-\KL{P_\D}{p_2}$.

For simplicity, denote 
\begin{align*}
p_1(\x)&=P_{\hat\theta_I,\theta_J}(\x)=\sum_\h
P_{\hat\theta_I}(\x|\h)P_{\theta_J}(\h)
\\p_2(\x)&=P_{\theta_I^\ast,\theta_J^\ast}(\x)
=\sum_\h P_{\theta_I^\ast}(\x|\h) P_{\theta_J^\ast}(\h)
\\p_3(\x)&=\sum_\h
P_{\hat\theta_I}(\x|\h)\hat q_\D(\h)
\end{align*}

We want to compare $p_1$ and
$p_2$.

Define the in-model upper bound 
$\UIM_\D(\theta_I)$ as in \eqref{eq:inmodelbound} above. Then we have
$\theta_I^\ast=\argmax_{\theta_I} \UIM_\D(\theta_I)$ and
$\hat\theta_I=\argmax_{\theta_I} \U_\D(\theta_I)$. Since $\UIM_\D\leq
\U_\D$, we have $\UIM_\D(\theta_I^\ast)\leq \U_D(\hat\theta_I)$. The
BLM upper bound $\U_D(\hat\theta_I)$ is attained when we use $\hat q_\D$
as the distribution for $\h$, so $\UIM_\D(\theta_I^\ast)\leq
\U_D(\hat\theta_I)$ means that the performance of $p_3$ is better than
the performance of $p_2$:
\[
\KL{P_\D}{p_3}\leq \KL{P_\D}{p_2}
\]
(inequalities hold in the reverse order for data log-likelihood).

Now by definition of the optimum $\theta^\ast$, the distribution $p_2$ is
better than $p_1$: $\KL{P_\D}{p_2}\leq \KL{P_\D}{p_1}$. Consequently, the
difference in performance between $p_2$ and $p_1$ (whether expressed in
data log-likelihood or in Kullback--Leibler divergence) is smaller than the
difference in performance between $p_3$ and $p_1$, which is the
difference of 
Kullback--Leibler divergences
appearing in the proposition.

\NDL{\emph{proofchecking:} Ok to this point. Very nice argument.}

Let us now evaluate more precisely the loss of $p_1$ with respect to
$p_3$. By abuse of notation we will indifferently denote $p_1(\h)$ and
$p_1(\x)$, it being understood that one is obtained from the other through
$P_{\hat\theta_I}(\x|\h)$, and likewise for $p_3$ (with the same
$\hat\theta_I$).

For any distributions $p_1$ and $p_3$ 
the
loss of performance of $p_1$ w.r.t.\ $p_3$ satisfies
\[
\E_{\x\sim P_\D} \log p_3(\x)-\E_{\x\sim P_\D} \log p_1(\x)
=
\E_{\x\sim P_\D} \left[
\log \frac{\sum_\h P_{\hat\theta_I}(\x|\h)p_3(\h)}
{\sum_\h P_{\hat\theta_I}(\x|\h)p_1(\h)}
\right]
\]
and by the log sum inequality $\log (\sum a_i/\sum b_i)\leq \frac{1}{\sum
a_i} \sum a_i \log (a_i/b_i)$ \cite[Theorem 2.7.1]{CoverThomas} we get
\begin{align*}
&\E_{\x\sim P_\D} \log p_3(\x)-\E_{\x\sim P_\D} \log p_1(\x)
\\&\leq
\E_{\x\sim P_\D} \left[\frac{1}{\sum_\h P_{\hat\theta_I}(\x|\h)p_3(\h)}
\sum_\h  P_{\hat\theta_I}(\x|\h)p_3(\h)\log \frac{
P_{\hat\theta_I}(\x|\h)p_3(\h)}{P_{\hat\theta_I}(\x|\h)p_1(\h)}
\right]
\\&= \E_{\x\sim P_\D}\left[
\frac{1}{p_3(\x)}\sum_\h p_3(\x,\h) \log \frac{p_3(\h)}{p_1(\h)}
\right]
\\&= \E_{\x\sim P_\D}\left[
\sum_\h p_3(\h|\x) \log \frac{p_3(\h)}{p_1(\h)}
\right]
\\&=\E_{\x\sim
P_\D} \E_{\h\sim p_3(\h|\x)} \left[\log \frac{p_3(\h)}{p_1(\h)}\right]
\end{align*}

Given a probability $p_3$ on $(\x,\h)$, 
the law on $\h$ obtained by taking an $\x$ according
to $P_\D$, then taking an $\h$ according to $p_3(\h|\x)$, is generally
not equal to $p_3(\h)$. However, here $p_3$ is equal to the BLM $\hat q_D$,
and by Proposition~\ref{prop:datainc} below the BLM has exactly this property
(which characterizes the log-likelihood extrema).
Thus thanks to
Proposition~\ref{prop:datainc} we have
\[
\E_{\x\sim
P_\D} \E_{\h\sim \hat q_\D(\h|\x)} \left[\log\frac{\hat q_\D(\h)}{p_1(\h)}\right]
=\E_{\h\sim \hat q_\D} \left[\log\frac{\hat q_\D(\h)}{p_1(\h)}\right]
=\KL{\hat q_\D(\h)}{p_1(\h)}
\]
which concludes the argument.
\NDL{\emph{proofchecking: } rechecked and ok. re-re-checked that this is ok for any $\theta_I$.}
\end{proof}

\subsection{Relation with Stacked RBMs}
\label{sec:SRBMs}

Stacked RBMs (SRBMs) \cite{Hinton2006,Bengio2007a,Larochelle2009} are deep generative
models trained by stacking restricted Boltzmann machines (RBMs)
\cite{Smolensky1986}.

A RBM uses a single set of parameters to represent a distribution on
pairs $(\x,\h)$. Similarly to our approach, stacked RBMs are trained in a
greedy layer-wise fashion: one starts by training the distribution of the
bottom RBM to approximate the distribution of $\x$. To do so, distributions 
$P_{\theta_I}(\mathbf{x}|\mathbf{h})$ and $Q_{\theta_I}(\mathbf{h}|\mathbf{x})$ are learned
jointly using a \emph{single} set of parameters $\theta_I$. Then a target
distribution for $\h$ is defined as $\sum_\x Q_{\theta_I}(\h|\x)
P_\D(\x)$ (similarly to \eqref{eq:qd}) and the top layers are trained
recursively on this distribution.

In the final generative model, the full top RBM is used on the top layer to
provide a distribution for $\h$, then the bottom RBMs are used only for
the generation of $\x$ knowing $\h$. (Therefore the
$\h$-biases of the bottom RBMs are never used in the final generative
model.)

Thus, in contrast with our approach, $P_{\theta_I}(\mathbf{x}|\mathbf{h})$
and $Q_{\theta_I}(\mathbf{h}|\mathbf{x})$ are not trained to maximize the
least upper
bound of the likelihood of the full deep generative model
but are trained to maximize the likelihood of a single RBM.

This procedure has been shown to be equivalent to maximizing the likelihood
of a deep generative model with infinitely many layers where the weights
are all tied \cite{Hinton2006}. 
The latter can be interpreted as an assumption
on the future value of $P(\mathbf{h})$, which is unknown when learning
the first layer. As such, SRBMs make a different assumption about
the future $P(\mathbf{h})$ than the one made in~\eqref{eq:bottomtraining}.

With respect to this, the comparison of gradient ascents is instructive:
the gradient ascent for training the bottom RBM takes a form reminiscent
of gradient ascent of the global generative
model~\eqref{eq:globalgradient} but in which the dependency of $P(\h)$ on the
upper layers $\theta_J$ is ignored, and instead the distribution
$P(\h)$ is tied to $\theta_I$ because the RBM uses a single parameter set
for both.

When adding a new layer on top of a trained RBM, if the initialization is
set to an upside down version of the current RBM (which can be seen as
``unrolling'' one step of Gibbs sampling), the new deep model still matches
the special infinite deep generative model with tied weights mentioned
above. Starting training of the upper layer from this initialization
guarantees that the new layer can only increase the
likelihood \cite{Hinton2006}. However, this result is only known to hold
for two layers; with more layers, it is only known that adding
layers increases a \emph{bound} on the likelihood \cite{Hinton2006}.

In our approach, the perspective is different. During the training
of lower layers, we consider the best possible model for the hidden
variable.
Because of errors which are bound to occur in approximation
and optimization during the training of the model for $P(\h)$,
the likelihood associated with an optimal upper model (the BLM upper bound)
is expected to \emph{decrease} each time we actually take another lower layer into account:
At each new layer, errors in approximation or optimization
occur so that the final likelihood of the training
set will be smaller
than the upper bound. (On the other way these limitations might actually
improve performance on a test set, see the discussion about
regularization in Section~\ref{sec:exp}.)

In \cite{LeRoux2008} a training criterion is suggested for SRBMs which is
reminiscent of a BLM with tied weights for the inference and generative
parts (and therefore without the BLM optimality guarantee), see also
Section~\ref{sec:fromSRBMstoAEs}.

\subsection{\label{sec:AA}Relation with Auto-Encoders}

Since the introduction of deep neural networks, auto-encoders \cite{Vincent2008}
have been considered a credible alternative to stacked RBMs and have
been shown to have almost identical performance on several tasks
\cite{Larochelle2007}.

Auto-encoders are trained by stacking auto-associators \cite{Bourlard1988}
trained with backpropagation. Namely: we start with a three-layer network
$\x\mapsto \h^{(1)} \mapsto \x$ trained by backpropagation to reproduce the
data; this provides two conditional distributions $P(\h^{(1)}|\x)$ and
$P(\x|\h^{(1)})$. Then in turn, another auto-associator is trained as a
three-layer network $\h^{(1)} \mapsto \h^{(2)} \mapsto\h^{(1)}$, to
reproduce the distribution $P(\h^{(1)}|\x)$ on $\h^{(1)}$, etc.

So as in the learning of SRBMs, auto-encoder training is performed in a
greedy layer-wise manner, but with a different criterion: the
reconstruction error.

Note that after the auto-encoder has been trained, the deep generative
model is incomplete because it lacks a generative model for the distribution
$P(\h^{k_\mathrm{max}})$ of the deepest hidden variable, which the
auto-encoder does not provide\footnote{Auto-associators can in fact be
used as valid generative models from which sampling is possible
\cite{Rifai2012} in the setting of manifold learning but this is beyond
the scope of this article.}.
One possibility is to learn the top layer with an RBM, which then completes
the generative model.

Concerning the theoretical soundness of stacking auto-associators for
training deep generative models, it is known that the training of
auto-associators is an approximation of the training of RBMs in which
only the largest term of an expansion of the log-likelihood is kept
\cite{Bengio2009}. In this sense, SRBM and stacked
auto-associator training approximate each other (see also
Section~\ref{sec:fromSRBMstoAEs}).

Our approach gives a new understanding of auto-encoders as the lower part
of a deep generative model, because they are trained to maximize a \emph{lower
bound} of \eqref{eq:bottomtraining}, as follows.

To fix ideas, let us consider for~\eqref{eq:bottomtraining} a particular class of conditional
distributions $q(\mathbf{h}|\mathbf{x})$ commonly used in
auto-associators.
Namely, let us
parametrize $q$ as $q_{\xi}$ with
\begin{eqnarray}
q_{\xi}(\mathbf{h}|\mathbf{x}) & = & \prod_{j}q_{\xi}(h_{j}|\mathbf{x})\\
q_{\xi}(h_{j}|\mathbf{x}) & = & \sigm({\textstyle\sum}_{i}x_{i}w_{ij}+b_{j})
\end{eqnarray}
where the parameter vector is $\xi=\{\mathbf{W},\mathbf{b}\}$ and
$\sigm(\cdot)$ is the sigmoid function.

Given a conditional
distribution $q(\h|\x)$ as in Theorem~\ref{thm:main}, let us expand the distribution on $\x$ obtained
from $P_{\theta_I}(\x|\h)$ and $q_\D(\h)$:
\begin{align}
P(\x)&= \sum_\h P_{\theta_I}(\x|\h) q_\D(\h)
\\
&= \sum_\h P_{\theta_I}(\x|\h) \sum_{\tilde \x} q(\h|\tilde \x) P_\D(\tilde \x)
\label{eq:wide-ae}
\end{align}
where as usual $P_\D$ is the data distribution. Keeping only the terms
$\x=\tilde \x$ in this expression we see that
\begin{equation}
P(\x)\geq \sum_\h P_{\theta_I}(\x|\h) q(\h|\x) P_\D(\x)
\label{eq:auto-associator-proba}
\end{equation}
Taking the sum of likelihoods over $\x$ in the dataset,
this corresponds to
the criterion maximized by auto-associators when they are
considered from a probabilistic perspective%
\footnote{In all fairness, the training of auto-associators by backpropagation, in probabilistic
terms, consists in the maximization of
$P(\mathbf{y}|\mathbf{x})P_\D(\x) = o(\x)P_\D(\x)$ with
$\mathbf{y}=\mathbf{x}$ \cite{Buntine1991},
where
$o$ is the output function of the neural network. In this perspective,
the hidden variable $\h$ is not considered as a random variable but as an
intermediate value in the form of $P(\mathbf{y}|\mathbf{x})$. Here, we
introduce $\h$ as an intermediate random variable as in \cite{Neal1990}.
The criterion we wish to maximize is then
$P(\mathbf{y}|\mathbf{x})P_\D(\x)=\sum_{\mathbf{h}}f(\mathbf{y}|\mathbf{h})g(\mathbf{h}|\mathbf{x})P_\D(\x)$,
with $\mathbf{y}=\x$.
Training with
backpropagation can be done by sampling $\mathbf{h}$
from $g(\mathbf{h}|\mathbf{x})$ instead of using the raw activation
value of $g(\mathbf{h}|\mathbf{x})$, but in practice we do not sample
$\h$ as it does not significantly  affect performance.}.
Since moreover optimizing over $q$ as in \eqref{eq:bottomtraining} is more general than optimizing over
the particular class $q_\xi$, we conclude that
\emph{the criterion optimized in auto-associators is a lower bound on the
criterion~\eqref{eq:bottomtraining} proposed in Theorem~\ref{thm:main}}.

\NDY{Should we make this a formal statement?}

Keeping only $\x=\tilde \x$ is justified 
if we assume that inference is an approximation of the inverse
of the generative process%
\footnote{which is a reasonable assumption if we are to perform inference
in any meaningful sense of the word.%
}, that is, $P_{\theta_{I}}(\x|\h)q(\h|\tilde \x)\approx0$ as soon as $\x\neq
\tilde \x$.
Thus under this assumption, both criteria will be close, so
that Theorem~\ref{thm:main} provides a justification for auto-encoder
training in this case. On the other hand, this assumption can be strong:
it implies that no $\h$ can be shared between
different $\x$, so that for instance two observations cannot come from
the same underlying latent variable through a random choice. Depending on
the situation this might be unrealistic. Still, using this as a training
criterion might perform well even if the assumption is not fully
satisfied.

Note that we chose the form of $q_{\xi}(\mathbf{h}|\mathbf{x})$
 to match that of the usual
auto-associator, but of course we could have made a different choice
such as using a multilayer network for $q_{\xi}(\mathbf{h}|\mathbf{x})$
or $P_{\theta_{I}}(\mathbf{x}|\mathbf{h})$. These possibilities
will be explored later in this article.

\subsection{From stacked RBMs to auto-encoders: layer-wise consistency}
\label{sec:fromSRBMstoAEs}

We now show how imposing a ``layer-wise consistency'' constraint on
stacked RBM training leads to the training criterion used in
auto-encoders with tied weights. Some of the material here already
appears in \cite{LeRoux2008}.

Let us call \emph{layer-wise consistent} a layer-wise training procedure in
which each layer determines a value $\theta_I$ for its parameters and
a target distribution $P(\h)$ for the upper layers which are mutually
optimal in the following sense:
if $P(\h)$ is used a
the distribution of the hidden variable, then $\theta_I$ is the bottom
parameter value maximizing data log-likelihood.

The BLM training procedure is, by construction, layer-wise consistent.

Let us try to train stacked RBMs in a layer-wise consistent way. Given a
parameter $\theta_I$, SRBMs use the hidden variable distribution
\begin{equation}
Q_{\D,\theta_I}(\h)=\E_{\x\sim P_\D}\, P_{\theta_I}(\h|\x)
\end{equation}
as the target for the next layer,
where $P_{\theta_I}(\h|\x)$ is the RBM distribution of $\h$ knowing $\x$. The
value $\theta_I$ and this distribution over $\h$ are mutually optimal for each
other if the distribution on $\x$ stemming from this distribution on
$\h$, given by
\begin{align}
P^{(1)}_{\theta_I}(\x) &= \E_{\h\sim
Q_{\D,\theta_I}(\h)}\,P_{\theta_I}(\x|\h)
\\&= \sum_\h P_{\theta_I}(\x|\h) \sum_{\tilde \x}
P_{\theta_I}(\h|\tilde \x) P_\D(\tilde \x)
\end{align}
maximizes log-likelihood, i.e., 
\begin{equation}
\label{eq:layerconsis}
\theta_I=\argmin \KL{P_\D(\x)}{P^{(1)}_{\theta_I}(\x)}
\end{equation}
The distribution $P^{(1)}_{\theta_I}(\x)$ is the one obtained from
the data after one ``forward-backward'' step of Gibbs sampling
$\x\to\h\to \x$ (cf.~\cite{LeRoux2008}).

But $P^{(1)}_{\theta_I}(\x)$ is also equal to the
distribution \eqref{eq:wide-ae} for an auto-encoder with tied weights. So
the layer-wise consistency criterion for RBMs coincides with 
tied-weights auto-encoder training, up to the approximation
that in practice auto-encoders retain only the terms $\x=\tilde \x$ in
the above (Section~\ref{sec:AA}).

\newcommand{\PRBM}{P^{\mathrm{RBM}}}
\newcommand{\thetaRBM}{\theta^{\mathrm{RBM}}}

On the other hand, stacked RBM training trains the
parameter $\theta_I$ to approximate the data
distribution by the RBM distribution:
\begin{equation}
\label{eq:SRBMtraining}
\thetaRBM_I= \argmin_{\theta_I} \KL{P_\D(\x)}{\PRBM_{\theta_I}(\x)}
\end{equation}
where $\PRBM_{\theta_I}$ is the probability distribution of the RBM
with parameter $\theta_I$, i.e.\ the probability distribution after an
infinite number of Gibbs samplings from the data.

Thus, stacked RBM training and tied-weight auto-encoder training can be
seen as two approximations to the layer-wise consistent optimization
problem~\eqref{eq:layerconsis}, one using the full RBM distribution
$\PRBM_{\theta_I}$ instead of $P^{(1)}_{\theta_I}$ and
the other using $\x=\tilde\x$ in $P^{(1)}_{\theta_I}$.

It is not clear to us to which extent the criteria \eqref{eq:layerconsis}
using $P^{(1)}_{\theta_I}$ and $\eqref{eq:SRBMtraining}$ using
$\PRBM_{\theta_I}$ actually yield different values for the optimal
$\theta_I$: although these two optimization criteria are different
(unless RBM Gibbs sampling converges in one step), it
might be that the optimal $\theta_I$ is the same (in which case SRBM
training would be layer-wise consistent), though this seems
unlikely.

The $\theta_I$ obtained from the layer-wise consistent criterion
\eqref{eq:layerconsis} using $P^{(1)}_{\theta_I}(\x)$ will always perform
at least as well as standard SRBM training if the upper layers match the
target distribution on $\h$ perfectly---this follows from its very
definition.

Nonetheless, it is not clear whether layer-wise consistency is always a desirable
property. In SRBM training, replacing the RBM distribution over $\h$ with
the one obtained from the data seemingly breaks layer-wise consistency,
but at the same time it always \emph{improves} data log-likelihood (as a
consequence of Proposition~\ref{prop:datainc} below).

For non-layer-wise consistent training procedures,
fine-tuning of $\theta_I$ after more layers have
been trained would improve performance.
Layer-wise consistent procedures may require
this as well in case the
upper layers do not match the target distribution on $\h$ (while
non-layer-wise consistent procedures would require
this even with
perfect upper layer training).

\subsection{Relation to fine-tuning}
\label{sec:finetune}

When the approach presented in 
Section~\ref{sec:Layer-wise-deep-learning} is used recursively to train
deep generative models with several layers using the criterion
\eqref{eq:bottomtraining}, irrecoverable losses may be
incurred at each step: first, because the optimization problem
\eqref{eq:bottomtraining} may be imperfectly solved, and, second, because
each layer was trained using a BLM assumption about what upper layers are
able to do, and subsequent upper layer training may not match the BLM.
Consequently the parameters used for each layer may not be optimal with
respect to each other.
This suggests using a fine-tuning procedure. 

In the case of
auto-encoders, fine-tuning can be done by backpropagation on all
(inference and generative) layers at once (Figure~\ref{fig:fine-tuning}).
This has been shown to improve performance\footnote{The exact likelihood
not being tractable for larger models, it is necessary to rely on a proxy
such as classification performance to evaluate the performance of the
deep network.} in several contexts \cite{Larochelle2009,Hinton2006a},
which confirms the expected gain in performance from recovering earlier
approximation losses.
In principle, there is no limit to the number of layers of an
auto-encoder that could be trained at once by backpropagation, but in
practice training many layers at once results in a difficult optimization
problem with many local minima. Layer-wise training can be seen as a way
of dealing with the issue of local minima, providing a solution close to
a good optimum. This optimum is then reached by global fine-tuning.

\begin{figure}
\begin{centering}
\includegraphics[width=1\columnwidth]{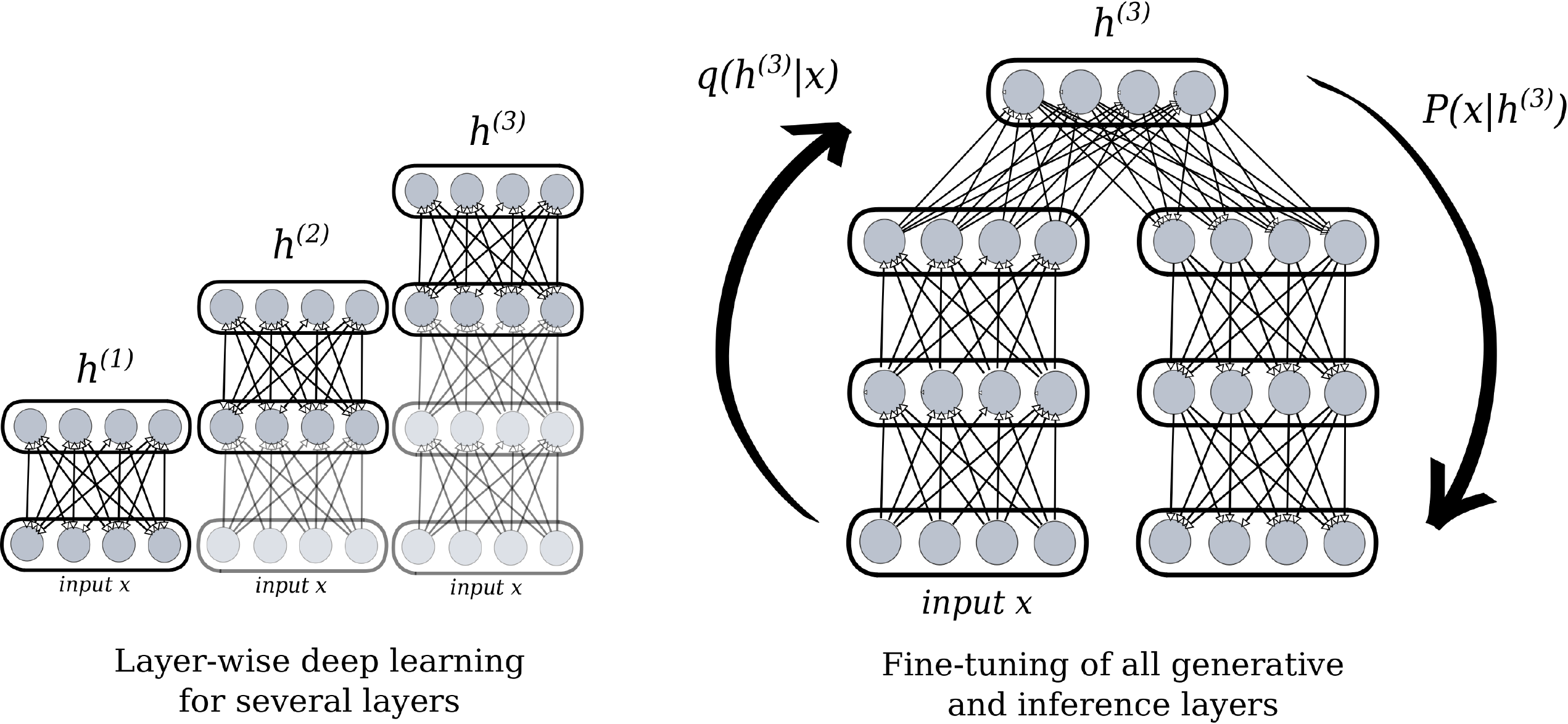}
\par\end{centering}
\caption{\label{fig:fine-tuning}Deep training with fine-tuning.}
\end{figure}

Fine-tuning can be described in the BLM framework as follows: fine-tuning
is the maximization of the BLM upper bound~\eqref{eq:bottomtraining}
where all the layers are considered as one single complex layer
(Figure~\ref{fig:fine-tuning}). In the case of auto-encoders, the
approximation $\x=\tilde\x$ in \eqref{eq:bottomtraining}--\eqref{eq:qd}
is used to help optimization,
 as explained above.

Note that there is no reason to limit fine-tuning to the end of the
layer-wise procedure: fine-tuning may be used at intermediate stages
where any number of layers have been trained.

This fine-tuning procedure was not applied in the experiments below
because our experiments only have one layer for the bottom part of the
model.

As mentioned before, a generative model for the topmost hidden layer
(e.g., an RBM) still needs to be trained to get a complete generative
model after fine-tuning.

\subsection{Data Incorporation: Properties of $q_\D$}
\label{sec:whyqd}

It is not clear why it should be more interesting to work with the
conditional distribution $q(\h|\x)$ and then define a distribution on $\h$
through $q_\D$, rather than working directly with a distribution $Q$ on
$\h$.

The first answer is practical: optimizing on $P_{\theta_I}(\x|\h)$ and on
the distribution of $\h$ simultaneously is just the same as optimizing
over the global network, while on the other hand the currently used deep
architectures provide both $\x|\h$ and $\h|\x$ at the same time.

A second answer is mathematical: $q_\D$ is defined through the dataset
$\D$. Thus by working on $q(\h|\x)$ we can concentrate on the
correspondence between $\h$ and $\x$ and not on the full distribution of
either, and hopefully this correspondence is easier to describe.  Then we
use the dataset $\D$ to provide $q_\D$: so rather than directly
crafting a distribution $Q(\h)$, we use a distribution which
automatically incorporates aspects of the data distribution $\D$ even for
very simple $q$. Hopefully this is better; we now formalize this
argument.

Let us fix the bottom layer parameters $\theta_I$, and consider the
problem of finding the best latent marginal over $\h$, i.e., the $Q$
maximizing the data log-likelihood
\begin{equation}
\label{eq:Qll}
\argmax_Q \E_{\x\sim P_\D}
\left[
\log \sum_\h P_{\theta_I}(\x|\h) Q(\h)
\right]
\end{equation}

Let $Q(\h)$ be a candidate distribution. We might build a better one by
``reflecting the data'' in it. Namely, $Q(\h)$ defines a distribution
$P_{\theta_I}(\x|\h)Q(\h)$ on $(\x,\h)$. This distribution, in turn, defines a
conditional distribution of $\h$ knowing $\x$ in the standard way:
\begin{equation}
\label{eq:defqcond}
Q\cond(\h|\x)\deq \frac{P_{\theta_I}(\x|\h)Q(\h)}{\sum_{\h'}
P_{\theta_I}(\x|\h')Q(\h')}
\end{equation}

We can turn $Q\cond(\h|\x)$ into a new distribution on $\h$ by using the
data distribution:
\begin{equation}
\label{eq:datainc}
Q\cond_\D(\h)\deq \sum_\x Q\cond(\h|\x)P_\D(\x)
\end{equation}
and in general $Q\cond_\D(\h)$ will not coincide with the original
distribution $Q(\h)$, if only because the definition of the former involves
the data whereas $Q$ is arbitrary. We will show that this operation is
always an improvement: $Q\cond_\D(\h)$
always yields a better data log-likelihood than $Q$.

\begin{prop}
\label{prop:datainc}
Let \emph{data incorporation} be the map sending a distribution $Q(\h)$
to $Q\cond_\D(\h)$ defined by~\eqref{eq:defqcond}
and~\eqref{eq:datainc}, where $\theta_I$ is fixed. It has the following properties:
\begin{itemize}
\item Data incorporation always increases the data log-likelihood
\eqref{eq:Qll}.
\item The best latent marginal $\BLM{\theta_I}{\D}$ is a fixed point of
this transformation. 
More precisely, the distributions $Q$ that are fixed points of data incorporation are
exactly the critical points of the data log-likelihood \eqref{eq:Qll}
(by concavity of \eqref{eq:Qll} these critical points are all maxima with the same value). In particular if the BLM is uniquely defined (the
arg max in \eqref{eq:BLM} is unique), then it is the only fixed point of
data incorporation.
\del{More generally, the distributions $Q$ that are critical
points of the log-likelihood~\eqref{eq:Qll} (by concavity all these
critical points are maxima)
are exactly the distributions that are invariant by data
incorporation.}
\item Data incorporation $Q\mapsto Q\cond_\D$ coincides with one step of the
expectation-maximization (EM) algorithm
to maximize data log-likelihood by optimizing over $Q$ for a fixed
$\theta_I$, with $\h$ as the hidden variable.
\end{itemize}
\end{prop}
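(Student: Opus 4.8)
The plan is to recognize data incorporation as exactly one iteration of the EM algorithm and to read off all three bullets from that identification. Write $L(Q)\deq \E_{\x\sim P_\D}\!\left[\log \sum_\h P_{\theta_I}(\x|\h)Q(\h)\right]$ for the objective \eqref{eq:Qll}. The engine of the whole argument is the standard variational decomposition: for any candidate $\tilde Q(\h)$ and any auxiliary conditional $R(\h|\x)$,
\[
\log \sum_\h P_{\theta_I}(\x|\h)\tilde Q(\h)
= \sum_\h R(\h|\x)\log\frac{P_{\theta_I}(\x|\h)\tilde Q(\h)}{R(\h|\x)}
+ \KL{R(\h|\x)}{\tilde Q\cond(\h|\x)},
\]
where $\tilde Q\cond(\h|\x)$ is the posterior built from $\tilde Q$ as in \eqref{eq:defqcond}; I will abbreviate the first term $\mathcal L(R,\tilde Q;\x)$. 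Averaging over $\x\sim P_\D$ gives $L(\tilde Q)=\E_\x \mathcal L(R,\tilde Q;\x)+\E_\x\KL{R(\h|\x)}{\tilde Q\cond(\h|\x)}$, in which the KL term is nonnegative and vanishes precisely when $R(\h|\x)=\tilde Q\cond(\h|\x)$.

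For the EM identification (third bullet) I take the E-step to fix $R(\h|\x)=Q\cond(\h|\x)$, the posterior under the current $Q$, and the M-step to maximize $\E_\x\mathcal L(Q\cond(\cdot|\x),\tilde Q;\x)$ over $\tilde Q$. Expanding $\mathcal L$, the only $\tilde Q$-dependent contribution is the cross-entropy $\sum_\h\!\big(\sum_\x Q\cond(\h|\x)P_\D(\x)\big)\log \tilde Q(\h)=\sum_\h Q\cond_\D(\h)\log \tilde Q(\h)$, which by Gibbs' inequality is maximized over normalized $\tilde Q$ exactly at $\tilde Q=Q\cond_\D$; hence one EM step sends $Q$ to $Q\cond_\D$, which is the definition of data incorporation. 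Monotonicity (first bullet) is then the usual two-inequality chain, needing no further computation: $L(Q)=\E_\x\mathcal L(Q\cond(\cdot|\x),Q;\x)\leq \E_\x\mathcal L(Q\cond(\cdot|\x),Q\cond_\D;\x)\leq L(Q\cond_\D)$, where the first equality uses that the KL term vanishes at $\tilde Q=Q$, the middle inequality is M-step optimality of $Q\cond_\D$, and the last drops the nonnegative KL term in the decomposition applied to $\tilde Q=Q\cond_\D$.

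For the fixed-point characterization (second bullet) I would compute the stationarity condition of the constrained problem $\max_Q L(Q)$ subject to $\sum_\h Q(\h)=1$. With a Lagrange multiplier $\lambda$, stationarity reads $\sum_\x P_\D(\x)\,P_{\theta_I}(\x|\h)/\big(\sum_{\h'}P_{\theta_I}(\x|\h')Q(\h')\big)=\lambda$ on the support of $Q$; multiplying through by $Q(\h)$ turns the left-hand side into $\sum_\x P_\D(\x)Q\cond(\h|\x)=Q\cond_\D(\h)$, so the condition becomes $Q\cond_\D(\h)=\lambda\,Q(\h)$, and summing over $\h$ forces $\lambda=1$. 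Thus $Q$ is a critical point of $L$ if and only if $Q\cond_\D=Q$, i.e.\ iff $Q$ is a fixed point of data incorporation. Since \eqref{eq:Qll} is concave in $Q$, every critical point is a global maximum with the same value; in particular the BLM $\BLM{\theta_I}{\D}$ is a fixed point, and is the unique one whenever the arg max in \eqref{eq:BLM} is unique.

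The main obstacle is the boundary behaviour in the last bullet: the Lagrange computation is clean on the interior $\{\h:Q(\h)>0\}$, but one must observe that data incorporation preserves the support of $Q$ (because $Q\cond(\h|\x)\propto Q(\h)$), so restricting the stationarity analysis to that support is legitimate and the KKT inequalities on its complement are automatically consistent. Everything else reduces to the ELBO decomposition above together with the concavity of \eqref{eq:Qll} already noted after the definition of the BLM.
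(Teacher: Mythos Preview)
Your proof is correct and follows essentially the same logical structure as the paper: identify data incorporation with one EM step, deduce monotonicity from the general EM ascent property, and characterize fixed points via the first-order stationarity condition for \eqref{eq:Qll}. The only differences are presentational: you package the EM step via the ELBO decomposition and Gibbs' inequality, whereas the paper writes out the expected complete log-likelihood explicitly and finds the M-step maximizer by a variational derivative; similarly, you phrase the stationarity computation through a Lagrange multiplier while the paper computes the first variation directly---but both arrive at the same identity $Q\cond_\D(\h)=\lambda\,Q(\h)$ and force $\lambda=1$ by normalization. Your brief remark on support preservation and KKT conditions at the boundary is a point the paper leaves implicit.
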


This can be seen as a justification for constructing the hidden variable
model $Q$ through an inference model $q(\h|\x)$ from the data, which is
the basic approach of auto-encoders and the BLM.
\NDY{The version below was too verbose.}
\del{
Thus, describing the distribution of $\h$ as $q_\D$ for some conditional
distribution $q(\h|\x)$, rather than as an arbitrary distribution, means
we are ``half a step ahead'' in data incorporation. So, even though any
distribution $Q(\h)$ can be represented as $q_\D$ for some $q$ (and
for generic $\theta_I$ the map $Q\mapsto Q\cond_\D(\h)$ is surjective), the
map $Q\mapsto Q\cond_\D$ is ``contractive'' in the sense that it
makes the distribution closer to the optimum, so that if we look for $Q$
of the form $Q\cond_\D$ we are looking in
an interesting part of the search space. In that sense, even starting
with a simple conditional distribution $q(\h|\x)$, the distribution
$q_\D(\h)$ may already have good properties with respect to the data.

This can be seen as further justification for the basic approach of
auto-encoders to build a model for $\h$ knowing $\x$ and feed it the data
distribution.
}

\begin{proof}
Let us first prove the statement about expectation-maximization. Since
the EM algorithm is known to increase data log-likelihood at each step \cite{Dempster1977,Wu1983},
this will prove the first statement as well.

For simplicity let us assume that the data distribution is uniform over
the dataset $\D=(\x_1,\ldots,\x_n)$. (Arbitrary data weights can be approximated
by putting the same observation several times into the data.) The hidden
variable of the EM algorithm will be $\h$, and the parameter over which
the EM optimizes will be the distribution $Q(\h)$ itself. In particular
we keep $\theta_I$ fixed. The distributions $Q$ and $P_{\theta_I}$ define
a distribution $P(\x,\h)\deq P_{\theta_I}(\x|\h)Q(\h)$ over pairs $(\x,\h)$. This
extends to a distribution over $n$-tuples of observations:
\[
P((\x_1,\h_1),\ldots,(\x_n,\h_n))=\prod_i P_{\theta_I}(\x_i|\h_i)Q(\h_i)
\]
and by summing over the states of the hidden variables
\[
P(\x_1,\ldots,\x_n)=\sum_{(\h_1,\ldots,\h_n)}
P((\x_1,\h_1),\ldots,(\x_n,\h_n))
\]

\newcommand{\vx}{\vec{\x}}
\newcommand{\vh}{\vec{\h}}
Denote $\vx=(\x_1,\ldots,\x_n)$ and
$\vh=(\h_1,\ldots,\h_n)$.
One step of the EM algorithm operating with the distribution $Q$ as
parameter, is defined as transforming the current distribution $Q_t$ into
the new distribution
\[
Q_{t+1}=\argmax_{Q} \sum_{\vh} P_t(\vh|\vx)\log P(\vx,\vh)
\]
where $P_t(\vx,\vh)=P_{\theta_I}(\vx|\vh)Q_t(\vh)$ is the distribution
obtained by using $Q_t$ for $\h$, and $P$ the one obtained from the
distribution $Q$ over which we optimize.
\NDL{\emph{proofchecking: }Ok, I recognize EM.}
Let us follow a standard argument for EM
algorithms on $n$-tuples of independent observations:
\begin{align*}
\sum_{\vh} P_t(\vh|\vx)\log P(\vx,\vh)
&=
\sum_{\vh} P_t(\vh|\vx) \log \prod_i P(\x_i,\h_i)
\\&=
\sum_i \sum_{\vh} P_t(\vh|\vx) \log P(\x_i,\h_i)
\end{align*}
Since observations are independent, $P_t(\vh|\vx)$ decomposes as a
product and so
\begin{align*}
\sum_i \sum_{\vh}(\log P(\x_i,\h_i)) P_t(\vh|\vx)
&=
\sum_i \sum_{\h_1,\ldots,\h_n}  (\log P(\x_i,\h_i)) \prod_j P_t(\h_j|\x_j)
\\&=\sum_i \sum_{\h_i}(\log P(\x_i,\h_i)) P_t(\h_i|\x_i)\prod_{j\neq
i}\sum_{\h_j}P_t(\h_j|\x_j)
\end{align*}
\NDL{\emph{proofchecking: }
Is this the argument to transform $\sum \prod$ in $\prod \sum$ ?
\[
=\sum_i \sum_{\h_i}(\log P(\x_i,\h_i)) P_t(\h_i|\x_i)\sum_{\h_1\neq \h_i}P_t(\h_1|\x_j)\sum_{\h_2\neq \h_i}P_t(\h_2|\x_j)\dots\sum_{\h_n\neq \h_i}P_t(\h_n|\x_j)
\]
\[
=\sum_i \sum_{\h_i}(\log P(\x_i,\h_i)) P_t(\h_i|\x_i)\prod_{j\neq
i}\sum_{\h_j}P_t(\h_j|\x_j)
\]
if yes then i'm ok.
}\NDY{Yes except that by $\h_1\neq \h_i$ you don't mean that $\h_1$ and
$\h_i$ have different values, but that you include this term if $i\neq 1$}
but of course $\sum_{\h_j} P_t(\h_j|\x_j)=1$ so that finally
\NDY{Hopefully simpler version}

\begin{align*}
\sum_{\vh} P_t(\vh|\vx)\log P(\vx,\vh)&=\sum_i \sum_{\h_i} (\log
P(\x_i,\h_i)) P_t(\h_i|\x_i)
\\&=\sum_\h \sum_i (\log
P(\x_i,\h)) P_t(\h|\x_i)
\\&=\sum_\h\sum_i (\log P_{\theta_I}(\x_i|\h)+\log Q(\h))P_t(\h|\x_i)
\end{align*}
because $P(\x,\h)=P_{\theta_I}(\x|\h)Q(\h)$. We have to maximize this
quantity over $Q$. The first term does not depend on $Q$ so we only have to
maximize $\sum_\h\sum_i (\log Q(\h))P_t(\h|\x_i)$.

This latter quantity is concave in $Q$, so to find the maximum it is
sufficient to exhibit a point where the derivative w.r.t.\ $Q$ (subject to
the constraint that $Q$ is a probability distribution) vanishes.

Let us compute this derivative.
If we replace $Q$ with $Q+\delta Q$ where $\delta Q$ is
infinitesimal,
the variation of the quantity to be maximized is
\[
\sum_\h \sum_i (\delta \log Q(\h)) P_t(\h|\x_i)=
\sum_\h \frac{\delta Q(\h)}{Q(\h)} \sum_i P_t(\h|\x_i)
\]
\NDL{\emph{proofchecking: } looks right.}
Let us take $Q=(Q_t)\cond_\D$. Since we assumed for simplicity that the
data distribution $\D$ is uniform over the sample this $(Q_t)\cond_\D$ is
\[
Q(\h)=(Q_t)\cond_\D(\h)=\frac1n \sum_i P_t(\h|\x_i)
\]
so that the variation of the quantity to be maximized is
\[
\sum_\h \frac{\delta Q(\h)}{Q(\h)} \sum_i P_t(\h|\x_i)=n\sum_\h \delta
Q(\h)
\]
But since $Q$ and $Q+\delta Q$ are both probability distributions, both
sum to $1$ over $\h$ so that $\sum_\h \delta
Q(\h)=0$. This proves that this choice of $Q$ is an extremum of the
quantity to be maximized.
\NDL{\emph{proofchecking: }Ok, calculus of variations is cool.}

This proves the last statement of the proposition. As mentioned above, it
implies the first by the general properties of EM.
\NDL{\emph{proofchecking: } Ok, good. $(Q_t)\cond_\D$ is an optimum of the EM step.}
Once the first
statement is proven, the best latent marginal $\hat Q_{\theta_I,\D}$ has
to be a fixed point of data incorporation, because otherwise we would get
an even better distribution thus contradicting the definition of the BLM.
\NDL{Agreed.}

The only point left to prove is the equivalence between critical points
of the log-likelihood and fixed points of $Q\mapsto Q\cond_\D$. This is
a simple instance of maximization under constraints, as follows.
Critical points of the data log-likelihood are those for which the
log-likelihood does not change at first order when $Q$ is replaced with
$Q+\delta Q$ for small $\delta Q$. The only constraint on $\delta Q$ is
that $Q+\delta Q$ must still be a probability distribution, so that
$\sum_\h \delta Q(\h)=0$ because both $Q$ and $Q+\delta Q$ sum to $1$.

The first-order variation of
log-likelihood is
\begin{align*}
\delta \sum_i \log P(\x_i)
&=
\delta \sum_i \log\left(
\sum_\h P_{\theta_I}(\x_i|\h)Q(\h)
\right)
\\&=\sum_i \frac{\delta \sum_\h P_{\theta_I}(\x_i|\h)Q(\h)}{\sum_\h
P_{\theta_I}(\x_i,\h)Q(\h)}
\\&=\sum_i \frac{\sum_\h P_{\theta_I}(\x_i|\h) \delta Q(\h)}{P(\x_i)}
\\&=\sum_\h \delta Q(\h) \sum_i \frac{P_{\theta_I}(\x_i|\h)}{P(\x_i)}
\\&=\sum_\h \delta Q(\h) \sum_i \frac{P(\x_i,\h)/Q(\h)}{P(\x_i)}
\\&=\sum_\h \delta Q(\h) \sum_i \frac{P(\h|\x_i)}{Q(\h)}
\end{align*}
This must vanish for any $\delta Q$ such that $\sum_\h \delta Q(\h)=0$.
By elementary linear algebra (or Lagrange multipliers) this occurs if and only if
$\sum_i \frac{P(\h|\x_i)}{Q(\h)}$ does not
depend on $\h$, i.e., if and only if $Q$ satisfies $Q(\h)=C \sum_i
P(\h|\x_i)$. Since $Q$ sums to $1$ one finds $C=\frac1n$. Since all along $P$ is the probability distribution on $\x$
and $\h$ defined by $Q$ and $P_{\theta_I}(\x|\h)$, namely,
$P(\x,\h)=P_{\theta_I}(\x|\h)Q(\h)$, by definition we have
$P(\h|\x)=Q\cond(\h|\x)$ so that the condition $Q(\h)=\frac{1}{n} \sum_i
P(\h|\x_i)$ exactly means
that $Q=Q\cond_\D$, hence the equivalence between critical points of log-likelihood
and fixed points of data incorporation.
\NDL{\emph{proofchecking: } Still a bit unsure about a few details but looks ok. I'll reread it to be sure.}
\end{proof}

\section{Applications and Experiments}
\label{sec:exp}

Given the approach described above, we now consider
several applications for which we evaluate the method empirically.

The intractability of the log-likelihood for deep networks makes direct
comparison of several methods difficult in general. Often the
evaluation is done by using latent variables as features for a
classification task and by direct visual comparison of samples generated
by the model \cite{Larochelle2009,Salakhutdinov2009}.
Instead,
we introduce two new datasets which are simple enough for the true
log-likelihood to be computed explicitly, yet
complex enough to be relevant to deep learning.

We first check that these two datasets are indeed deep.

Then we try to assess the impact of the various approximations from
theory to practice, on the validity of the approach.

We then apply our method to the training of deep
belief networks using properly modified auto-encoders, and show that
the method outperforms current state of the art.

We also explore the use of the BLM upper bound to perform layer-wise
hyper-parameter selection and show that it gives an accurate prediction of
the future log-likelihood of models.

\subsection{Low-Dimensional Deep Datasets}

We now introduce two new deep datasets of low dimension. In order
for those datasets to give a reasonable picture of what happens
in the general case, we first have to make sure that they are relevant to
deep learning, using the following approach:
\begin{enumerate}
\item In the spirit of \cite{Bergstra2012}, we train 1000
RBMs using CD-1 \cite{Hinton2002} on the dataset $\mathcal{D}$, and evaluate the log-likelihood
of a disjoint validation dataset $\mathcal{V}$ under each model.
\item We train 1000 2-layer deep networks using stacked RBMs trained with
CD-1 on $\mathcal{D}$, and evaluate the log-likelihood of $\mathcal{V}$
under each model.
\item We compare the performance of each model at equal number of parameters.
\item If deep networks consistently outperform single RBMs for the same
number of parameters, the dataset is considered to be deep.
\end{enumerate}
The comparison at equal number of parameters is justified by one of the main hypotheses of deep learning,
namely that deep architectures are capable of representing some functions more compactly than shallow architectures \cite{Bengio2007b}.

Hyper-parameters taken into account for hyper-parameter random search are the hidden
layers sizes, CD learning rate and number of CD epochs. The corresponding
priors are given in Table~\ref{tab:hp-priors}. In order not to
give an obvious head start to deep networks, the possible layer
sizes are chosen so that the maximum number of parameters for the single RBM
and the deep network are as close as possible.

\begin{table}
\begin{centering}
\begin{tabular}{|l|l|}
\hline 
Parameter & Prior\tabularnewline
\hline 
\hline 
RBM hidden layer size & 1 to 19\tabularnewline
\hline 
Deep Net hidden layer 1 size & 1 to 16\tabularnewline
\hline 
Deep Net hidden layer 2 size & 1 to 16\tabularnewline
\hline 
inference hidden layer size & 1 to 500\tabularnewline
\hline 
CD learn rate & $\log U(10^{-5},5\times10^{-2})$\tabularnewline
\hline 
BP learn rate & $\log U(10^{-5},5\times10^{-2})$\tabularnewline
\hline 
CD epochs & $20\times(10000/N)$\tabularnewline
\hline 
BP epochs & $20\times(10000/N)$\tabularnewline
\hline 
ANN init $\sigma$ & $U(0,1)$\tabularnewline
\hline 
\end{tabular}
\par\end{centering}

\caption{\label{tab:hp-priors}Search space for hyper-parameters when using
random search for a dataset of size $N$.}
\end{table}

\subsubsection*{\noun{Cmnist} dataset}

The \noun{Cmnist} dataset is a low-dimensional variation on the
\noun{Mnist} dataset \cite{LeCun1998a}, containing 12,000 samples of dimension 100. The
full dataset is split into training, validation and test sets of 4,000
samples each. The dataset is obtained by taking a $10\times10$ image at the
center of each \noun{Mnist} sample and using the values in {[}0,1{]}
as probabilities. The first 10 samples of the \noun{Cmnist} dataset
are shown in Figure~\ref{fig:cmnist-sample}.

\begin{figure}[h]
\begin{centering}
\includegraphics[width=0.5\columnwidth]{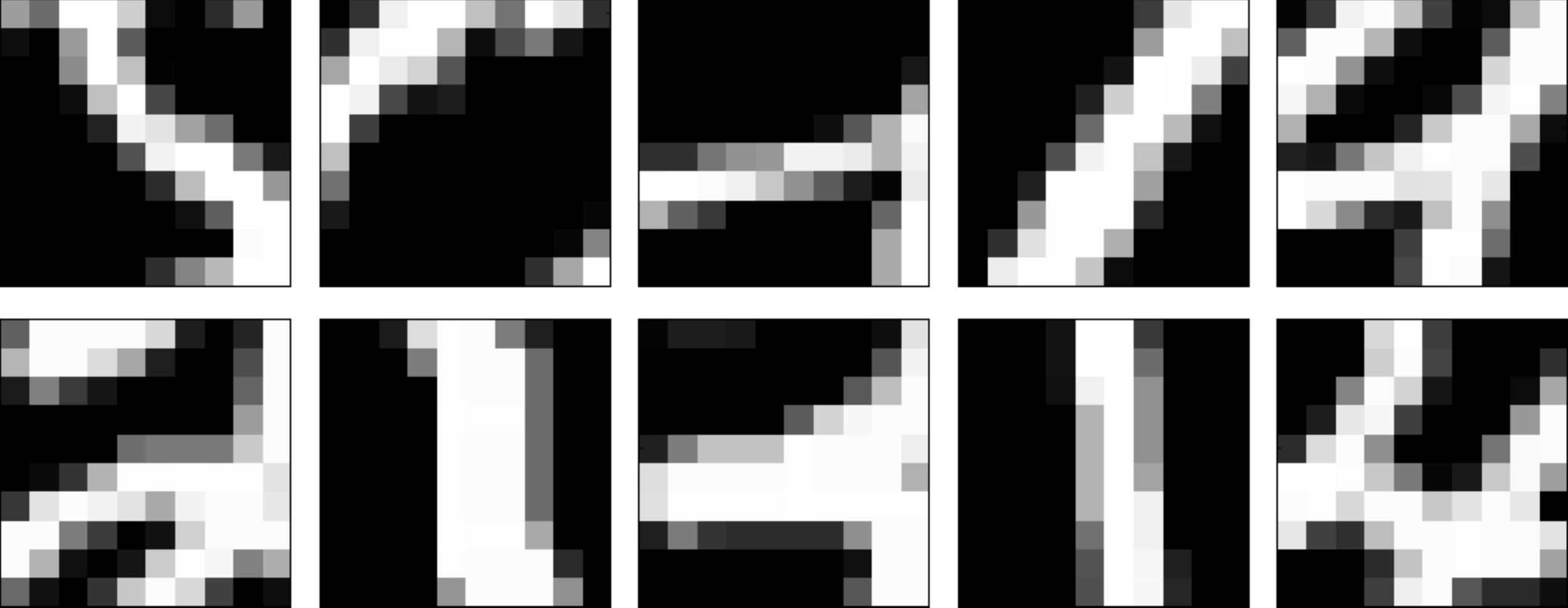}
\par\end{centering}

\caption{\label{fig:cmnist-sample}First 10 samples of the \noun{Cmnist} dataset.}
\end{figure}

We propose two baselines to which to compare the log-likelihood values
of models trained
on the \noun{Cmnist} dataset:
\begin{enumerate}
\item The uniform coding scheme: a model which gives equal probability to
all possible binary $10\times10$ images. The log-likelihood of each
sample is then $-100$ bits, or $-69.31$ nats.
\item The independent Bernoulli model in which each pixel is given an independent
Bernoulli probability. The model is trained on the training set.
The log-likelihood of the validation set is $-67.38$ nats per
sample.
\end{enumerate}
The comparison of the log-likelihood of stacked RBMs with that of
single RBMs is presented in Figure~\ref{fig:cmnist-cmp} and confirms
that the \noun{Cmnist} dataset is deep.

\begin{figure}[h]
\begin{centering}
\includegraphics[width=0.9\columnwidth]{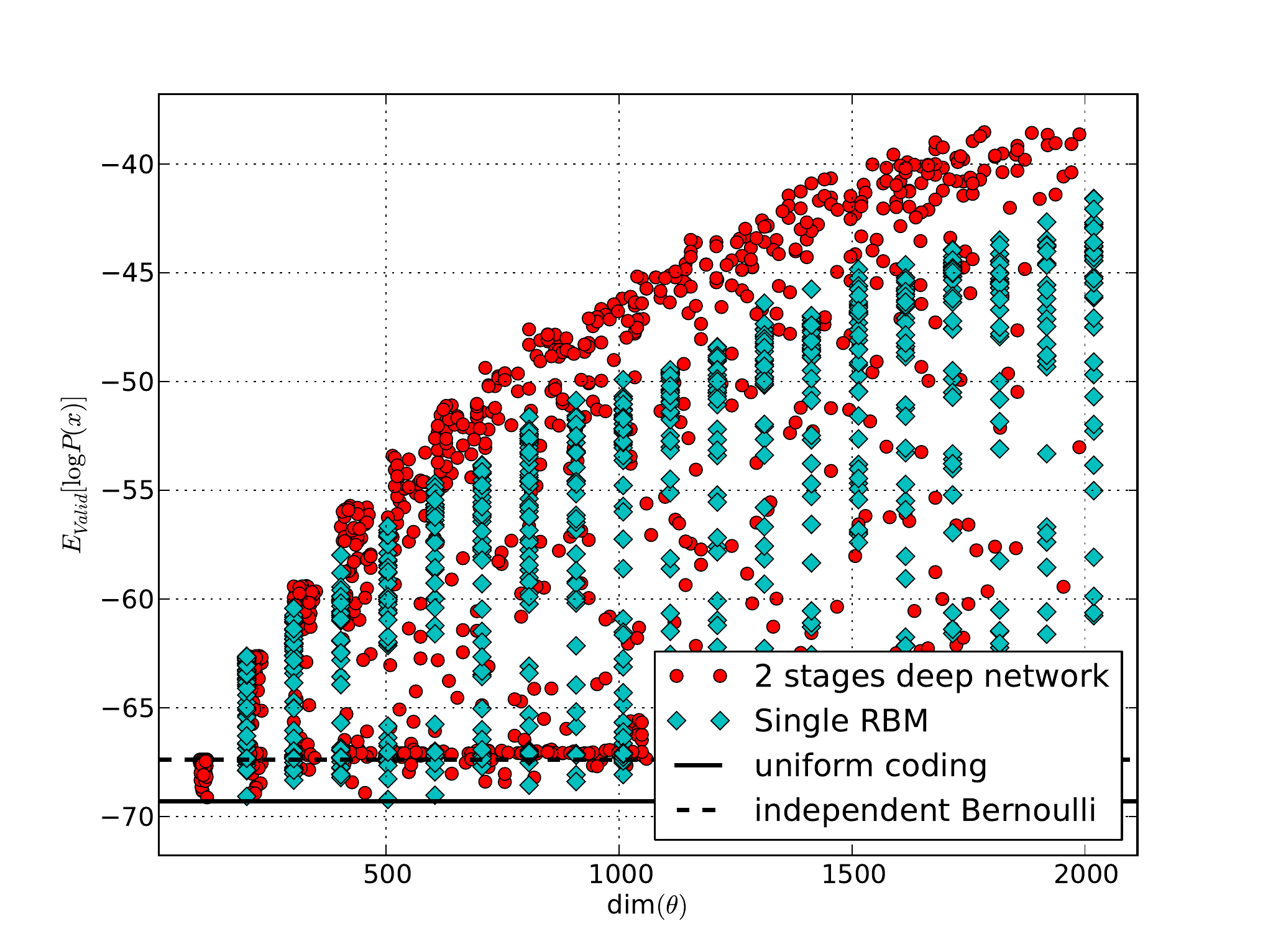}
\par\end{centering}

\caption{\label{fig:cmnist-cmp}Checking that \noun{Cmnist} is deep:
log-likelihood of the validation
dataset $\mathcal{V}$ under RBMs and SRBM deep detworks selected
by hyper-parameter random search, as a function of the number of
parameters $\dim(\theta)$.}
\end{figure}

\subsubsection*{\noun{Tea} dataset}

The \noun{Tea} dataset is based on the idea of learning an invariance for
the amount of liquid in several containers: a teapot and 5 teacups.  It
contains 243 distinct samples which are then distributed into a training,
validation and test set of 81 samples each. The dataset consists of
$10\times 10$
images in which the left part of the image represents a (stylized) teapot
of size $10\times5$. The right part of the image represents 5 teacups of size
$2\times 5$. The liquid is represented by ones and always lies at the bottom of
each container. The total amount of liquid is always equal to the
capacity of the teapot, i.e., there are always 50 ones and 50 zeros in any
given sample. The first 10 samples of the \noun{Tea} dataset are shown in
Figure~\ref{fig:tea-sample}.{\NDY{Invert colors/up and down? I have a lot of difficulty picturing black
air and white liquid.}}

\begin{figure}[h]
\begin{centering}
\includegraphics[width=0.5\columnwidth]{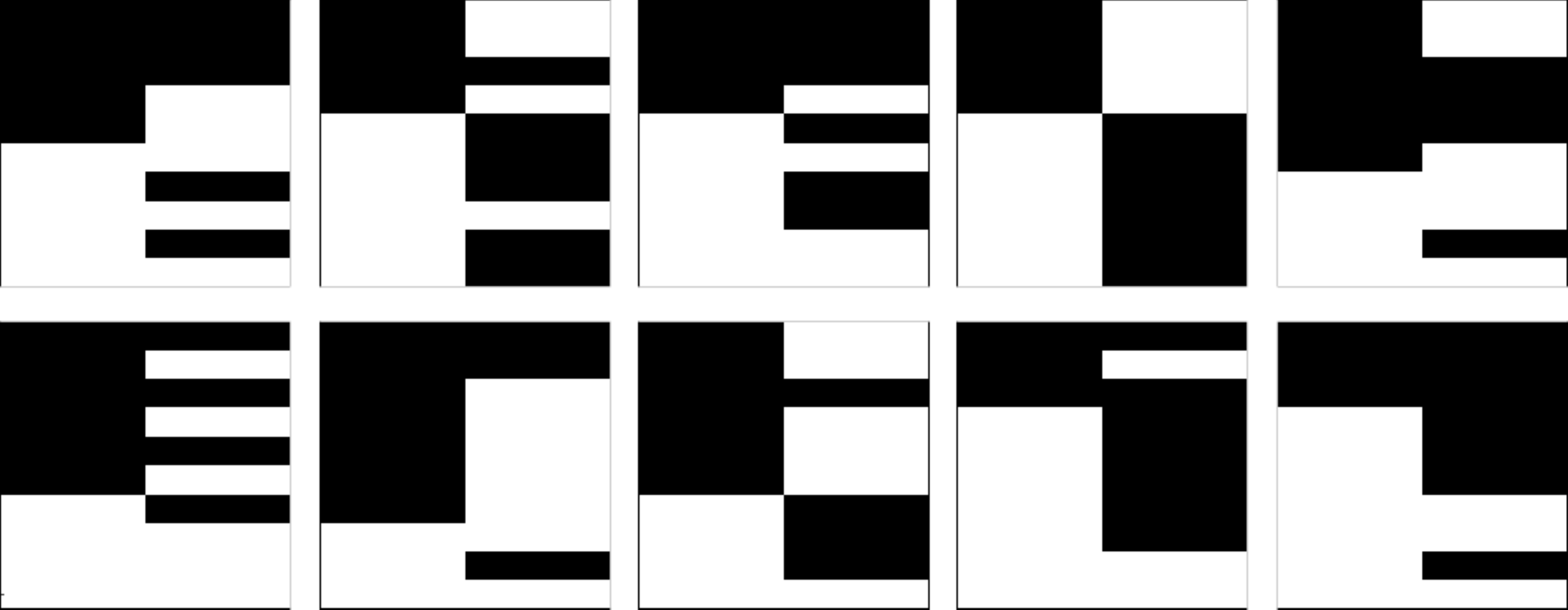}
\par\end{centering}

\caption{\label{fig:tea-sample}First 10 samples of the \noun{Tea}
dataset.}
\end{figure}

In order to better interpret the log-likelihood of models trained
on the \noun{Tea} dataset, we propose 3 baselines:
\begin{enumerate}
\item The uniform coding scheme: the baseline is the same as for the \noun{Cmnist}
dataset: $-69.31$ nats.
\item The independent Bernoulli model, adjusted on the training set. The log-likelihood of the
validation set is $-49.27$ nats per sample.
\item The perfect model in which all $243$ samples of the full dataset
(consituted by concatenation of the training, validation and test
sets) are given the probability $\frac{1}{243}$. The expected log-likelihood
of a sample from the validation dataset is then $\log(\frac{1}{243})=-5.49$
nats.
\end{enumerate}
The comparison of the log-likelihood of stacked RBMs and
single RBMs is presented in Figure~\ref{fig:tea-cmp} and confirms
that the \noun{Tea} dataset is deep.

\begin{figure}[h]
\begin{centering}
\includegraphics[width=0.9\columnwidth]{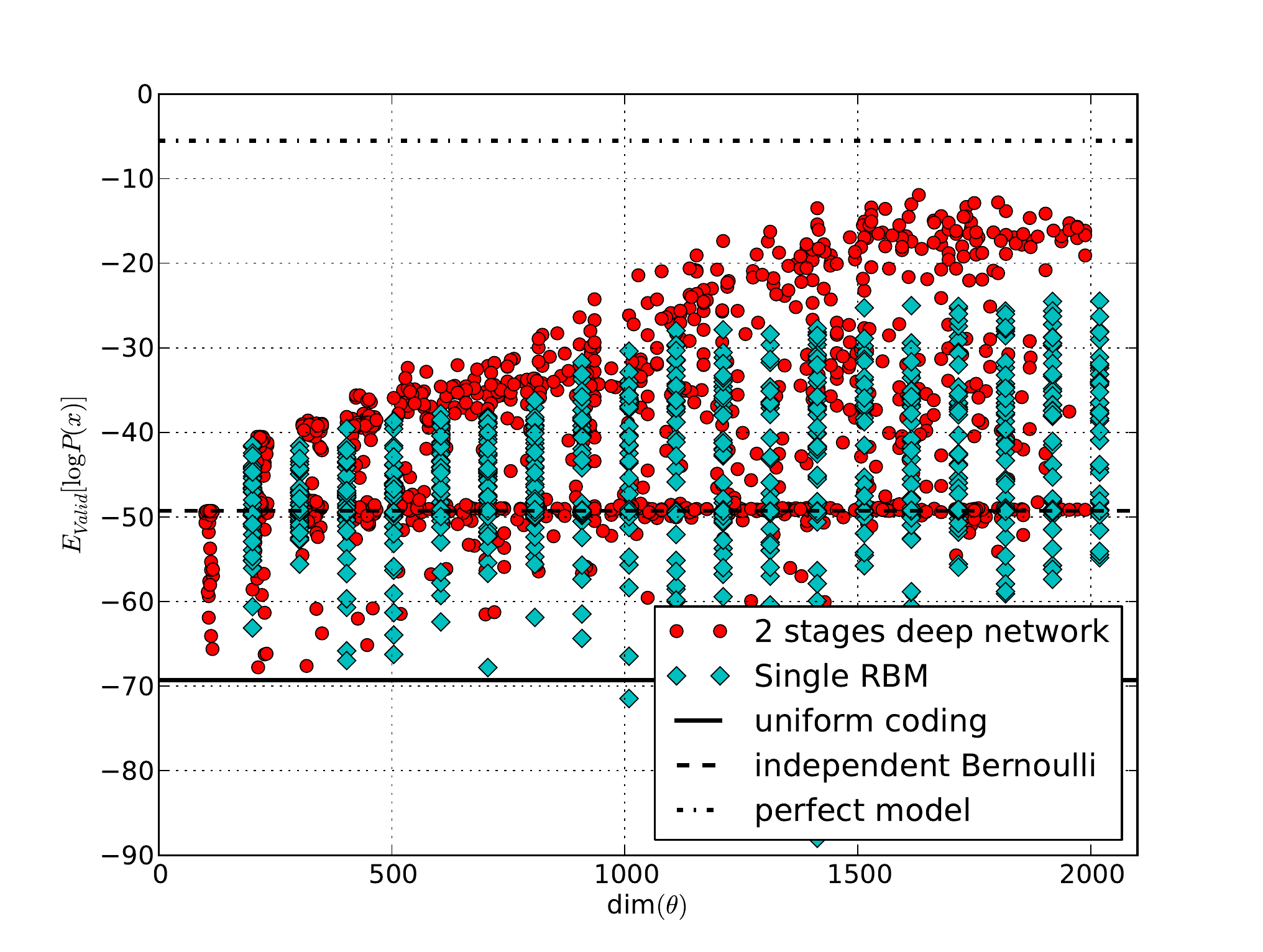}
\par\end{centering}

\caption{\label{fig:tea-cmp}Checking that \noun{Tea} is deep:
log-likelihood of the validation
dataset $\mathcal{V}$ under RBMs and SRBM deep networks selected
by hyper-parameter random search, as a function of the number of parameters $\dim(\theta)$.}
\end{figure}

\subsection{Deep Generative Auto-Encoder Training}

A first application of our approach is the training of a deep \emph{generative}
model using auto-associators. To this end, we propose to train lower
layers using auto-associators and to use an RBM for the generative top
layer model. \NDY{Keep in mind for later the possibility to just use a Bernoulli with one more
layer on top, which has the same number of parameters and is neater.}

We will compare three kinds of deep architectures: standard
auto-encoders with an RBM on top (vanilla AEs), the new \emph{auto-encoders with rich
inference} (AERIes) suggested
by our framework, also with an RBM on top, and, for comparison, stacked
restricted Boltzmann machines (SRBMs). 
All the
models used in this study use the same final generative model class for
$P(\x|\h)$ so that the comparison focuses on the training procedure, on equal ground.
SRBMs are considered the
state of the art \cite{Hinton2006,Bengio2007a}---although performance can
be increased using
richer models \cite{Bengio2012}, our
focus here is not on the model but on the layer-wise training procedure
for a given model class.

In ideal circumstances, we would have compared the log-likelihood
obtained for each training algorithm with the optimum of a deep learning
procedure such as the full gradient ascent procedure
(Section~\ref{sec:Layer-wise-deep-learning}). Instead, since this
ideal deep learning procedure is intractable, SRBMs serve as a reference.

The new AERIes are auto-encoders modified after the following remark: 
the complexity of the inference model used for $q(\mathbf{h}|\mathbf{x})$ can
be increased safely without risking overfit and loss of generalization
power,
because $q$ is
not part of the final generative model, and is used only as a tool for
optimization of the generative model parameters. This would suggest that the
complexity of $q$ could be greatly increased with only positive
consequences on the performance of the model.

AERIes exploit this possibility by having, in each layer, a
modified auto-associator with
two hidden layers instead of one: $\x\to \h'\to \h \to \x$. The
generative part $P_{\theta_{I}}(\mathbf{x}|\mathbf{h})$ will be
equivalent to that of a regular auto-associator, but the inference part
$q(\h|\x)$
will have greater representational power because it includes the hidden
layer $\h'$ (see Figure~\ref{fig:dgaa+-training}).

We will also use the more usual auto-encoders composed of
auto-associators with one hidden layer and tied weights, commonly
encountered in the literature (vanilla AE).

For all models, the deep architecture will be of depth $2$. The stacked
RBMs will be made of two ordinary RBMs.
For AERIes and vanilla AEs, 
the lower part is made of a single auto-associator (modified for AERies), and the
generative top part is an RBM. (Thus they have one layer less than
depicted for the sake of generality in Figures~\ref{fig:dgaa-training}
and~\ref{fig:dgaa+-training}.) For AERIes and vanilla AEs
the lower part of the model is
trained using the usual backpropagation algorithm with cross-entropy
loss, which performs gradient ascent for the probability of \eqref{eq:auto-associator-proba}.
The top RBM is then trained to maximize \eqref{eq:toptraining}.

The competitiveness of each model will be evaluated through a comparison
in log-likelihood over a validation set distinct from the training set.
Comparisons are made for a given identical number of parameters of
the generative model%
\footnote{Because we only consider the generative models obtained, $q$
is never taken into account in the number of parameters of an
auto-encoder or SRBM.
However, the parameters of the top RBM are taken into account as they
are a necessary part of the generative model.%
}. Each model will be given equal chance to find a good optimum
in terms of the number of evaluations in a hyper-parameter selection
procedure by random search.

When implementing the training procedure proposed in
Section~\ref{sec:Layer-wise-deep-learning}, several approximations are
needed.
An important one, compared to Theorem~\ref{thm:main}, is that the distribution
$q(\mathbf{h}|\mathbf{x})$ will not really be
trained over all possible conditional distributions for $\h$ knowing
$\x$. Next, training of the upper layers will of course fail to
reproduce the BLM perfectly. Moreover, auto-associators use an
$\x=\tilde\x$ approximation, cf.\ \eqref{eq:auto-associator-proba}. We will study the effect of these
approximations.

Let us now provide more details for each model.

\paragraph{Stacked RBMs.}

For our comparisons, 1000 stacked RBMs were trained using the
procedure from \cite{Hinton2006}. We used random search on the hyper-parameters,
which are: the sizes of the hidden layers, the CD learning
rate, and the number of CD epochs.

\paragraph{Vanilla auto-encoders.}

The general training algorithm for vanilla auto-encoders is depicted in Figure~\ref{fig:dgaa-training}.
First an auto-associator is trained to maximize the adaptation of
the BLM upper bound for auto-associators presented in \eqref{eq:auto-associator-proba}.
The maximization procedure itself is done with the backpropagation
algorithm and cross-entropy loss. The inference weights are tied to
the generative weights so that
$\mathbf{W}_\mathrm{gen}=\mathbf{W}_\mathrm{inf}^{\top}$
as is often the case in practice. An ordinary RBM is used as a generative
model on the top layer.

\begin{figure}
\begin{centering}
\includegraphics[width=0.8\columnwidth]{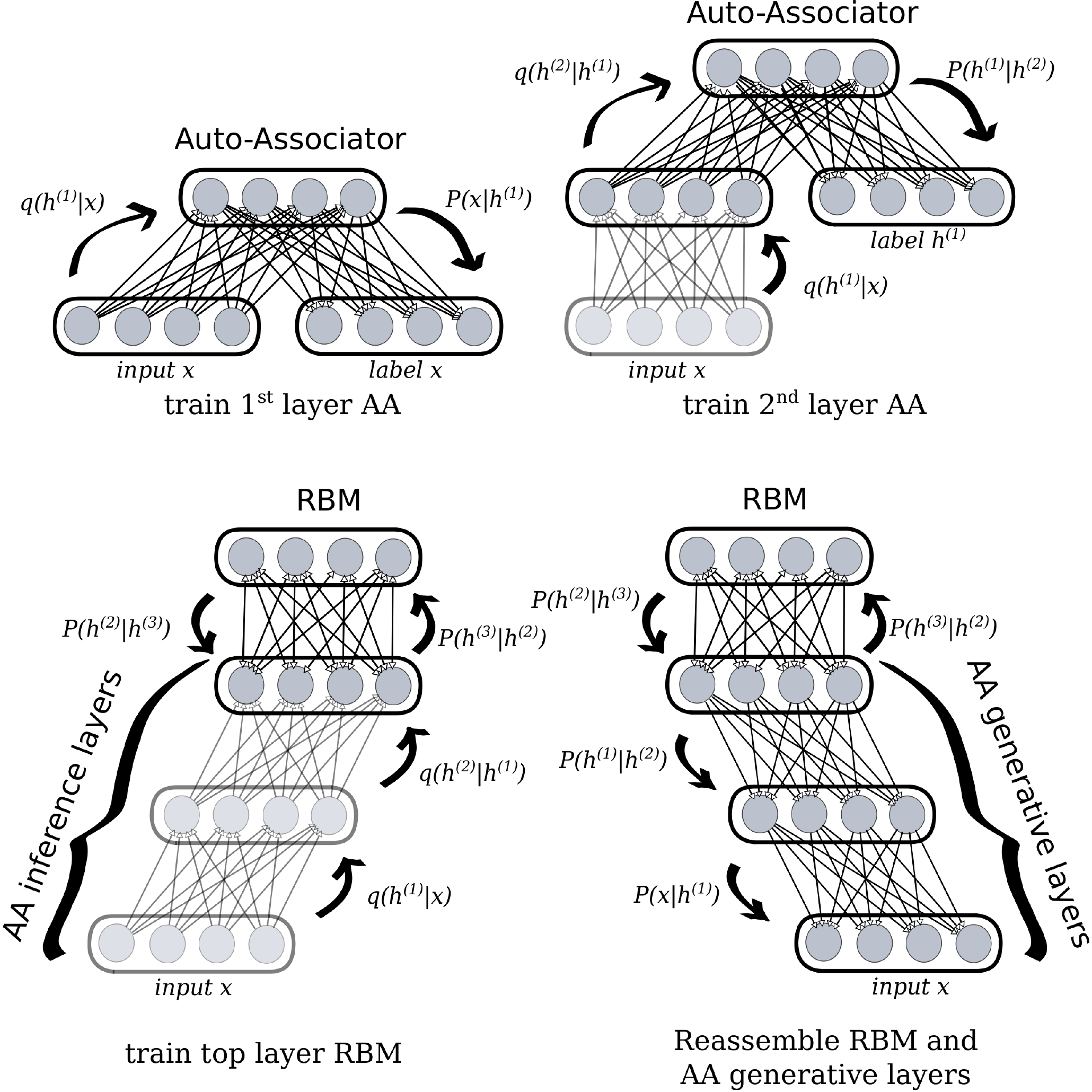}
\par\end{centering}

\caption{\label{fig:dgaa-training}Deep generative auto-encoder training scheme. }
\end{figure}

1000 deep generative auto-encoders
were trained
using random search on the hyper-parameters. Because deep generative
auto-encoders use an
RBM as the top layer, they use the same hyper-parameters as stacked RBMs,
but also backpropagation (BP) learning rate, BP epochs, and ANN init $\sigma$ (i.e. the standard deviation of the gaussian used during
initialization).

\paragraph{Auto-Encoders with Rich Inference (AERIes).}

The model and training scheme for AERIes are represented
in Figure~\ref{fig:dgaa+-training}.  Just as for vanilla auto-encoders,
we use the backpropagation algorithm and cross-entropy loss to maximize
the auto-encoder version \eqref{eq:auto-associator-proba} of the BLM upper bound
on the training set. No
weights are tied, of course, as this does not make sense for an
auto-associator with  different models for
$P(\x|\h)$ and $q(\h|\x)$. The top RBM is trained
afterwards. Hyper-parameters are the same as above, with in addition the size of the new hidden layer $\h'$.

\begin{figure}
\begin{centering}
\includegraphics[width=0.8\columnwidth]{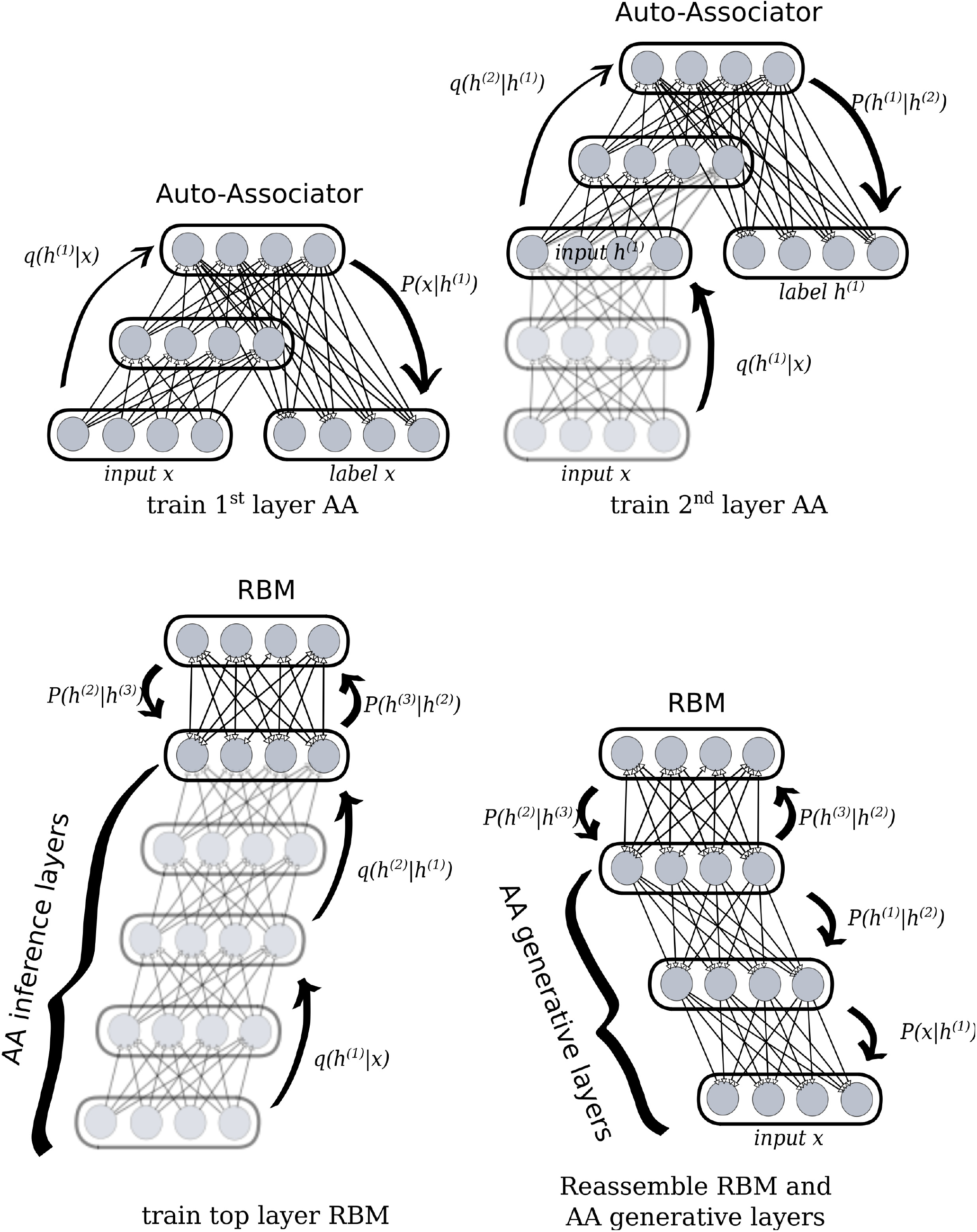}
\par\end{centering}

\caption{\label{fig:dgaa+-training}Deep generative modified auto-encoder
(AERI) training
scheme. }
\end{figure}

\subsubsection*{Results}

The results of the above comparisons on the \noun{Tea} and \noun{Cmnist} \emph{validation} datasets are given in Figures~\ref{fig:tea-cmp-rbm-aes} and \ref{fig:cmnist-cmp-rbm-aes}.
For better readability, the Pareto front%
\footnote{The Pareto front is composed of all models which are not subsumed
by other models according to the number of parameters and the expected
log-likelihood. A model is said to be subsumed by another if it has
strictly more parameters and a worse likelihood.%
}
for each model is given in
Figures~\ref{fig:tea-cmp-rbm-aes-pareto} and \ref{fig:cmnist-cmp-rbm-aes-pareto}.

\NDY{Performance on the validation dataset is better described as
``average''
rather than ``expected'', as we do not describe the validation dataset by a
probability distribution...}

\begin{figure}
\begin{centering}
\includegraphics[width=0.9\columnwidth]{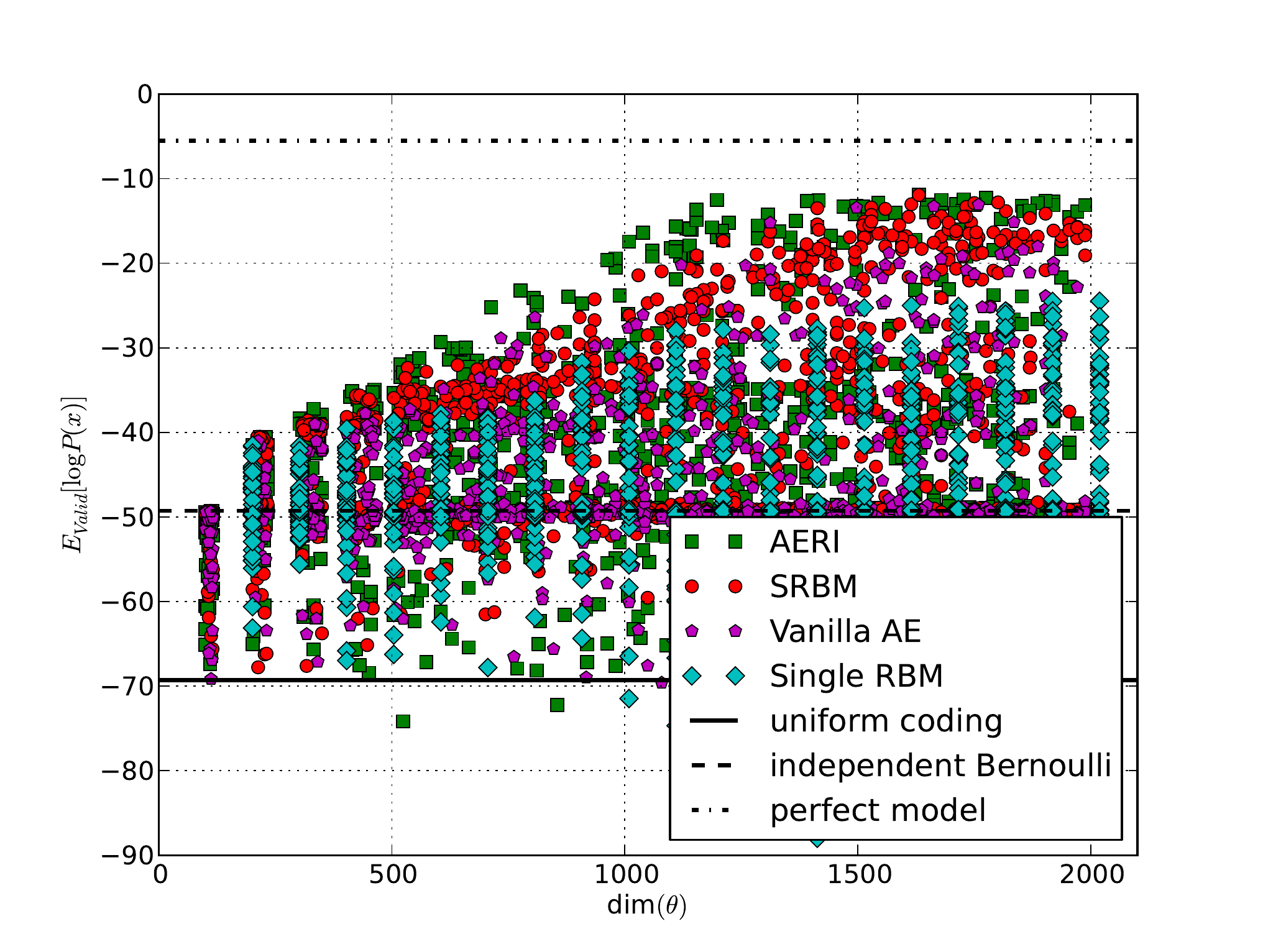}
\par\end{centering}

\caption{\label{fig:tea-cmp-rbm-aes}Comparison of the average validation
log-likelihood for SRBMs, vanilla AE, and AERIes on the \noun{Tea} dataset.}
\end{figure}

\begin{figure}
\begin{centering}
\includegraphics[width=0.9\columnwidth]{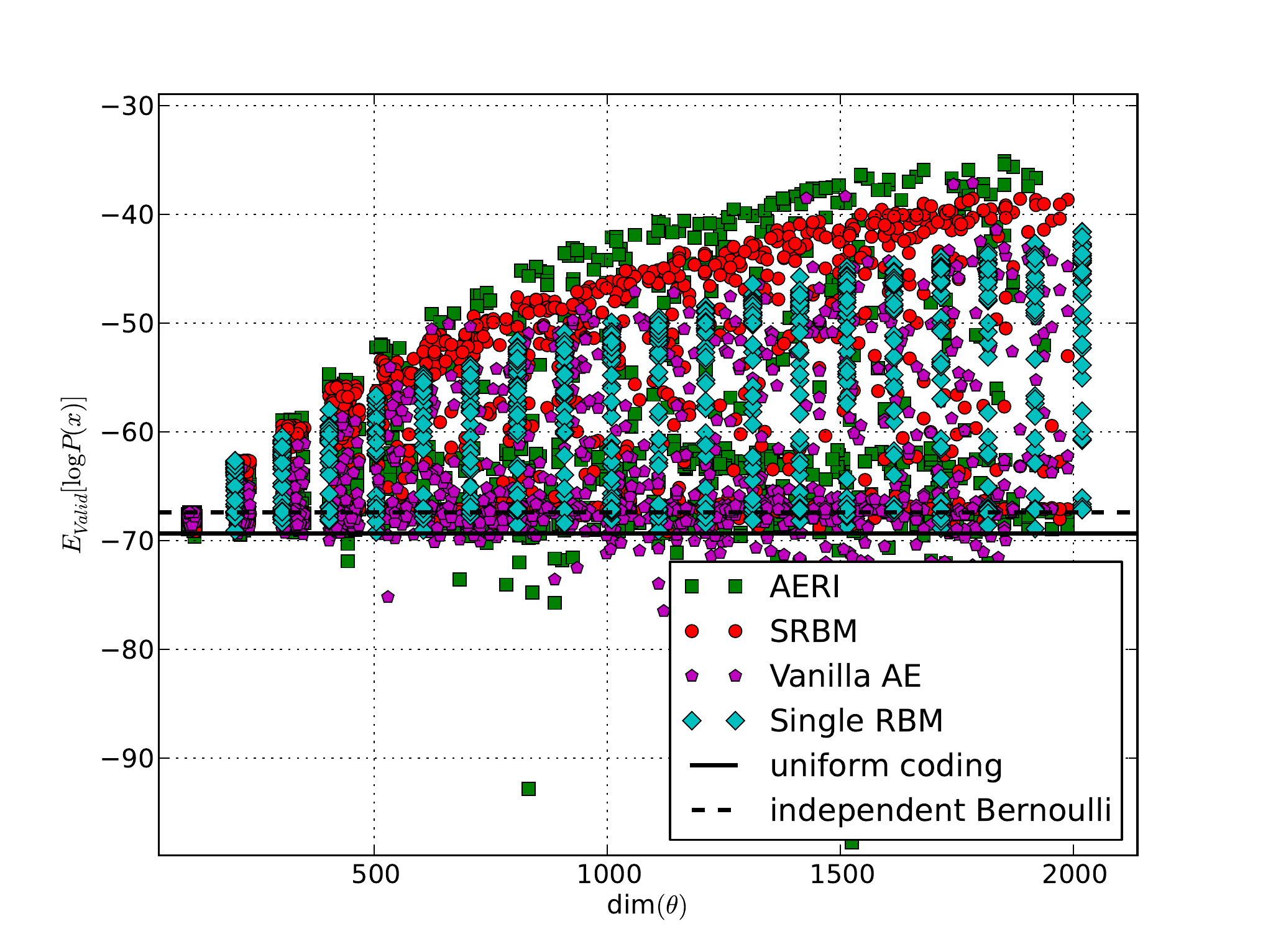}
\par\end{centering}

\caption{\label{fig:cmnist-cmp-rbm-aes}Comparison of the average validation
log-likelihood for SRBMs, vanilla AE, and AERIes on the \noun{Cmnist} dataset.}
\end{figure}

\begin{figure}
\begin{centering}
\includegraphics[width=0.9\columnwidth]{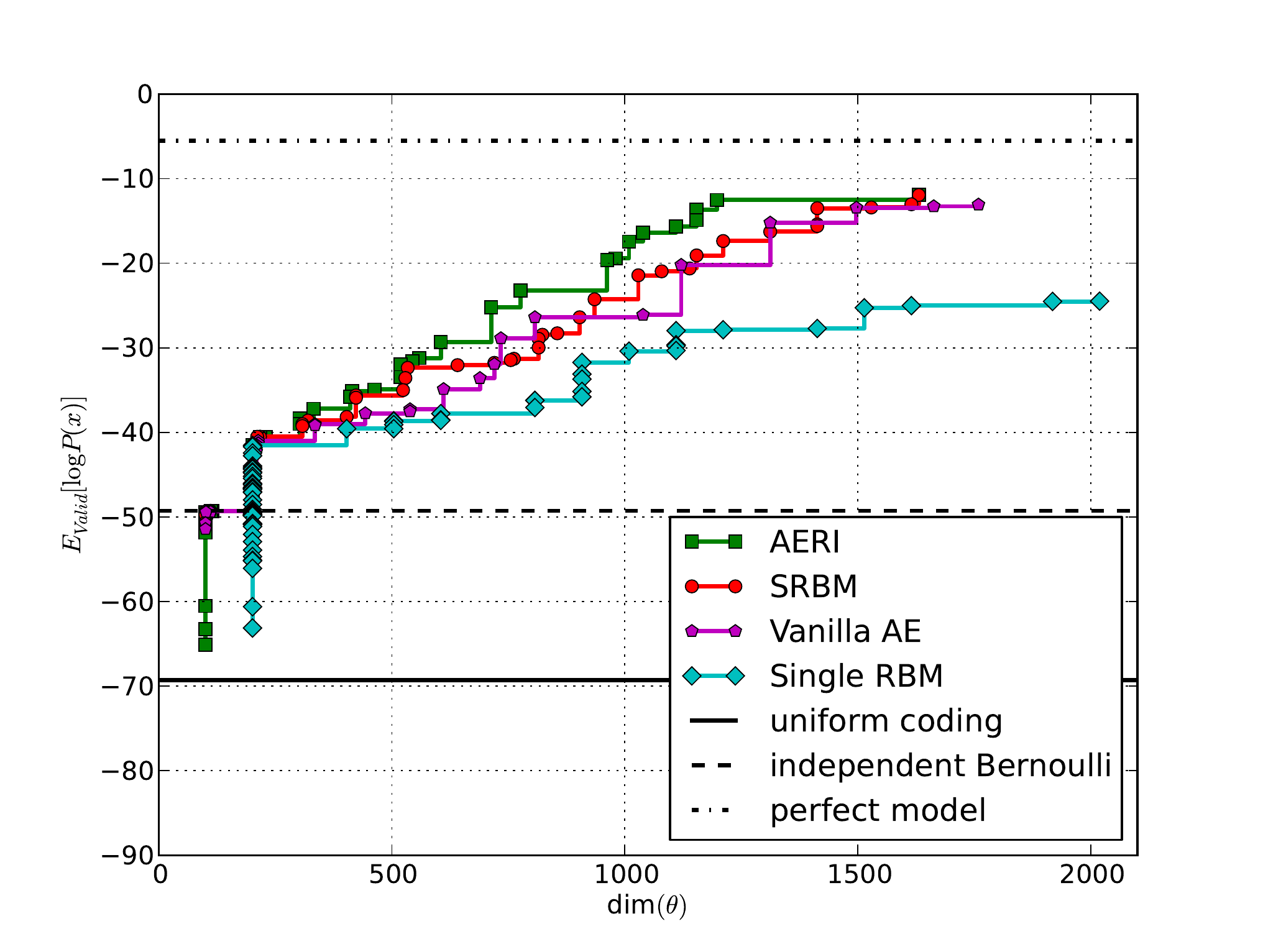}
\par\end{centering}

\caption{\label{fig:tea-cmp-rbm-aes-pareto}Pareto fronts for
the average validation log-likelihood and number of parameters for
SRBMs, deep generative auto-encoders, and modified deep generative auto-encoders
on the \noun{Tea} dataset.}
\end{figure}

\begin{figure}
\begin{centering}
\includegraphics[width=0.9\columnwidth]{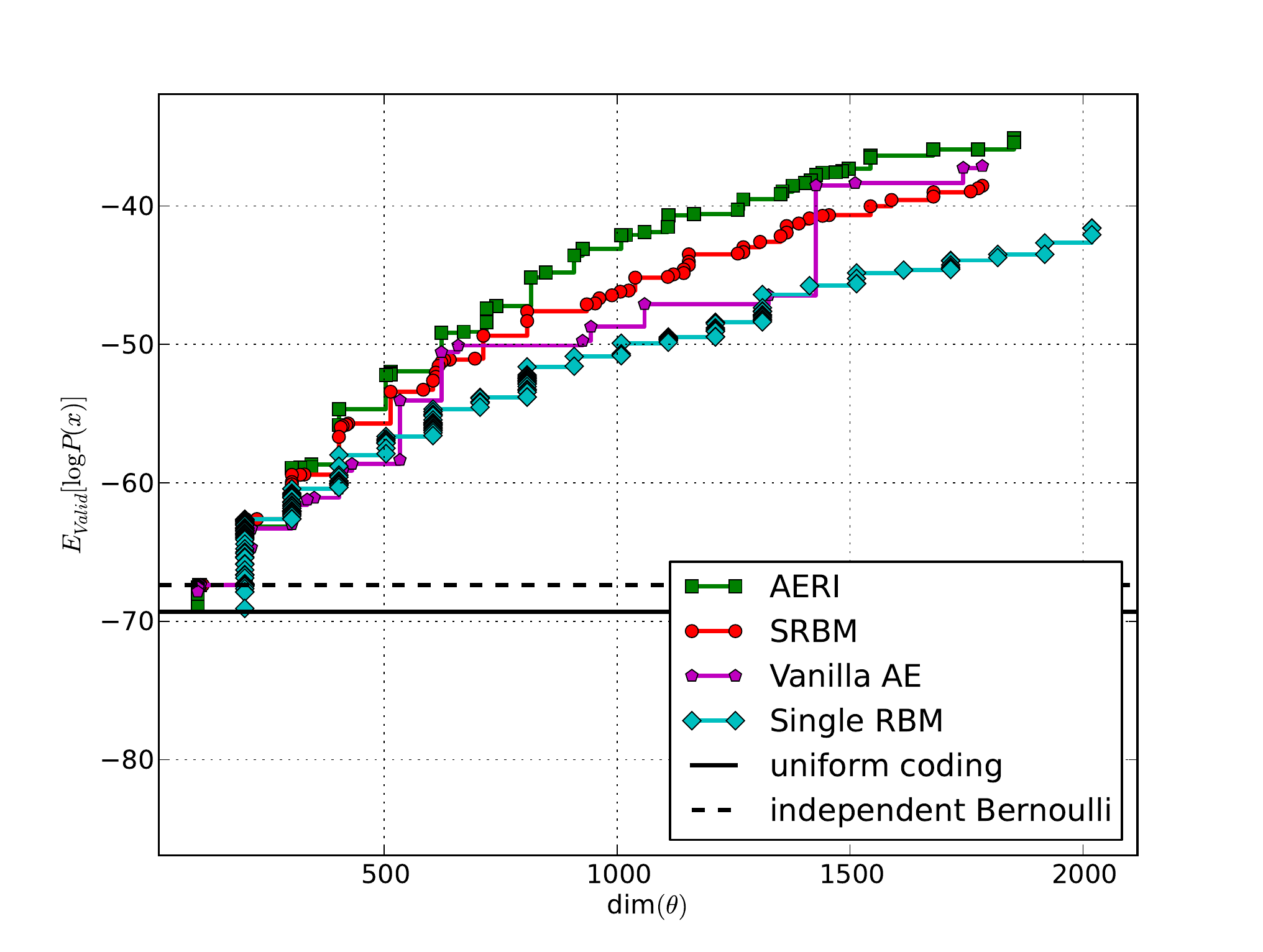}
\par\end{centering}
\caption{\label{fig:cmnist-cmp-rbm-aes-pareto}Pareto fronts
for the average validation log-likelihood and number of parameters
for SRBMs, deep generative auto-encoders, and modified deep generative
auto-encoders on the \noun{Cmnist} dataset.}
\end{figure}

As expected, all models perform better than the baseline independent
Bernoulli model but have a lower likelihood than the perfect
model\footnote{Note that some instances are outperformed by the uniform coding scheme,
which may seem surprising.
Because we are considering the average log-likelihood on a
validation set, if even one sample of the validation set happens to be given
a low probability by the model, the average log-likelihood will be
arbitrarily low. In fact, because of roundoff errors in the computation
of the log-likelihood, a few models have a measured performance of
$-\infty$. This does not affect the comparison of the models as it only
affects instances for which performance is already very low.}.
Also, SRBMs, vanilla AEs and AERIes perform better than a
single RBM, which can be seen as further evidence that the
\noun{Tea} and \noun{Cmnist} are deep datasets.

Among deep models, vanilla auto-encoders achieve the lowest
performance, but outperform single RBMs significantly, which
validates them not only as generative models but also as \emph{deep}
generative models. Compared to SRBMs, vanilla auto-encoders
achieve almost identical performance but the algorithm clearly
suffers from local optima: most instances perform poorly and only a
handful achieve performance comparable to that of SRBMs or AERIes.

As for the auto-encoders with rich inference (AERIes), they are able to outperform not only single RBMs
and vanilla auto-encoders, but also stacked RBMs, and do so consistently. This validates not only
the general deep learning procedure of
Section~\ref{sec:Layer-wise-deep-learning}, but arguably also the
understanding of auto-encoders in this framework.

The results confirm that a more universal model for $q$ can significantly
improve the performance of a model, as is clear from comparing the
vanilla and rich-inference auto-encoders. Let us insist that the rich-inference
auto-encoders and vanilla auto-encoders optimize over exactly the
\emph{same} set of generative models with the same structure, and thus
are facing exactly the same optimization problem~\eqref{eq:objective}.
Clearly the modified training procedure yields improved values of the
generative parameter $\theta$.

\subsection{Layer-Wise Evaluation of Deep Belief Networks}

As seen in section \ref{sec:Layer-wise-deep-learning}, the BLM upper
bound $ U_\D(\theta_I) $ is the least upper bound of the log-likelihood
of deep generative models using some given $\theta_{I}$ in the lower part
of the model. This raises the question of whether it is a good indicator
of the final performance of $\theta_I$.

In this setting, there are
a few approximations w.r.t.~\eqref{eq:bottomtraining} and
\eqref{eq:toptraining} that need to be discussed.
Another point is the intractability of the BLM upper bound for models
with many hidden variables, which leads us to propose and test an estimator in
Section~\ref{sec:blmapprox},
though the experiments considered here
were small enough not to need this unless otherwise specified.

We now look, in turn, at how the BLM upper bound can be applied to
log-likelihood estimation, and to hyper-parameter selection---which can
be considered part of the training procedure. We first discuss various
possible effects, before measuring them empirically.

\subsubsection{Approximations in the BLM upper bound}

Consider the maximization of~\eqref{eq:defU}. In practice, we do
not perform a specific maximization over $q$ to obtain the BLM as in
\eqref{eq:defU}, but rely on the training procedure of $\theta_I$ to maximize it. Thus
the $q$ resulting from a training procedure is generally not the
globally optimal $\hat{q}$ from Theorem~\ref{thm:main}. In the
experiments we of course use the
BLM upper bound with the value of $q$ resulting from the actual training.

\begin{defi}
For $\theta_I$ and $q$ resulting from the training of a deep generative model, let 
\begin{eqnarray}
\hat\U_{\D,q}(\theta_I)
\deq
\mathbb{E}_{\x\sim
P_{\mathcal{D}}}\left[
\log \sum_\h P_{\theta_I}(\x|\h)  q_\D(\h)\right]
\label{eq:defhatU}
\end{eqnarray}
be the \emph{empirical BLM upper bound}.
\end{defi}

This definition makes no assumption about how $\theta_I$
and ${q}$ in the first layer have been trained, and can be applied to
any layer-wise training procedure, such as SRBMs.

Ideally, this quantity should give us an idea of the final performance of
the deep architecture when we use $\theta_I$ on the bottom layer.
But there are several discrepancies between these BLM estimates and final
performance.

A first question is the validity of the approximation~\eqref{eq:defhatU}. The BLM
upper bound $ \U_{\D}(\theta_I)$ is obtained by maximization over all
possible $q$ which is of course untractable. The learned inference distribution
${q}$ used in practice is only an approximation for two reasons:
first, because the model for $q$ may not cover all possible conditional
distributions $q(\h|\x)$, and, second,
because the training of $q$ can
be imperfect. In effect $\hat\U_{\D,{q}}(\theta_I)$ is only a
lower bound of the BLM upper bound : $\hat\U_{\D,{q}}(\theta_I)
\leq \U_{\D}(\theta_I)$.

Second, we can question the relationship between the (un-approximated)
BLM upper bound \eqref{eq:defU} and the final log-likelihood of the model.
The BLM bound is optimistic, and tight only when the upper part of the model manages to
reproduce the BLM perfectly.
We should check how tight it
is in practical applications when the upper layer model for $P(\mathbf{h})$
is imperfect. 

In addition, as for any estimate from a training set, final performance
on validation and test sets might be different. Performance of a
model on the validation set is generally lower than on the
training set. But on the other hand, in our situation there is a specific
\emph{regularizing effect of
imperfect training of the top layers}. Indeed
the BLM refers to a universal optimization over all
possible distributions on $\h$ and might therefore overfit more, hugging the training set
too closely. Thus if we did manage to reproduce the BLM perfectly on the
training set, it could well decrease performance on the validation set. On
the other hand,
training the top layers to approximate the BLM within a model class
$P_{\theta_J}$  introduces further regularization and
could well yield higher final
performance on the validation set than if the exact BLM distribution had been
used.

This latter regularization effect is relevant if we are to use the
BLM upper bound for hyper-parameter selection, a scenario in which
regularization is expected to play an important role.

We can therefore expect:
\begin{enumerate}
\item That the ideal BLM upper bound, being by definition optimistic, can be higher that
the final likelihood
when the model obtained for $P(\mathbf{h})$ is not perfect.
\item That the empirical bound obtained by using a given conditional
distribution $q$ will be lower than
the ideal BLM upper bound
either when $q$ belongs to a restricted class, or when $q$
is poorly trained.
\item That the ideal BLM upper bound on the training set may be either higher or lower than
actual performance on a validation set, because of the regularization
effect of imperfect top layer training.
\end{enumerate}

All in all, the relationship between the empirical BLM upper bound used in training,
and the final log-likelihood on real data, results from several effects
going in both directions. This might affect whether the empirical BLM
upper bound can really be used to predict the future performance of a
given bottom layer setting.

\subsubsection{A method for single-layer evaluation
and
layer-wise hyper-parameter selection}

In the context of deep architectures, hyper-parameter selection is a difficult problem.
It can involve as much as 50 hyper-parameters, some of them only relevant
conditionally to others \cite{Bergstra2011,Bergstra2012}. To make matters worse, evaluating the generative performance of such models is often intractable.
The evaluation is usually done w.r.t.\ classification performance as in
\cite{Larochelle2009,Bergstra2011,Bergstra2012}, sometimes complemented by a visual comparison of samples from the model \cite{Hinton2006,Salakhutdinov2009}. In some rare instances, a variational approximation of the log-likelihood is considered \cite{Salakhutdinov2008,Salakhutdinov2009}.

These methods only consider evaluating the models after all layers have
been fully trained. However, since the training of deep architectures is
done in a layer-wise fashion, with some criterion greedily maximized at
each step, it would seem reasonable to perform a layer-wise evaluation.
This would have the advantage of reducing the size of the
hyper-parameter search space
from exponential to linear in the number of
layers.

We propose to first evaluate the performance of the lower layer,
after it has been trained, according
to the BLM upper bound \eqref{eq:defhatU} (or an approximation thereof) on the validation
dataset $\D_{\mathrm{valid}}$\NDY{Otherwise larger hidden layers learning the identity will
always be the winners.}\NDL{ True...}. The measure of performance obtained can then
be used as part of a larger hyper-parameter selection algorithm such as
\cite{Bergstra2012,Bergstra2011}.  This results in further optimization
of \eqref{eq:bottomtraining} over the hyper-parameter space and is
therefore justified by Theorem~\ref{thm:main}.

Evaluating the top layer is less problematic: by definition, the top layer is always a ``shallow'' model for which the true likelihood becomes more easily tractable. For instance, although
RBMs are well known to have an intractable partition function which prevents their evaluation, several methods
are able to compute close approximations to the true likelihood (such as Annealed Importance Sampling
\cite{Neal1998,Salakhutdinov2008}). The dataset to be evaluated with this procedure will have to be a sample of
$\sum_\x q(\h|\x)P_\D(\x)$.

In summary, the evaluation of a two-layer generative model can be done in a layer-wise manner:

\begin{enumerate}
\item Perform hyper-parameter selection on the lower layer using
$\hat\U_{\theta_I}(\D)$ as a performance measure (preferably on a
validation rather than training dataset, see below), and keep only the best
possible lower layers according to this criterion.
\item Perform hyper-parameter selection on the upper layer by evaluating
the true likelihood of validation data samples transformed by the inference
distribution, under the model of the top layer\footnote{This could lead to a stopping
criterion when training a model with arbitrarily many layers: for the upper
layer, compare the likelihood of the best upper-model with the
BLM of the best possible next layer. If the BLM of the next
layer is not significatively higher than the likelihood of the
upper-model, then we do not add another layer as it would not
help to achieve better performance.
}.
\end{enumerate}

Hyper-parameter selection was not used in our experiments,
where we simply used hyper-parameter random search. (This has allowed, in
particular, to check the robustness of the models, as AERIes have been
found to perform better than vanilla AEs on many more instances over
hyper-parameter space.)

As mentioned earlier, in the context of representation learning the top
layer is irrelevant because the objective is not to train a generative
model but to get a better representation of the data. With the assumption
that good latent variables make good representations, this suggests that
the BLM upper bound can be used directly to select the best possible
lower layers.

\subsubsection{Testing the BLM and its approximations}

We now present a series of tests to check whether the selection of lower
layers with higher values of the BLM actually results in higher
log-likelihood for the final deep generative models, and to assess the
quantitative importance of each of the BLM approximations discussed
earlier.

For each training algorithm (SRBMs, RBMs, AEs, AERIes), the comparison is
done using 1000 models trained with hyper-parameters selected through
random search as before. The empirical BLM upper bound is computed using
\eqref{eq:defhatU} above.

\paragraph{Training BLM upper bound vs training log-likelihood.}

We first compare the value of the empirical BLM upper bound
$\hat\U_{\theta_I}(\D_\mathrm{train})$ over the training set, with the actual log-likelihood of the
trained model on the training set. This is an evaluation of how
optimistic the BLM is for a given dataset, by checking how closely the
training of the upper layers manages to match the target BLM distribution
on $\h$. This is also the occasion to check the effect of
using the ${q}(\h|\x)$ resulting from actual learning,
instead of the best $q$ in all possible conditional distributions.

In addition, as discussed below, this comparison can be used as a
criterion to determine whether more layers should be added to the model.

The results are given in Figures~\ref{fig:tea-training-upper-bound}
and~\ref{fig:cmnist-training-upper-bound} for SRBMs, and~\ref{fig:tea-ae3-training-upper-bound}
and~\ref{fig:cmnist-ae3-training-upper-bound} for AERIes.
We see that the empirical BLM upper bound
\eqref{eq:defhatU} is a good predictor of the future
log-likelihood of the full model on the \emph{training} set. This shows that the approximations w.r.t.\ the
optimality of the top layer and the universality of $q$ can be dealt with in practice.

For AERIes, a few models with low performance have a poor estimation of
the BLM upper bound (estimated to be lower than the actual likelihood),
presumably because of a bad approximation in the learning of $q$. This
will not affect model selection procedures as it only concerns models
with very low performance, which are to be discarded.

\begin{figure}
\begin{centering}
\includegraphics[clip,width=0.8\columnwidth]{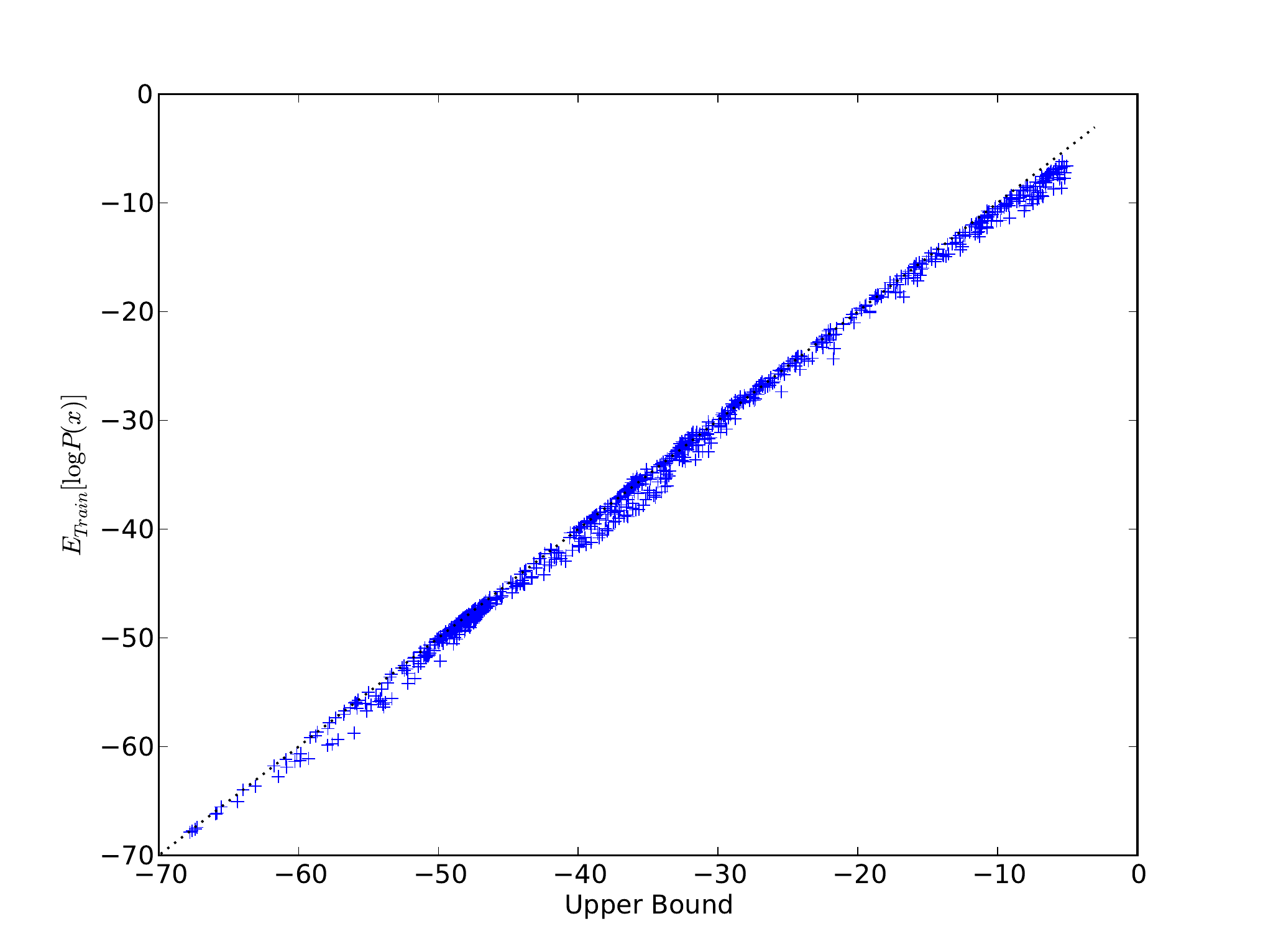}
\par\end{centering}

\caption{\label{fig:tea-training-upper-bound}Comparison of the BLM upper bound
on the first layer and the final log-likelihood on the \noun{Tea}
training dataset, for 1000 2-layer
SRBMs}
\end{figure}

\begin{figure}
\begin{centering}
\includegraphics[clip,width=0.8\columnwidth]{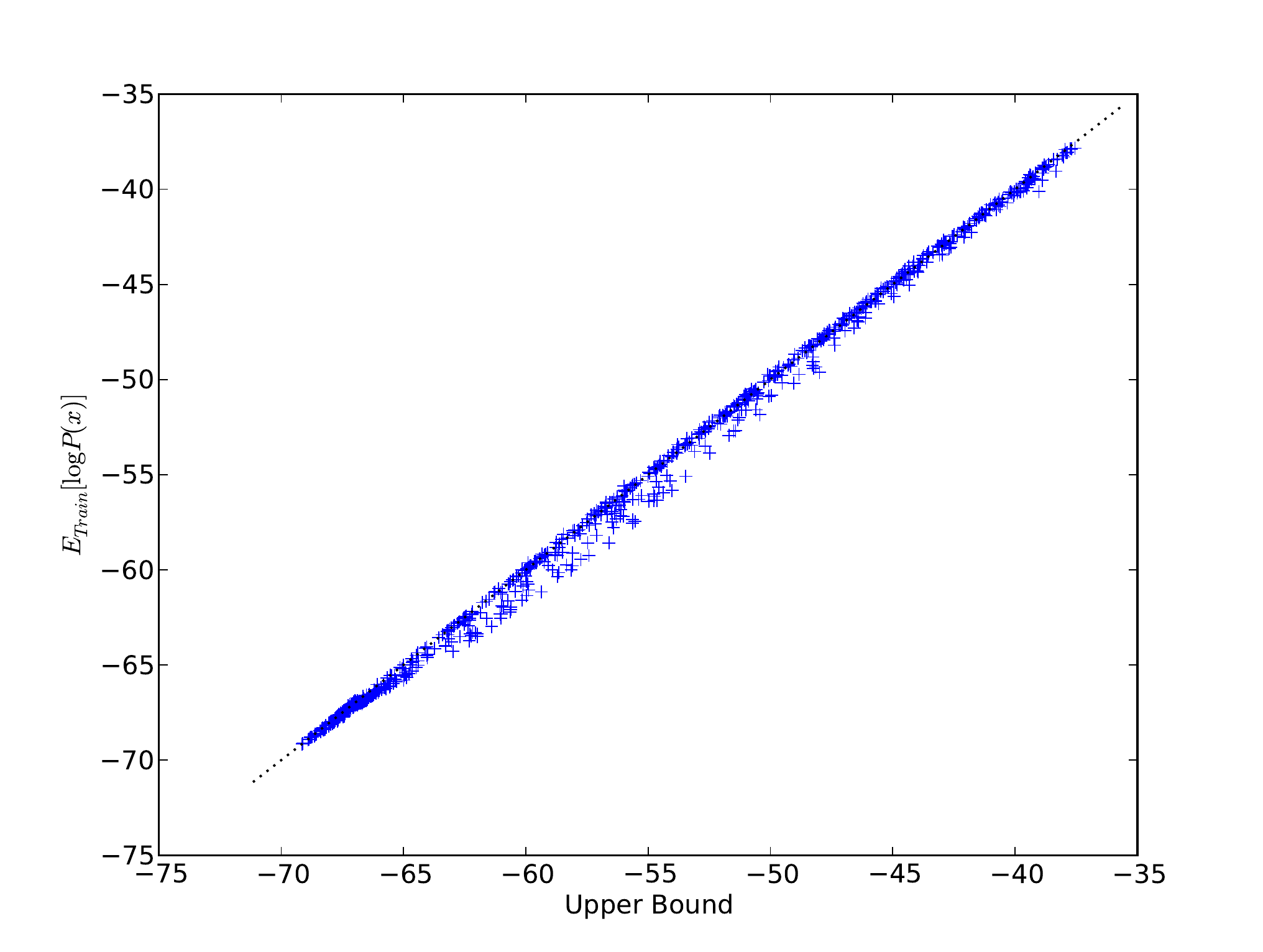}
\par\end{centering}

\caption{\label{fig:cmnist-training-upper-bound}Comparison of the BLM upper bound
on the first layer and the final log-likelihood on the \noun{Cmnist} training dataset, for 1000 2-layer
SRBMs}
\end{figure}

\begin{figure}
\begin{centering}
\includegraphics[clip,width=0.8\columnwidth]{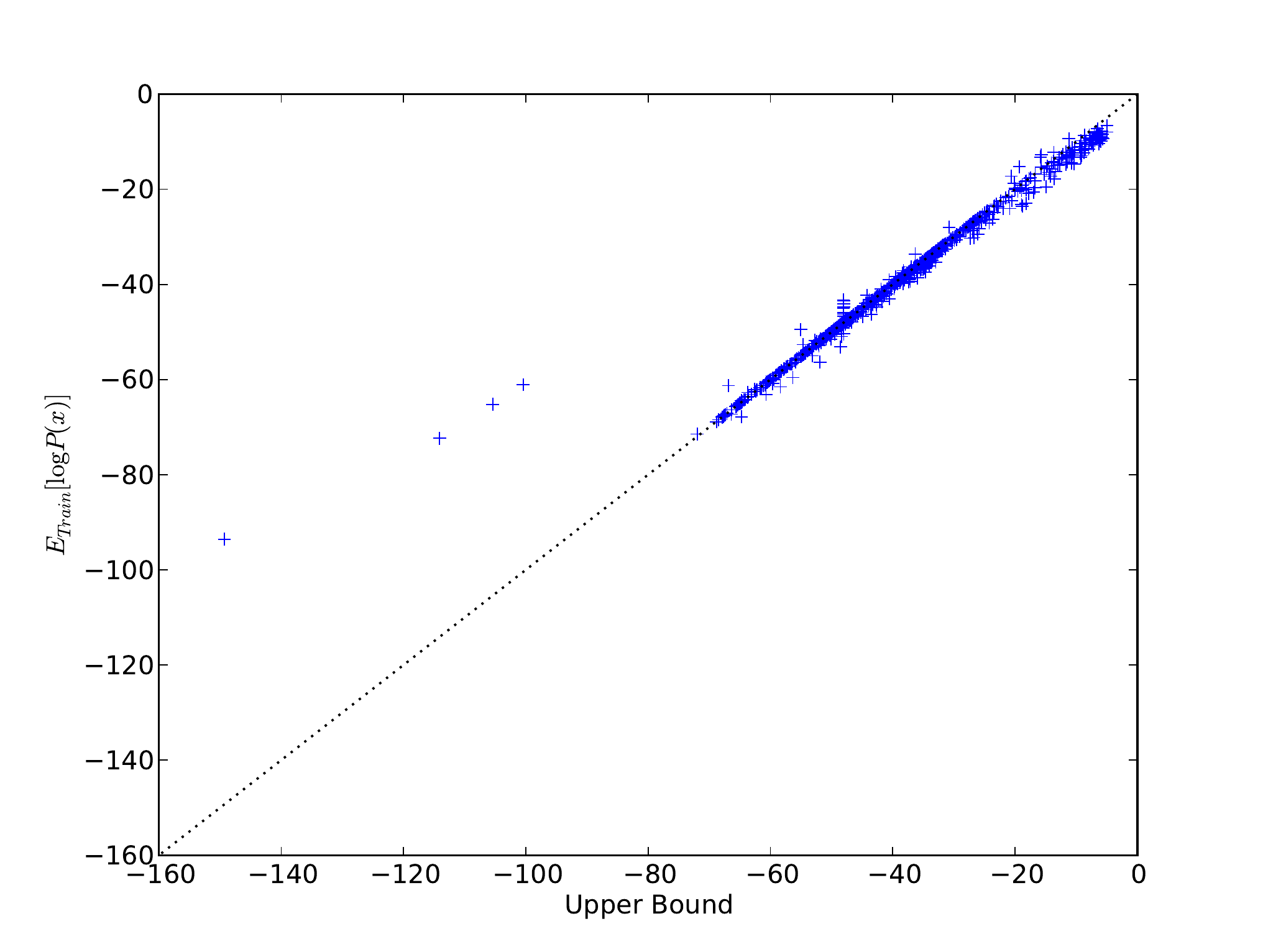}
\par\end{centering}

\caption{\label{fig:tea-ae3-training-upper-bound}Comparison of the BLM upper bound
on the first layer and the final log-likelihood on the \noun{Tea}
training dataset, for 1000 2-layer
AERIes}
\end{figure}

\begin{figure}
\begin{centering}
\includegraphics[clip,width=0.8\columnwidth]{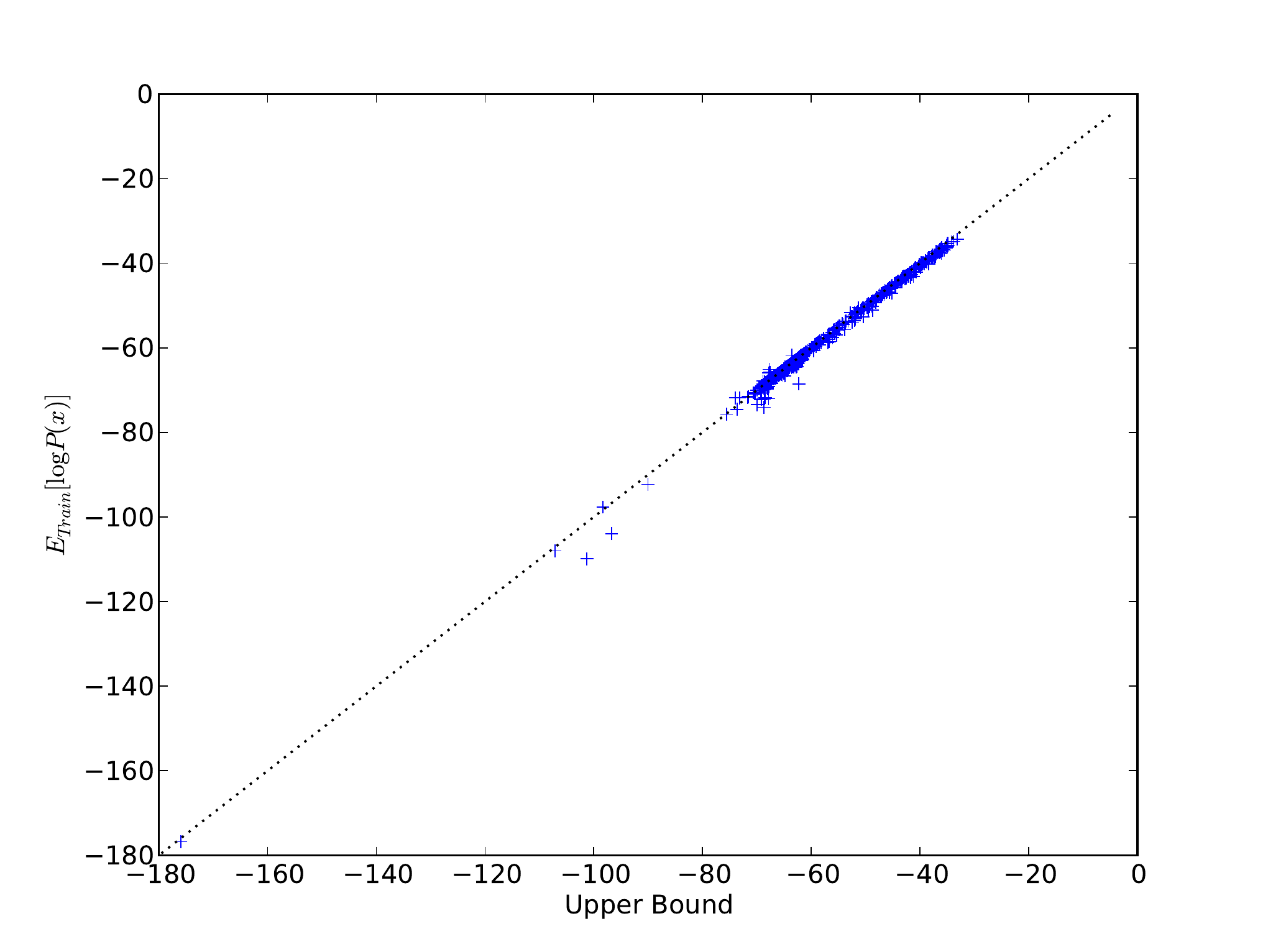}
\par\end{centering}

\caption{\label{fig:cmnist-ae3-training-upper-bound}Comparison of the BLM upper bound
on the first layer and the final log-likelihood on the \noun{Cmnist}
training dataset, for 1000 2-layer
AERIes}
\end{figure}

If the top part of the model were not powerful enough (e.g., if the
network is not deep enough), the BLM upper bound would be too
optimistic and thus significantly higher than the final log-likelihood of the
model. To further test this intuition we now compare the BLM upper bound of the
bottom layer with the log-likelihood obtained by a shallow architecture
with \emph{only one layer}; the difference would give an indication of
how much could be gained by adding top layers.
Figures~\ref{fig:tea-onerbm-training-upper-bound} and~\ref{fig:cmnist-onerbm-training-upper-bound}
compare the expected
log-likelihood\footnote{The log-likelihood reported in this specific experiment is in fact obtained with Annealed Importance Sampling (AIS).} of the training set under the 1000 RBMs previously trained with
the BLM upper bound\footnote{The BLM upper bound value given in this
particular experiment is in fact a close approximation (see Section~\ref{sec:blmapprox}).}
for a generative model using this RBM as first layer.
The results contrast with the previous
ones and confirm that final performance is below the BLM upper bound when
the model does not
have enough layers.

\begin{figure}
\begin{centering}
\includegraphics[clip,width=0.8\columnwidth]{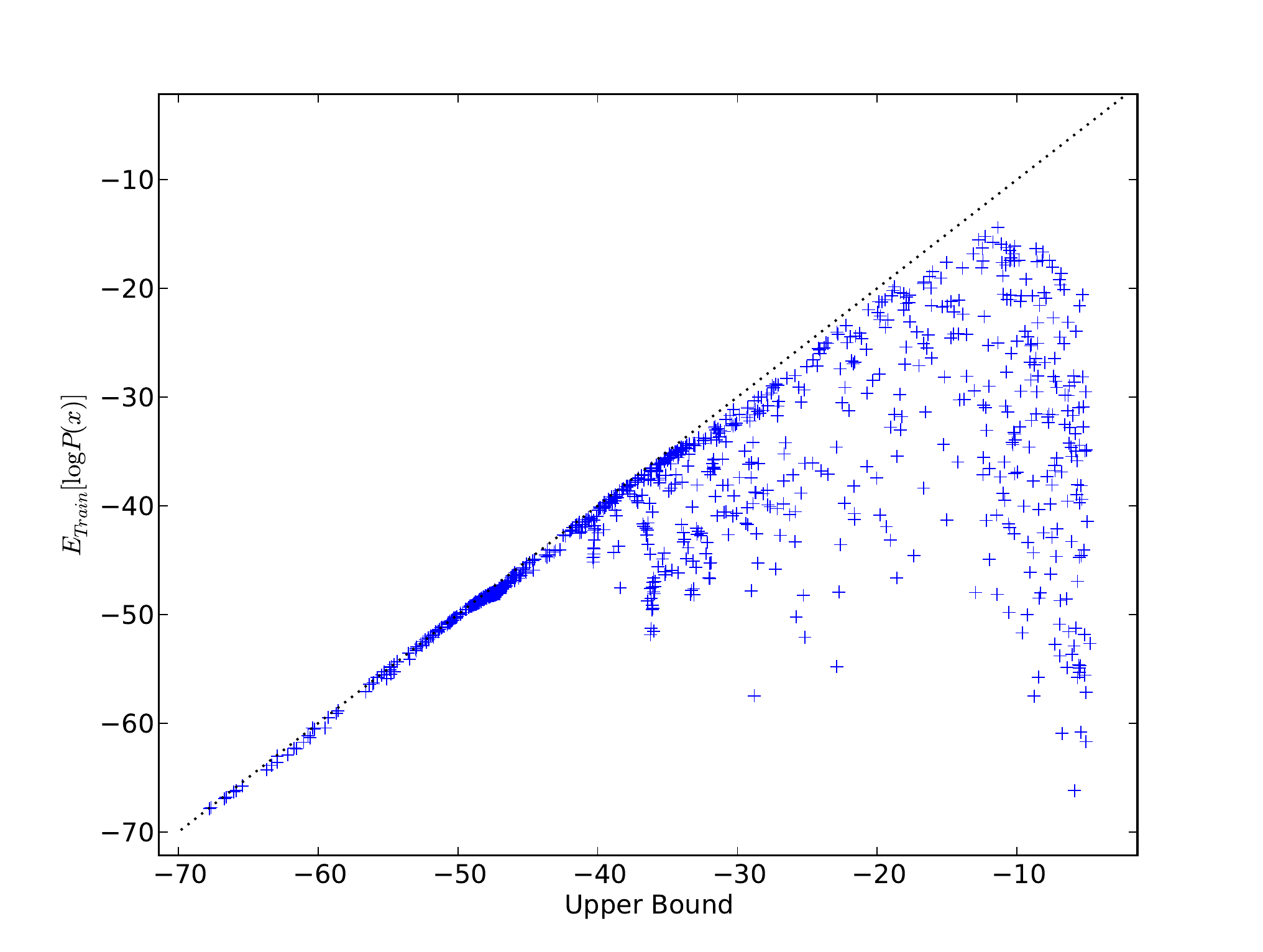}
\par\end{centering}

\caption{\label{fig:tea-onerbm-training-upper-bound}
BLM on a too shallow model:
comparison of the BLM
upper bound and the AIS log-likelihood of an RBM on the \noun{Tea}
training dataset}
\end{figure}

\begin{figure}
\begin{centering}
\includegraphics[clip,width=0.8\columnwidth]{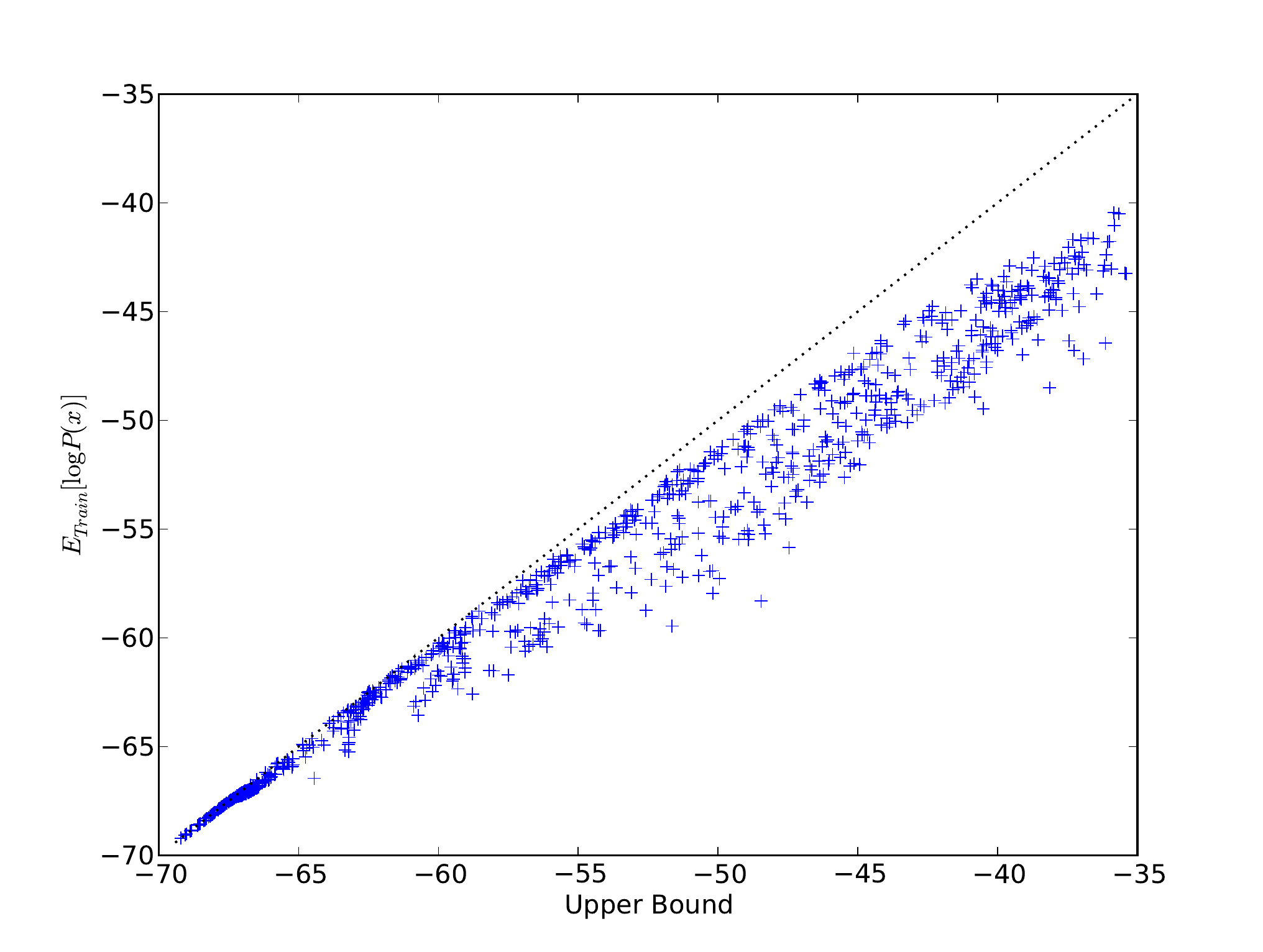}
\par\end{centering}

\caption{\label{fig:cmnist-onerbm-training-upper-bound}BLM on a too
shallow model: Comparison of the BLM
upper bound and the AIS log-likelihood of an RBM on the \noun{Cmnist}
training dataset}
\end{figure}

The alignment in Figures~\ref{fig:tea-training-upper-bound}
and~\ref{fig:cmnist-training-upper-bound} can therefore be seen as a confirmation that the \noun{Tea} and \noun{Cmnist} datasets would not benefit from a third layer.

Thus, the BLM upper bound could be used as a test for the opportunity of
adding layers to a model.

\paragraph{Training BLM upper bound vs validation log-likelihood.}

We now compare the training BLM upper bound with the log-likelihood on a
\emph{validation} set distinct from the training set:
this tests whether the BLM obtained
during training is a good indication of the final performance of a bottom
layer parameter.

\begin{figure}
\begin{centering}
\includegraphics[clip,width=0.8\columnwidth]{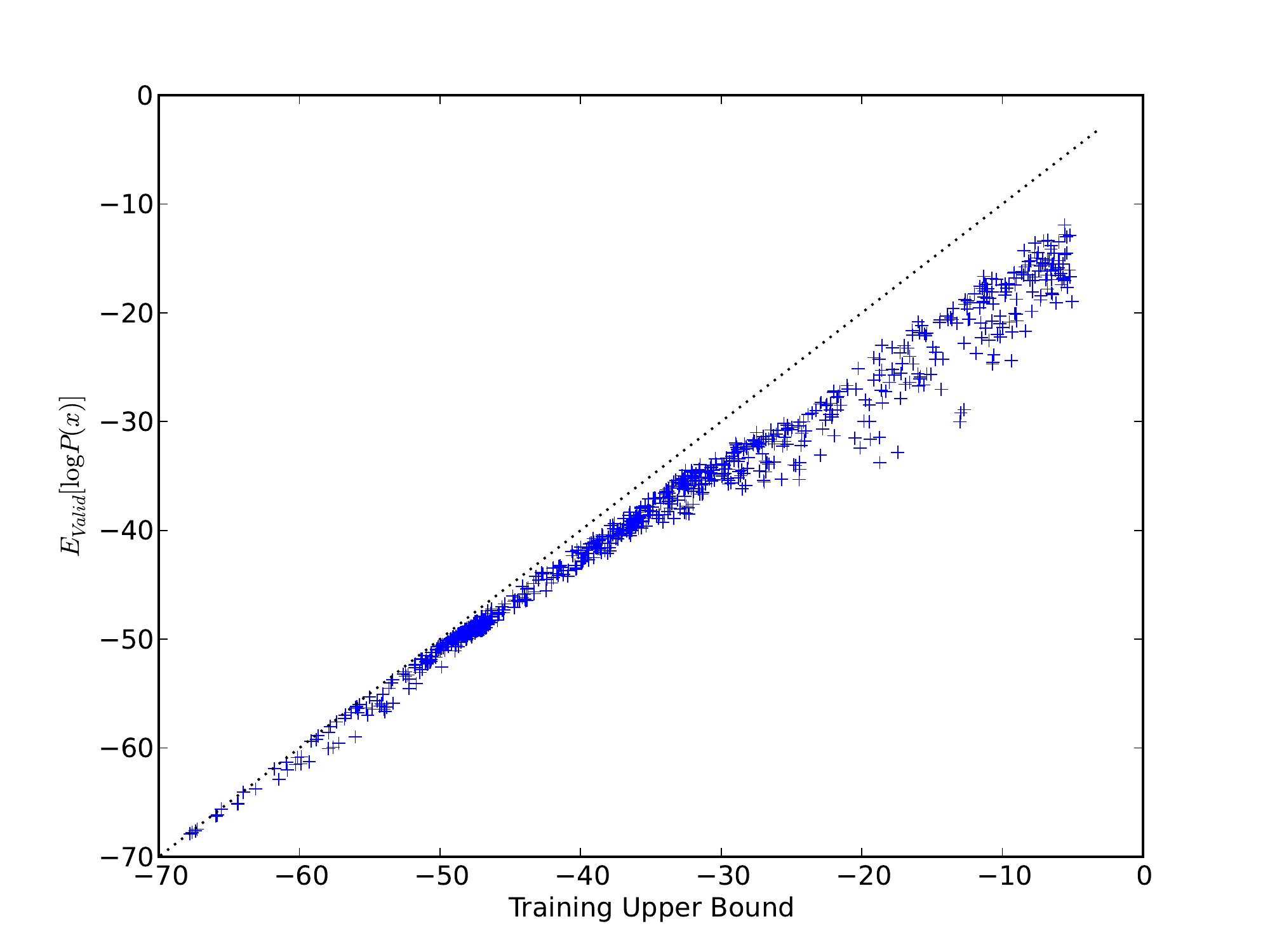}
\par\end{centering}

\caption{\label{fig:tea-rbm-training-upper-bound-vs-validationll}
Training BLM upper bound vs validation log-likelihood on the \noun{Tea}
training dataset}
\end{figure}

\begin{figure}
\begin{centering}
\includegraphics[clip,width=0.8\columnwidth]{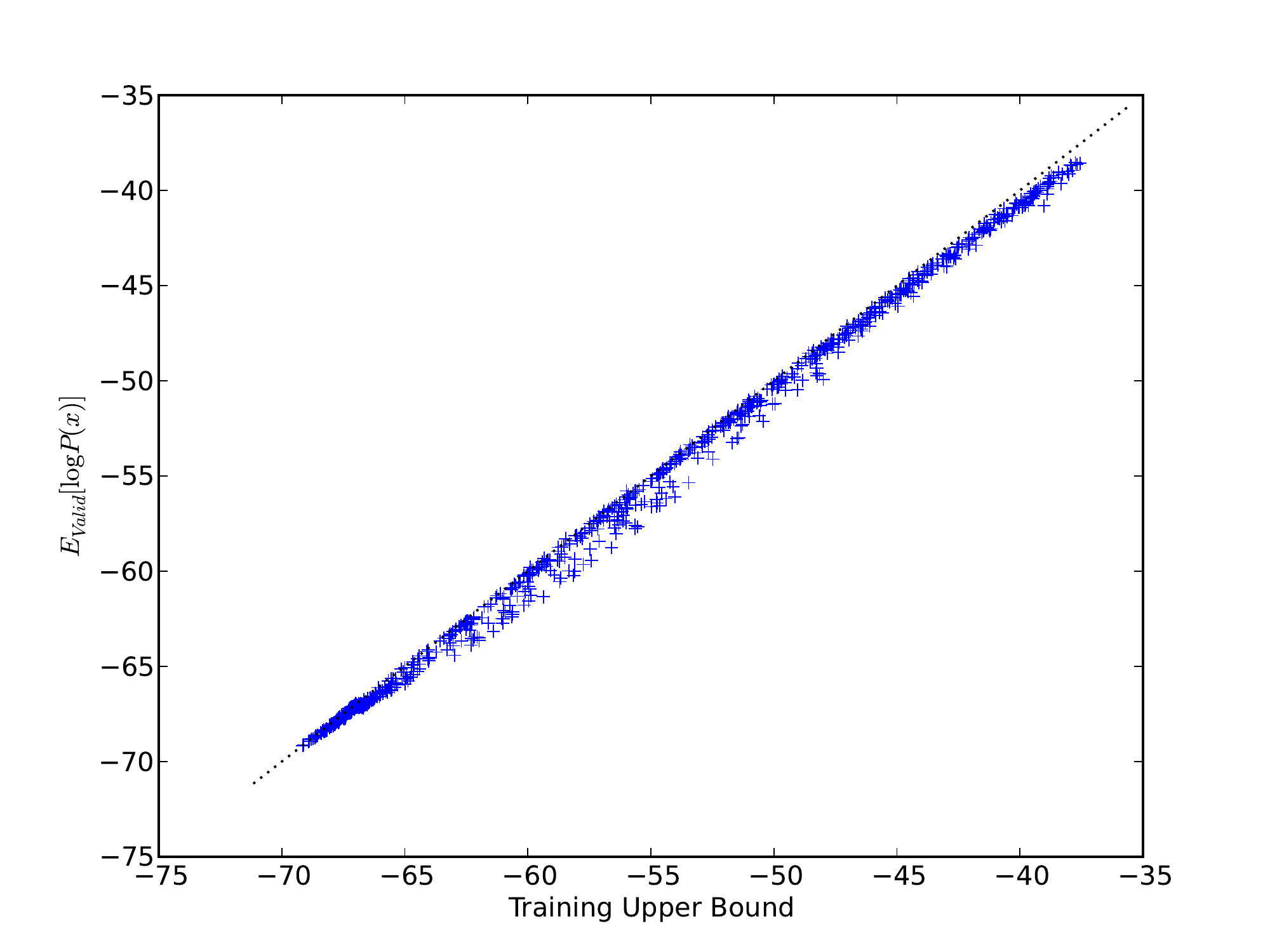}
\par\end{centering}

\caption{\label{fig:cmnist-rbm-training-upper-bound-vs-validationll}
Training BLM upper bound vs validation log-likelihood on the \noun{Cmnist}
training dataset}
\end{figure}

As discussed earlier, because the BLM makes an
assumption where there is no regularization, using the training BLM upper
bound to predict performance on a validation set could be too optimistic:
therefore we expect the validation log-likelihood to be somewhat lower
than the training BLM upper bound. (Although, paradoxically, this
can be somewhat counterbalanced by imperfect training of the upper
layers, as mentioned above.)

The results are reported in
Figures~\ref{fig:tea-rbm-training-upper-bound-vs-validationll}
and~\ref{fig:cmnist-rbm-training-upper-bound-vs-validationll} and confirm
that the training BLM is an upper bound of the validation log-likelihood.
As for regularization, we can see that on the \noun{Cmnist}
dataset where there are $4000$ samples, generalization is not very
difficult: the optimal $P(\h)$ for the
training set used by the BLM is in fact almost optimal for the
validation set too. On the
\noun{Tea} dataset, the picture is somewhat different: there is a
gap between the training upper-bound and the validation
log-likelihood. This can be attributed to the increased importance of
regularization on this dataset in which the training set contains only
$81$ samples.

Although the training BLM upper bound can therefore not be considered a good
predictor of the validation log-likelihood, it is still a monotonous
function of the validation log-likelihood: as such it can still be used
for comparing parameter settings and for
hyper-parameter selection.

\paragraph{Feeding the validation dataset to the BLM.}

Predictivity of the BLM (e.g., for hyper-parameter selection) can be improved by
feeding the \emph{validation} rather than training set to the inference
distribution and the BLM.

In the cases above we examined the predictivity of the BLM obtained
during training, on final performance on a validation dataset. We have seen
that the training BLM is an imperfect predictor of
this performance, notably because of lack of regularization in the BLM
optimistic assumption, and because we use an inference distribution $q$ maximized over the training set.

Some of these effects can easily be 
predicted by feeding the \emph{validation} set to the BLM and the
inference part of the model during hyper-parameter selection, as follows.

We call \emph{validation BLM upper bound}\NDY{It's not really an upper
bound... OK, but to be replaced if we have a better idea in the future} the BLM upper bound obtained by using the
validation dataset instead of $\D$ in \eqref{eq:defhatU}. Note that the
values $ q$ and $ \theta_I$ are still those obtained from
training on the training dataset.
This parallels the validation step for
auto-encoders, in which, of course, reconstruction performance on a
validation dataset is done by feeding this same dataset to the network.

We now compare the validation BLM upper bound to the
log-likelihood of the validation dataset, to see if
it qualifies as a reasonable proxy.

The results are reported in Figures~\ref{fig:tea-validation-upper-bound}
and~\ref{fig:cmnist-validation-upper-bound}. As predicted, the validation
BLM upper bound is a better estimator of the validation log-likelihood
(compare Figures~\ref{fig:tea-rbm-training-upper-bound-vs-validationll}
and~\ref{fig:cmnist-rbm-training-upper-bound-vs-validationll}).

We
can see that several models have a validation log-likelihood higher than
the validation BLM upper bound, which might seem paradoxical. This is
simply because the validation BLM upper bound still uses the parameters
trained on the training set and thus is not formally an upper bound.

The better overall approximation of the validation log-likelihood seems
to indicate that performing hyper-parameter
selection with the \emph{validation} BLM upper bound can better account for
generalization and regularization.

\begin{figure}
\begin{centering}
\includegraphics[clip,width=0.8\columnwidth]{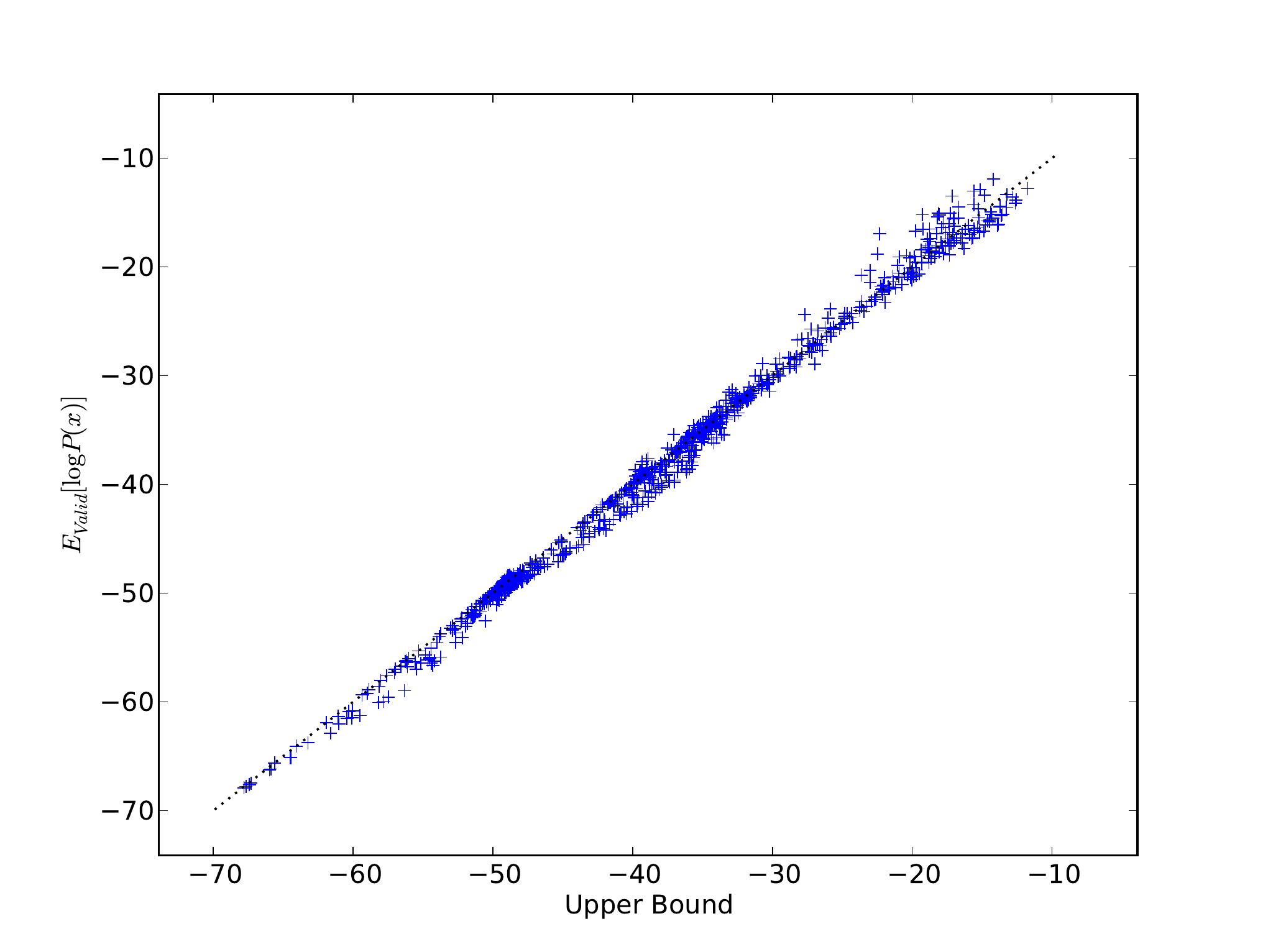}
\par\end{centering}

\caption{\label{fig:tea-validation-upper-bound}Validation upper bound vs log-likelihood on the \noun{Tea}
validation dataset}
\end{figure}

\begin{figure}
\begin{centering}
\includegraphics[clip,width=0.8\columnwidth]{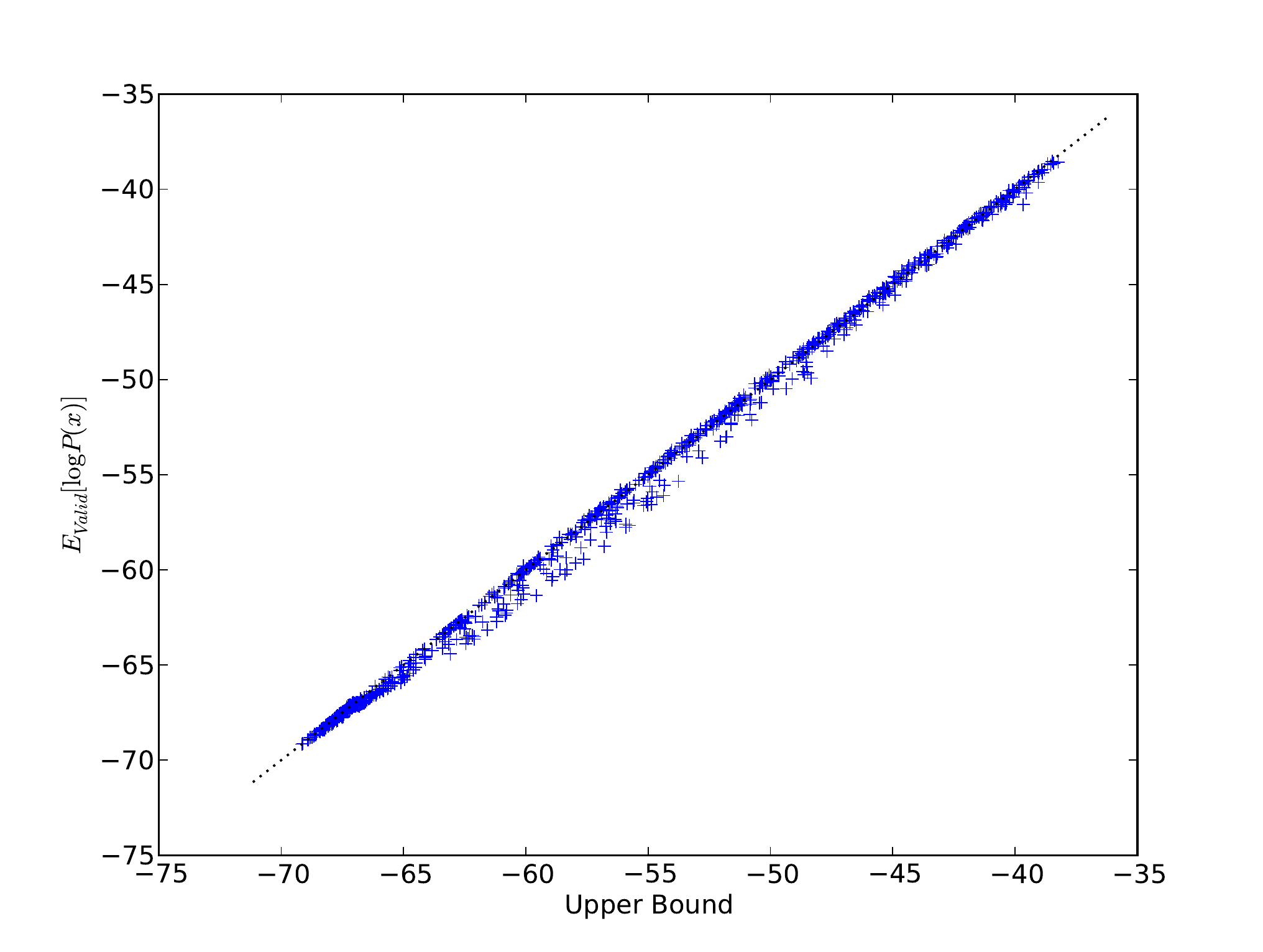}
\par\end{centering}

\caption{\label{fig:cmnist-validation-upper-bound}Validation upper bound vs log-likelihood on the \noun{Cmnist}
validation dataset}
\end{figure}

\subsubsection{Approximating the BLM for larger models}
\label{sec:blmapprox}

The experimental setting considered here was small enough to allow
for an exact computation of the various BLM bounds by summing over all
possible states of the hidden variable $\h$.
However the exact computation of the BLM upper bound using $\hat\U_{\D,{q}}(\theta_I)$ as in \eqref{eq:defhatU}
is not always possible because the number of terms in
this sum is exponential in the dimension of the hidden
layer $\mathbf{h}$.

In this situation we can use a sampling approach.
For each data sample $\tilde\x$, we can
take $K$ samples from each mode of the BLM distribution $q_\D$ (one mode for
each data sample $\mathbf{\tilde{x}}$) to obtain an approximation of the upper bound
in $\mathcal{O}(K \times N^2)$ where $N$ is the size of the
validation set. (Since the practitioner can choose the size of the
validation set which need not necessarily be as large as the training or
test sets, we do not consider the $N^2$ factor a major hurdle.)

\begin{defi}
For $\theta_I$ and ${q}$ resulting from the training of a deep generative model, let 
\begin{eqnarray}
\hat{\hat\U}_{\D,q}(\theta_I)
\deq
\mathbb{E}_{\x\sim
P_{\mathcal{D}}}\left[
\log \sum_\mathbf{\tilde{x}} \sum_{k=1}^{K} P_{\theta_I}(\x|\h) q(\h|\mathbf{\tilde{x}})P_\D(\mathbf{\tilde{x}})\right]
\label{eq:defhathatU}
\end{eqnarray}
where for each $\tilde \x$ and $k$, $\h$ is sampled from
$q(\h|\tilde \x)$\NDY{$\x$ is
not sampled, it is summed over the dataset}\NDL{True. nice catch.}.
\end{defi}

To assess the accuracy of this approximation, we take $K=1$ and compare
the values of $\hat{\hat\U}_{\D,{q}}(\theta_I)$ and
of $\hat\U_{\D,{q}}(\theta_I)$, on the \noun{Cmnist} and
\noun{Tea} training datasets.
The results are
reported in Figures~\ref{fig:tea-upper-bound-approx}
and~\ref{fig:cmnist-upper-bound-approx} for all three models (vanilla
AEs, AERIes, and SRBMs) superimposed,
showing good agreement.

\begin{figure}
\begin{centering}
\includegraphics[clip,width=0.8\columnwidth]{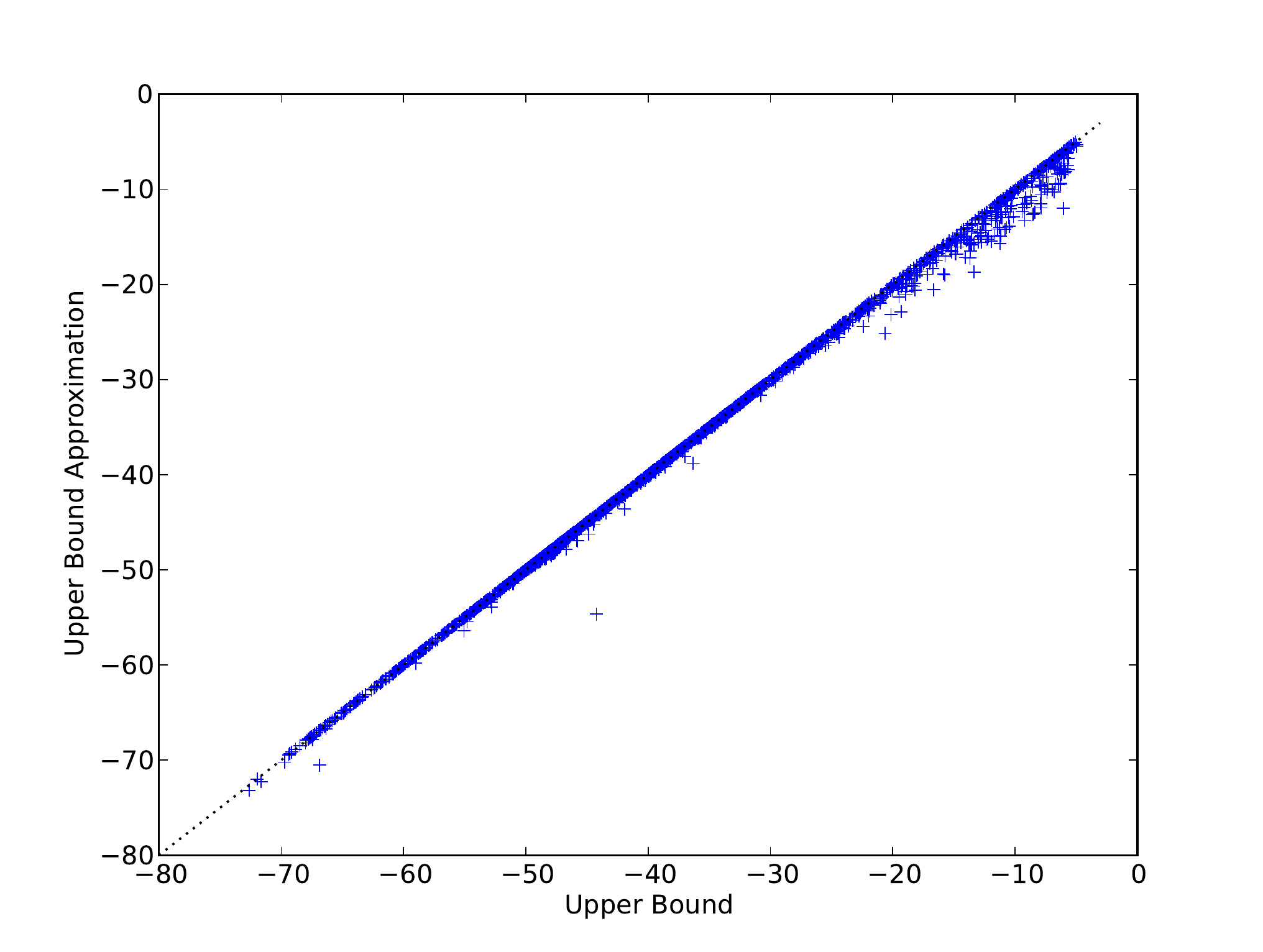}
\par\end{centering}

\caption{\label{fig:tea-upper-bound-approx}Approximation of the training BLM upper bound on the
\noun{Tea} training dataset}
\end{figure}

\begin{figure}
\begin{centering}
\includegraphics[clip,width=0.8\columnwidth]{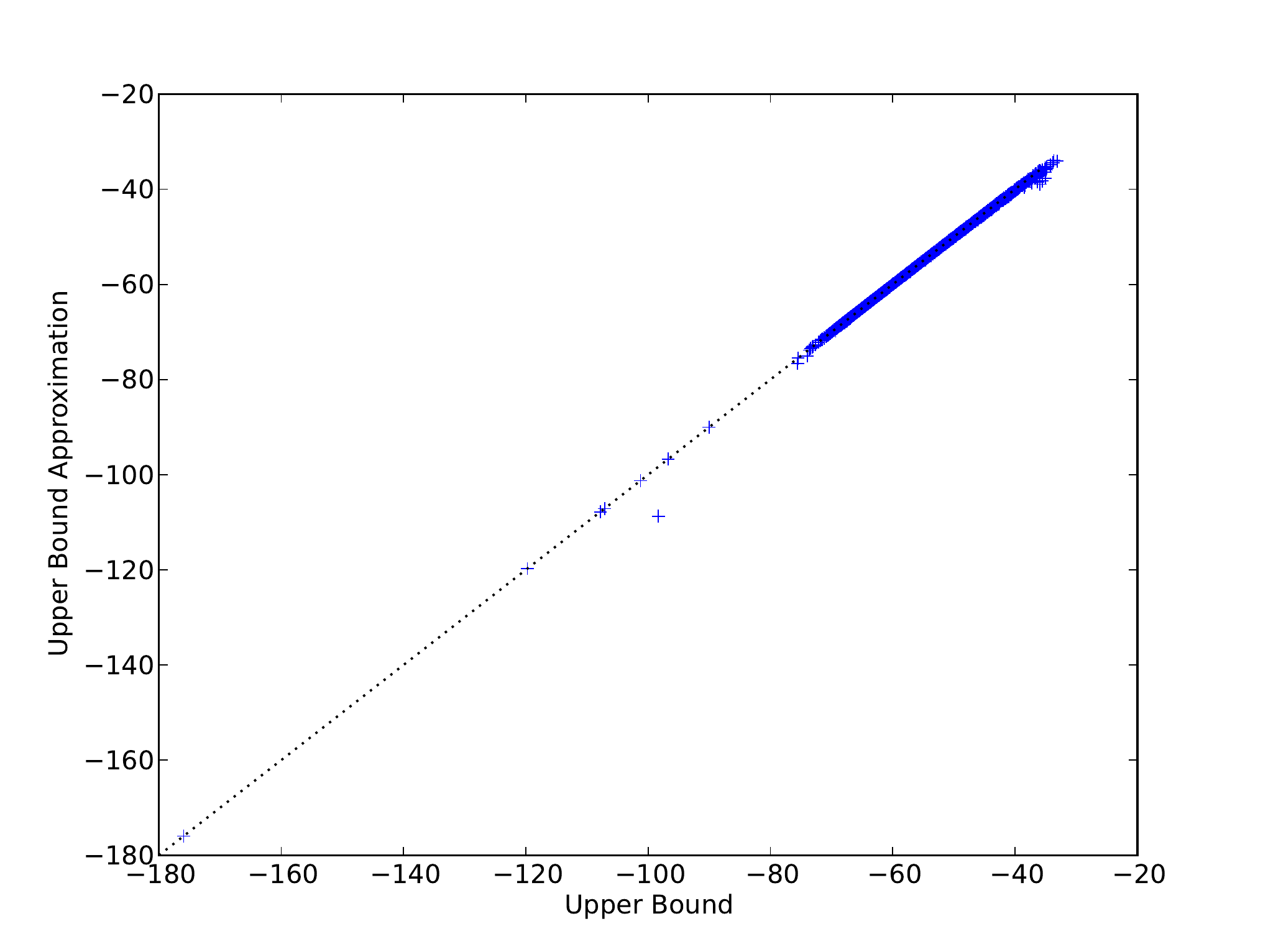}
\par\end{centering}

\caption{\label{fig:cmnist-upper-bound-approx}Approximation of the training BLM upper bound on the
the \noun{Cmnist} training dataset}
\end{figure}

\section*{Conclusions}

The new layer-wise approach we propose to train deep generative models is
based on an optimistic criterion, the BLM upper bound, in which we
suppose that learning will be successful for upper layers of the model.
Provided this optimism is justified a posteriori and a good enough model is found for
the upper layers, the resulting deep generative model is provably close
to optimal. When optimism is not justified, we provide an
explicit
bound on the loss of performance.

This provides a new justification for auto-encoder training and
fine-tuning, as the training of the lower part of a deep generative model,
optimized using a lower bound on the BLM.

This new framework for training deep generative models highlights the
importance of using richer models when performing inference, contrary to
current practice.  
This is consistent with the intuition that it is much harder to guess the
underlying structure by looking at the data, than to derive the data from
the hidden structure once it is known.

This possibility is tested empirically with auto-encoders with
rich inference (AERIes) which are
completed with a top-RBM to create deep generative models: these are then
able to outperform current state of the art (stacked RBMs) on two different deep datasets.

The BLM upper bound is also found to be a good layer-wise proxy to
evaluate the log-likelihood of future models for a given lower layer
setting, and as such is a relevant means of hyper-parameter selection.

This opens new avenues of research, for instance in the
design of algorithms to learn features in the lower part of the model, or
in the possibility to consider feature extraction as a partial deep
generative model in which the upper part of the model is temporarily
left unspecified.

\bibliographystyle{plain}
\bibliography{deeptrain}

\end{document}